\documentclass{article} 
\usepackage{iclr2026_conference,times}

\usepackage{amsmath,amsfonts,bm}

\def\eqref#1{equation~\ref{#1}}

\def\1{\bm{1}}

\DeclareMathAlphabet{\mathsfit}{\encodingdefault}{\sfdefault}{m}{sl}
\SetMathAlphabet{\mathsfit}{bold}{\encodingdefault}{\sfdefault}{bx}{n}

\usepackage{xcolor}         
\definecolor{iclrblue}{rgb}{0.21,0.49,0.74}

\usepackage[utf8]{inputenc} 
\usepackage[T1]{fontenc}    
\usepackage[pagebackref=true]{hyperref}
\hypersetup{hidelinks,
	colorlinks=true, linkcolor=purple, urlcolor=orange, citecolor=iclrblue,
	pdfstartview=Fit,
	breaklinks=true}

\renewcommand*{\backref}[1]{}
\renewcommand*{\backrefalt}[4]{%
    \ifcase #1 %
        Not cited.%
    \or
        Cited on page #2.%
    \else
        Cited on pages #2.%
    \fi
}

\usepackage{url}            
\usepackage{booktabs}       
\usepackage{amsfonts}       
\usepackage{nicefrac}       
\usepackage{microtype}      

\usepackage{amsthm}
\newtheorem{theorem}{Theorem}

\theoremstyle{remark}

\newtheoremstyle{lemma} 
  {3pt}   
  {3pt}   
  {\itshape} 
  {}      
  {\bfseries} 
  {.}     
  { }     
  {Lemma \thmnumber{#2}\thmnote{ (#3)}} 

\theoremstyle{lemma}
\newtheorem{lemma}{Lemma}

\newtheoremstyle{proposition} 
  {3pt}   
  {3pt}   
  {\itshape} 
  {}      
  {\bfseries} 
  {.}     
  { }     
  {Proposition \thmnumber{#2}\thmnote{ (#3)}} 

\theoremstyle{proposition}
\newtheorem{proposition}{Proposition}

\newtheoremstyle{assumption} 
  {3pt}   
  {3pt}   
  {\itshape} 
  {}      
  {\bfseries} 
  {.}     
  { }     
  {Assumption \thmnumber{#2}\thmnote{ (#3)}} 

\theoremstyle{assumption}
\newtheorem{assumption}{Assumption}

\newtheoremstyle{definition} 
  {3pt}   
  {3pt}   
  {\itshape} 
  {}      
  {\bfseries} 
  {.}     
  { }     
  {Definition \thmnumber{#2}\thmnote{ (#3)}} 

\newcommand{\inlinelogo}[1]{%
  \raisebox{-0.1ex}{\includegraphics[height=1.2em]{#1}}%
}

\theoremstyle{definition}
\newtheorem{definition}{Definition}

\usepackage{marvosym}
\usepackage{placeins}
\usepackage{titletoc}
\usepackage{multirow}
\usepackage{multicol}
\usepackage{graphicx}
\usepackage{subfigure}
\usepackage{wrapfig}
\usepackage{amsmath}
\usepackage{algorithm}
\usepackage{algpseudocode}

\usepackage{listings}
\usepackage{xcolor}
\usepackage{booktabs}
\usepackage{wrapfig}
\usepackage{caption}

\title{Divide, Harmonize, Then Conquer It: Shooting Multi-Commodity Flow Problems with Multimodal Language Models\thanks{Our implementation is publicly available at \includegraphics[height=1em]{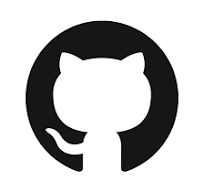}~\href{https://github.com/Y-debug-sys/Pram/}{\texttt{https://github.com/Y-debug-sys/\textsc{Pram}}}.}}

\author{Xinyu~Yuan$^{\triangle}$, Yan~Qiao$^{\nabla}$, Zonghui~Wang$^{\triangle, \text{\Letter}}$ \& Wenzhi~Chen$^{\triangle, \text{\Letter}}$ \\
$^{\triangle}$Zhejiang University \quad $^{\nabla}$Hefei University of Technology \quad $^{\text{\Letter}}$Co-corresponding authors\\
\texttt{yxy5315@gmail.com}, \texttt{qiaoyan@hfut.edu.cn}, \texttt{\{zhwang,chenwz\}@zju.edu.cn} 
}

\iclrfinalcopy 
\begin{document}

\maketitle

\begin{abstract}
The multi-commodity flow (MCF) problem is a fundamental topic in network flow and combinatorial optimization, with broad applications in transportation, communication, and logistics, etc. Nowadays, the rapid expansion of allocation systems has posed challenges for existing optimization engines in balancing optimality and tractability. In this paper, we present \textsc{Pram}, the \textit{first} ML-based method that leverages the reasoning power of multimodal language models (MLMs) for addressing the trade-off dilemma—a great need of service providers. As part of our proposal, \textsc{Pram} (i) quickly computes high-quality allocations by \textit{dividing} the original problem into local subproblems{, which are then resolved by} an MLM-powered ``agent'', and (ii) ensures global consistency by \textit{harmonizing} these subproblems via a multi-agent reinforcement learning algorithm. Theoretically, we show that \textsc{Pram}, which learns to perform gradient descent in context, provably converges to the optimum within the family of MCF problems. 
Empirically, on real-world datasets and public topologies, \textsc{Pram} achieves performance comparable to, and in some cases even surpassing, linear programming solvers (very close to the optimal solution), and substantially lower runtimes (1$\sim$2 orders of magnitude faster). 
Moreover, \textsc{Pram} exhibits strong robustness (<10\% performance degradation under link failures or flow bursts), demonstrating MLM's generalization ability to unforeseen events. \textsc{Pram} is objective-agnostic and seamlessly integrates with mainstream allocation systems, providing a practical and scalable solution for future networks.
\end{abstract}

\vspace{-0.2cm}
\section{Introduction}
\vspace{-0.1cm}

Suppose we want to send multiple commodities from  their sources to their respective destinations along the arcs of an underlying network, with the objectives of achieving low link utilization, high throughput, and fairness among commodities. This scenario results in what we call a multi-commodity flow (MCF) problem~\citep{assad1978multicommodity}. 
Its importance has been underscored by wide-ranging applications in transportation, communication, logistics, energy, and cloud computing~\citep{schrijver2002history,wang1999explicit,balcik2014multi,blaauwbroek2015decentralized,chang2010optimal}. 
Over the past decades, optimization-based algorithms built on linear programming (LP) played an important role in solving such problems, which is guaranteed for computing near-optimal solutions~\citep{khachiyan1980polynomial,karmarkar1984new,chen2024high}. However, the conventional wisdom in the community is that solving these LPs becomes time-consuming as the solution space scales up~\citep{cohen2021solving}. This bottleneck is further magnified in modern systems, which often comprise thousands of nodes or/and links, and must accommodate millions of commodities with unpredictable demands~\citep{applegate2003making,wang2006cope}. 
More recently, the advancements of machine learning (ML) have prompted extensive research into ML-based solutions~\citep{valadarsky2017learning,bernardez2021machine}. 
While faster, these methods are still limited by the problem scale (\S~\ref{subsec:motivation}), as the number of variables to be optimized remains unchanged. Additionally, they inherit the common shortcomings of ML models, e.g., sensitivity to unseen environments and painstaking hyperparameter sweeping. 

In response to the central barrier of scalability, we take a different perspective. Rather than monolithic optimization, our primary idea is to handle the growing network size by decomposing the original problem into smaller subproblems through the partition of topology and demands~\citep{abuzaid2021contracting,cohen2021solving}, and given historical data, directly solving them in parallel through deep neural networks (DNNs). Building on this idea, we propose \textbf{\textsc{Pram}}: \textbf{P}artitioned \textbf{R}esource \textbf{A}llocation with \textbf{M}ultimodal Language Models (MLMs). 
First, we divide the MCF problem by commodity flow sources, then allocate flows for each subset individually using a shared ``agent'' model. We here choose MLMs as the agent backbone for two major reasons: \textcircled{\scriptsize 1} they exhibit emergent abilities that were not explicitly programmed into them during pre-training on massive data, such as mathematical problem reasoning and generalization to unforeseen conditions~\citep{zhang2024mathverse,wu2024netllm}; and \textcircled{\scriptsize 2} they mitigate the costs of retraining and handcrafting specialized DNNs for complex inputs, by processing topology and demand data as images and text tokens, respectively. 
Second, we present a novel multi-agent reinforcement learning (MARL) algorithm for fine-tuning the agent model using counterfactual policy gradients~\citep{foerster2018counterfactual}, with lightweight communication enabled by trainable low-rank matrices and prefix context. This allows each logical agent to exchange and estimate its individual contribution to the team's success. 
Further, through case studies, we show that since the MCF objective typically satisfies suitable convexity/concavity properties, once adapted, \textsc{Pram} provably attains the optimum by internally simulating the gradient descent (GD) procedure. Hence, the effectiveness of \textsc{Pram} rests on a solid theoretical foundation. 
{To this end, the paper makes three key contributions which can be summarized as follows:}

\vspace{-0.1cm}
{\textbullet\ In a remarkable departure from conventional centralized consensus, we propose \textsc{Pram} (\S~\ref{sec:methodology}) to divide and optimize MCF problems in a distributed manner, thereby providing efficiency at scale.} 

\vspace{-0.1cm}
{\textbullet\ \textsc{Pram} is the first end-to-end MCF solver built on off-the-shelf MLMs (\S~\ref{subsec:mpp}), with no redundant manual design overhead. It employs a lightweight and interpretable adaptation framework (\S~\ref{subsec:lma}) that counters partition effects and fully taps the model’s reasoning for different flow allocation tasks.}

\vspace{-0.1cm}
{\textbullet\ A suite of theoretical results (\S~\ref{sec:theorypresentation}) demonstrates that, once adapted, \textsc{Pram} can internally approximate near-optimal solutions, offering performance guarantees largely absent in prior works.}

Our evaluation (\S~\ref{sec:evaluation}) encompasses topologies of different scales, heterogeneous demand distributions, and multiple MCF objectives. In comparison with a variety of solutions, including RL, LP and heuristics, the main results show that:  
\textcircled{\scriptsize 1} \textit{\textsc{Pram} achieves near-optimal performance.} It consistently outperforms previous RL-based allocation schemes, with an average performance gap of less than 8\% from the optimal solution. \textcircled{\scriptsize 2} \textit{\textsc{Pram} accelerates flow allocation at scale.} It speeds 10$\times$ to 100$\times$ faster than solving LPs on large-scale topologies. \textcircled{\scriptsize 3} \textit{\textsc{Pram} generalizes well to new environment.} It exhibits strong robustness to demand distributions, flow dynamics, and network failures. Through ablation studies and visualizations, we delve deeper into the factors contributing to \textsc{Pram}’s effectiveness.

In conclusion (\S~\ref{sec:conclusion}), we regard our exploration of MLM-powered MCF optimization as an initial yet groundbreaking step, and discuss the limitations of \textsc{Pram} that we hope future research can address.

\begin{figure}
    \centering
    \setlength{\belowcaptionskip}{-0.3cm}
    \includegraphics[width=1.\linewidth]{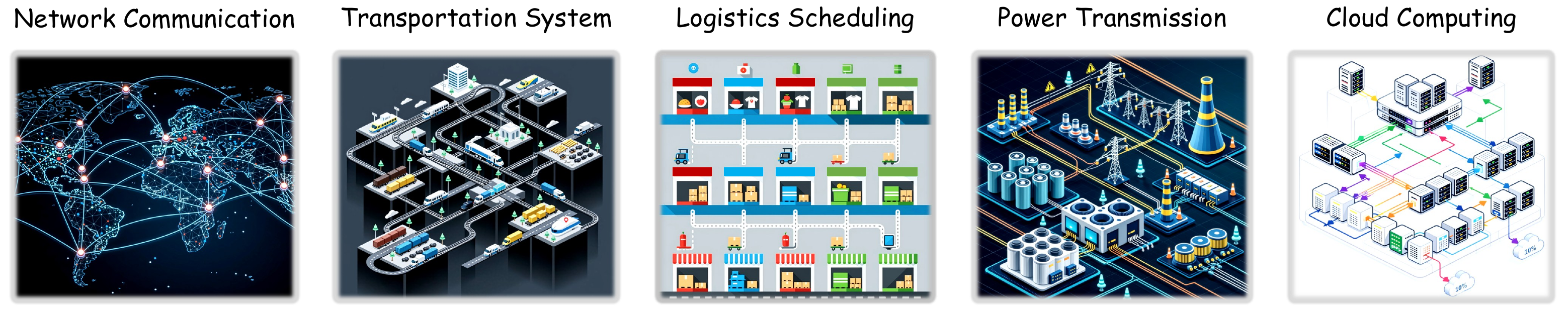}
    \caption{\textbf{Various real-world examples of multi-commodity flow.} From left to right: wide-area network traffic engineering, urban mobility management, delivery route optimization, regional power dispatch, and tenant-aware flow control, all of which involve a very large solution space today.}
    \label{fig:mcf_app}
\end{figure}

\vspace{-0.2cm}
\section{Problem Formulation}
\label{sec:bm}
\vspace{-0.1cm}

We formulate the multi-commodity flow (MCF) problem on a directed graph or topology $\mathcal{G}(\mathcal{V},\mathcal{E},c)$, where $\mathcal{V}$ is the set of vertices or nodes, $\mathcal{E}$ the set of directed edges or links, and $c:\mathcal{E}\to\mathbb{R}^+$ specifies link capacities. Each source-destination pair $s, t$ is associated with a \textit{predefined} set of candidate paths $P_{s,t}$ and \textit{historical} demand $\mathcal{D}_{s,t}$. A configuration $\pi$ distributes each commodity flow using candidate path weights ${r_p}$. The goal is to periodically determine the optimal $\pi$ under different objectives, e.g., minimizing maximum link utilization for resilience, maximizing total flow for profit, or concurrent flow for fairness (see Appendix~\ref{subsec:obj} for objective details). This problem is known to be NP-complete~\citep{even1975complexity} and finds many important applications across various domains (e.g., communication, transportation, logistics and so on) as listed in Figure~\ref{fig:mcf_app}.

\vspace{-0.2cm}
\section{Pram: Partitioned Resource Allocation with MLMs}
\label{sec:methodology}
\vspace{-0.1cm}

Complex MCF problems are often intractable as a whole but can be decomposed into subproblems defined over subsets of commodities and links. We propose to leverage MLMs for solving these subproblems in parallel, exploiting their mathematical reasoning capacity to yield high-quality allocations. We call this framework Partitioned Resource Allocation with MLMs (or \textsc{Pram} for short). In the rest of this section, we describe the motivations, benefits, and designs of \textsc{Pram}. 

\vspace{-0.2cm}
\subsection{Motivations behind \textsc{Pram}} 
\label{subsec:motivation}
\vspace{-0.1cm}

\paragraph{Achilles’ Heel of LP-based Methods} The MCF problem is typically formulated as a mathematical program, and in theory, an optimal solution can always be obtained (or infeasibility certified) using LPs. In practice, however, solving these programs can be prohibitively expensive: the worst-case complexity of LP with $d$ variables is about $\mathcal{O}(d^{2.3729})$~\citep{lee2015efficient,cohen2021solving,narayanan2021solving}, and modern systems may involve millions of variables (e.g., at least one per source–destination pair), resulting in extremely long runtimes (e.g., in hours). Even worse, the optimal solution assumes full knowledge of all current commodities, yet real-time measurement at scale is often impractical. As a result, inputs are typically derived from historical flows or their predictions. But given the uncertain and unpredictable nature of data such as network traffic~\citep{wang2006cope,perry2023dote,wang2025survey}, the final performance achieved is often highly suboptimal. 

\begin{wrapfigure}[12]{r}{0.3\textwidth}
  \vspace{-0.75cm}
  \begin{center}
  \includegraphics[width=1.\linewidth]{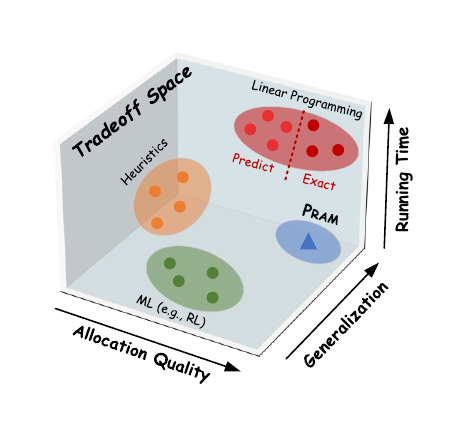}
\caption{Tradeoff space.}
\label{trade_off}
\end{center}
\end{wrapfigure}

\vspace{-0.15cm}

\paragraph{Where ML-based Methods Fall Short} Recent studies have sought to bypass the iterative optimization process using machine learning, where deep neural networks (DNNs) directly determine routing decisions. Although promising, existing ML-based methods fall short in several key limitations, as we list in the following. \textit{\textcircled{\scriptsize 1} High engineering costs.} Their success is heavily dependent on retraining, particularly for RL, which relies on fragile optimization~\citep{henderson2018deep}, and engineering DNN models, such as graph neural networks (GNNs)~\citep{wu2020comprehensive,bernardez2021machine}, for the target scenarios, which, however, can be labor-intensive due to the complex structures. \textit{\textcircled{\scriptsize 2} Poor generalization.} DNNs trained on specific environments may not perform well on unseen ones~\citep{yuan2026putting}. \textit{\textcircled{\scriptsize 3} Curse of dimensionality.} Since representing the flow allocation needs $\mathcal{O} \left(|\mathcal{V}|^2\right)$ path weights, the scaling challenge remains. For instance, in a topology of $1{,}000$ nodes with $4$ candidate paths, the output alone would require $4$ million dimensions. 

\begin{figure}
    \centering
    \setlength{\belowcaptionskip}{-0.3cm}
    \includegraphics[width=1.\linewidth]{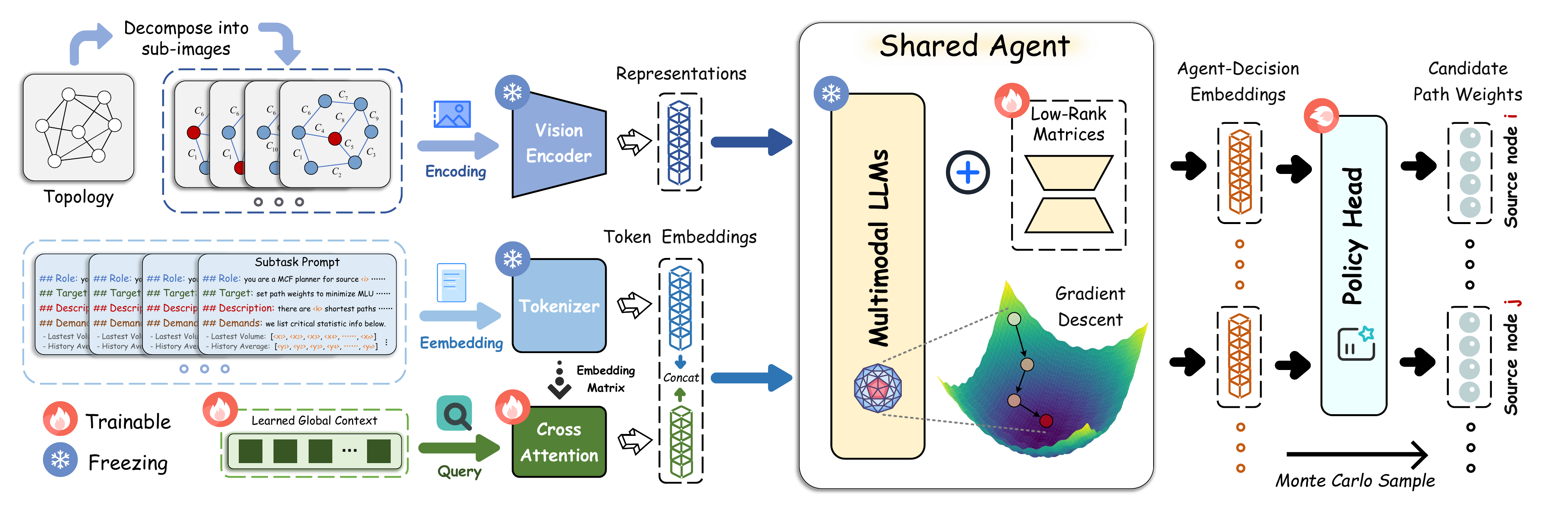}
    \caption{\textbf{Overview of \textsc{Pram}.} It consists of three core components: \textit{partition} module to divide task into smaller sub-tasks, \textit{MLM-based agent} module to to generate sub-task-specific answers, and \textit{adaptation} module to efficiently learn global knowledge for MCF optimization.}
    \label{fig:pram}
\end{figure}

Faced with the above dilemma, an intuitive way of accelerating allocation and reducing complexity is to decompose MCF optimization into sub-tasks, applying powerful solver(s) simultaneously in each subproblem and merging their results at the end. For example, dividing commodities and demands evenly among $k$ sub-problems will reduce the number of variables in each sub-problem by $k^2$~\citep{cohen2021solving}. Holding the premise, we argue that with adequate generalization and expressiveness, without frequent retraining, ML can also have the potential to be a principled alternative that enables near-optimal and rapid decision making for general MCF problems. This is exemplified by the recent popular pretrained (multimodal) large language models, opening new opportunities that allow \textsc{Pram} to be effective, high-accuracy, but faster as seen in Figure~\ref{trade_off}. 

\vspace{-0.2cm}
\subsection{Multimodal Problem Partition} 
\label{subsec:mpp}
\vspace{-0.1cm}
To translate our motivation into practice, the first step of \textsc{Pram} is a partition procedure. We aim to maximize parallelism without excessive decomposition, since the latter can impede convergence and even increase solving time at scale. Revisiting the earlier example, decomposing at the granularity of individual commodity flows may generate millions of subproblems, requiring numerous batching rounds even on modern GPUs. So taking a step back, \textsc{Pram} focuses on solutions at a node level by treating commodities from the same source as one subset, reducing the model complexity from $\mathcal{O}(|\mathcal{V}|^2)$ to generally tractable $\mathcal{O}(|\mathcal{V}|)$. 
No need for any specially designed modules such as GNNs or RNNs, thanks to the off-the-shelf interface of MLMs, our design can be realized in a simple and convenient manner. As illustrated on the left of Figure~\ref{fig:pram}, the procedure amounts to instructing the MLM with subgraphs (in image form) together with the commodity information (in text form). Concretely, we plot all routing links from the source node to every other node once to obtain a visual representation, which is then fed to MLMs through a vision encoder (e.g., CLIP~\citep{radford2021learning}). At the same time, we introduce subtask-aware prompting, in which each recent demand in the commodity subset is explicitly paired with the subtask's key descriptions (e.g., source node specification) and statistics (e.g., historical rolling average demand). A prompt example is in Appendix~\ref{subsec:prompt}. 
The MLM ``agent'' is then expected to jointly process and reason over both modalities to yield high-quality MCF configurations—an issue we address next with our adaptation framework.

\vspace{-0.2cm}
\subsection{Lightweight Multi-Agent Adaptation} 
\label{subsec:lma}
\vspace{-0.1cm}
In the implementation of \textsc{Pram}, we promote its efficiency by sharing the MLM backbone among the subproblem agents. These logical agents can still behave differently because they receive different observations and thus evolve different hidden states. However, since the model is not inherently specialized for this task, the agents are unable to perceive each other’s presence. This necessitates the introduction of additional (communication) parameters and raises the challenge of fine-tuning them. 

\begin{figure}
    \setlength{\abovecaptionskip}{-0.05cm} 
    \setlength{\belowcaptionskip}{-0.45cm}
    \subfigure[Inter-agent communication\label{subfig:communication}]{
        \centering
        \includegraphics[width=0.32\linewidth]{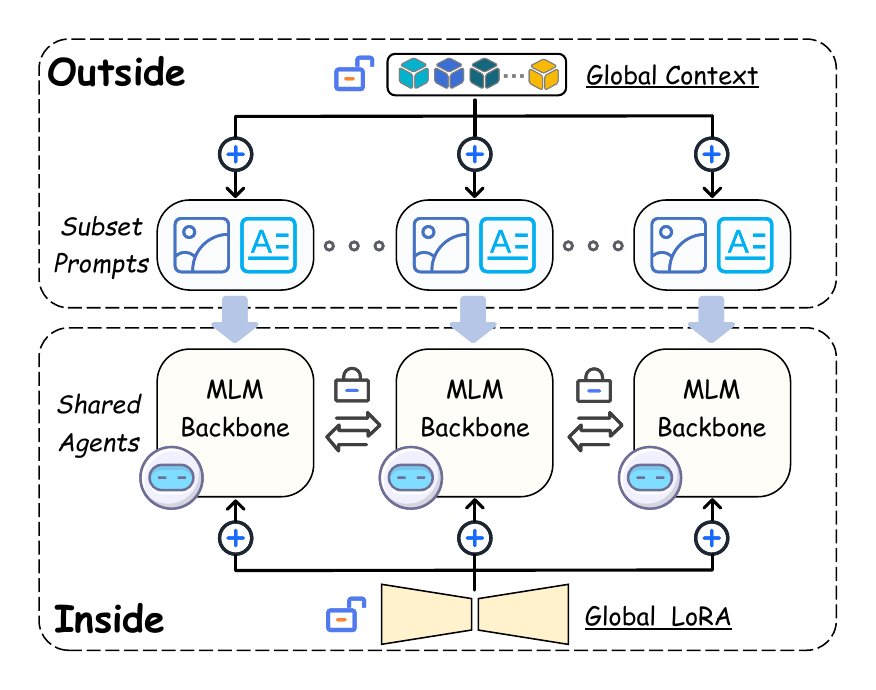}
    }
    \hfill
    \subfigure[Reprogramming learned context embeddings\label{subfig:repragramming}]{
        \centering
        \includegraphics[width=0.48\linewidth]{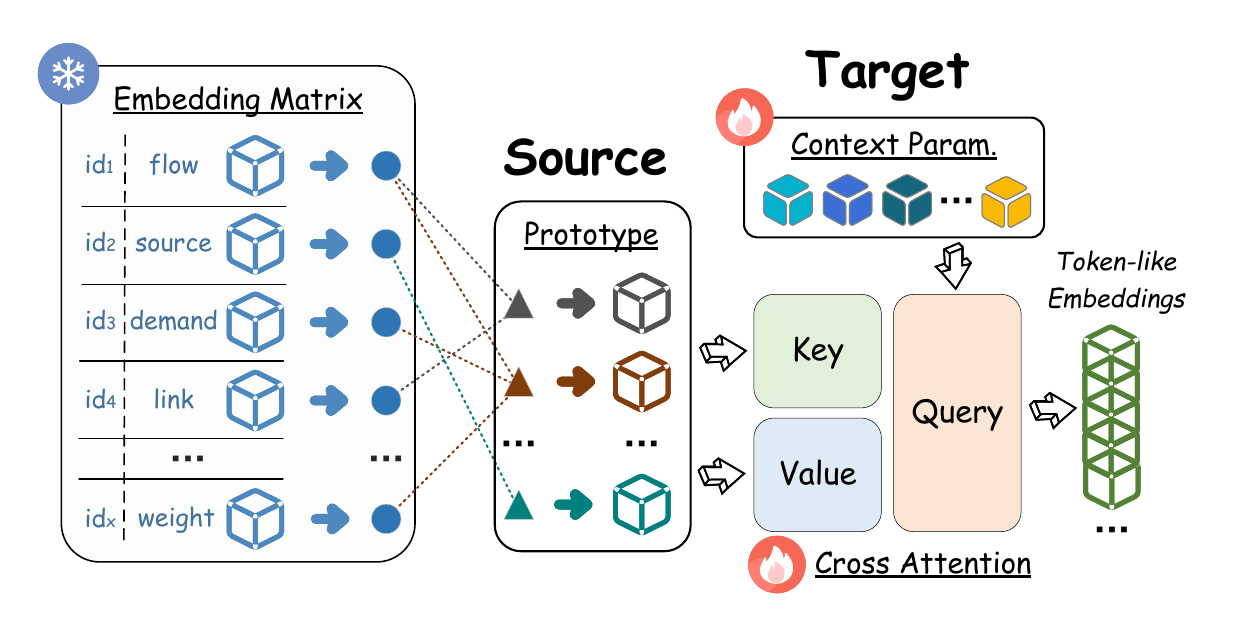}
    }
    \hfill
    \subfigure[MARL\label{subfig:mrl}]{
        \centering
        \includegraphics[width=0.15\linewidth]{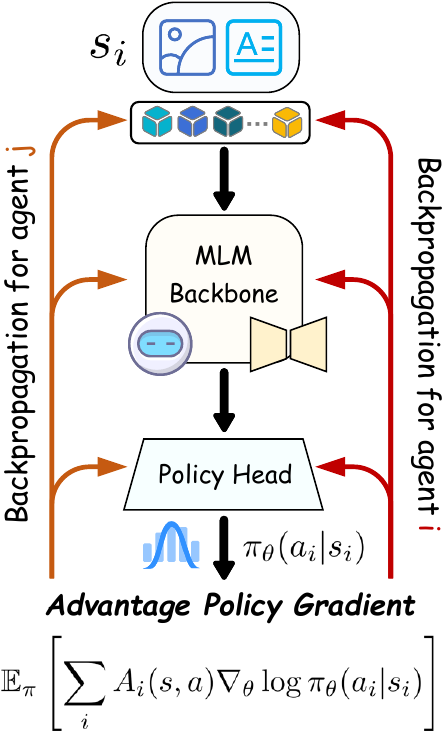}
    }
    \caption{\textbf{Illustration of \textsc{Pram}'s adaptation framework.} In (a), \textsc{Pram} builds inter-agent communication through \textsc{LoRA} and reprogramming context using cross attention in (b). In (c), policy gradient flow is computed from each agent's difference reward to estimate the contribution of its actions to the team’s global reward.}
\end{figure}

\vspace{-0.3cm}

\paragraph{Communication} Our scheme of inter-agent communication is in Figure~\ref{subfig:communication}. The global (trainable) parameters are added in two ways: (i) Inside the model, we introduce low-rank matrices for MLM's attention weights to approximate the changes needed in the backbone parameters using the \textsc{LoRA} technique~\citep{hulora2022lora}. 
(ii) Outside the model, inspired by LM's in-context learning (ICL) mechanics~\citep{brown2020language,lin2024transformers}: an ability to flexibly adjust their prediction based on
additional data given in context (i.e., in the input sequence
itself), we create a set number of learnable ``global context'' embeddings as prefix of input prompts using the reprogramming technique~\citep{li2023blip,jin2024time}. As illustrated in Figure~\ref{subfig:repragramming}, we connect the context to a frozen input embedding matrix of the tokenizer associated with the MLM, and we perform alignment with a multi-head cross attention layer. 
Specifically, we model text prototypes as keys and values by linearly aggregating original token embeddings, while the context parameters act as queries to extract global information or guidance the agent requires and understands. Upon packing the prompt and context embeddings, as shown in Figure~\ref{fig:pram}, we feed them through the MLM backbone to produce the agent-decision embeddings. 
As our evaluation results in \S~\ref{subsec:abl} show, these designs impose limited parameter overhead; in other words, \textsc{Pram} is lightweight overall. 

\vspace{-0.15cm}

\paragraph{Adaptation}
While independent adaptation of different agents via gradients from the global objective (loss) is straightforward, the lack of information exchange during fine-tuning limits the ability to learn coordinated strategies or to assess an individual agent’s contribution to the system. To overcome this, we adopt multi-agent reinforcement learning (MARL) algorithms~\citep{busoniu2008comprehensive,kraemer2016multi,foerster2018counterfactual,lyu2021contrasting,lyu2023centralized}. In particular, \textsc{Pram} leverages \textit{counterfactual reasoning}~\citep{foerster2018counterfactual,xu2023teal}, asking: \textit{how would the global objective change if only one subset’s flows were reallocated while others remained fixed?} The reward difference defines the single agent’s advantage relative to a counterfactual baseline, thereby quantifying its contribution to the joint outcome.

To be more specific, when (history) demands arrive, \textsc{Pram} provides the token embeddings of each subset as the local state~$s_i$ to the corresponding RL agent~$i$, which then outputs an action~$a_i$ from a policy head network, i.e., a vector of path weights for its managed demands. All agents share a policy~$\pi_\theta$ parameterized by~$\theta$, trained via policy gradient.  
We then denote by~$s$ the central state composed of all local states~$s_i$, and by~$a$ the joint action of all agents’ actions~$a_i$. After all decisions are made, a reward~$R(s,a)$ is obtained. To compute the advantage~$A_i(s,a)$, \textsc{Pram} exploits the one-step nature of the MCF: since actions (flow allocations) do not affect future states (demands or/and paths), the expected return reduces to the immediate reward~$R(s,a)$. Moreover, if agent~$i$ changes its action to~$a_i'$ while others keep theirs fixed, the new joint action $(a_{-i}, a_i')$ can be directly evaluated by simulating its effect on the objective, i.e., we compute the MCF objective obtained if the new joint action were to be used. Putting it together, \textsc{Pram} computes the advantage for agent~$i$ as
\[
A_i(s,a) = R(s,a) - \sum_{a_i'} \pi_\theta(a_i'|s_i) R\left(s,(a_{-i},a_i')\right), \quad  g=\mathbb{E}_{\pi} \left[ \sum_i A_i(s,a) \nabla_\theta \log \pi_\theta(a_i|s_i) \right],  
\]
where $g$ is the policy gradient, and the counterfactual baseline is approximated via Monte Carlo sampling, i.e., drawing random actions $a_i' \sim \pi_\theta(\cdot|s_i)$. As shown in Figure~\ref{subfig:mrl}, we fine-tune \textsc{Pram} in an end-to-end manner, with $\theta$ denoting all tunable parameters, so that gradients are backpropagated seamlessly from the policy head network. We include more details of the algorithm in Appendix~\ref{subsec:adapt_alg}. 

\vspace{-0.2cm}
\section{Understanding Pram: Case Study and Theory}
\label{sec:theorypresentation}
\vspace{-0.1cm}

\begin{figure}
    \centering
    \setlength{\belowcaptionskip}{-0.3cm}
    \includegraphics[width=1.\linewidth]{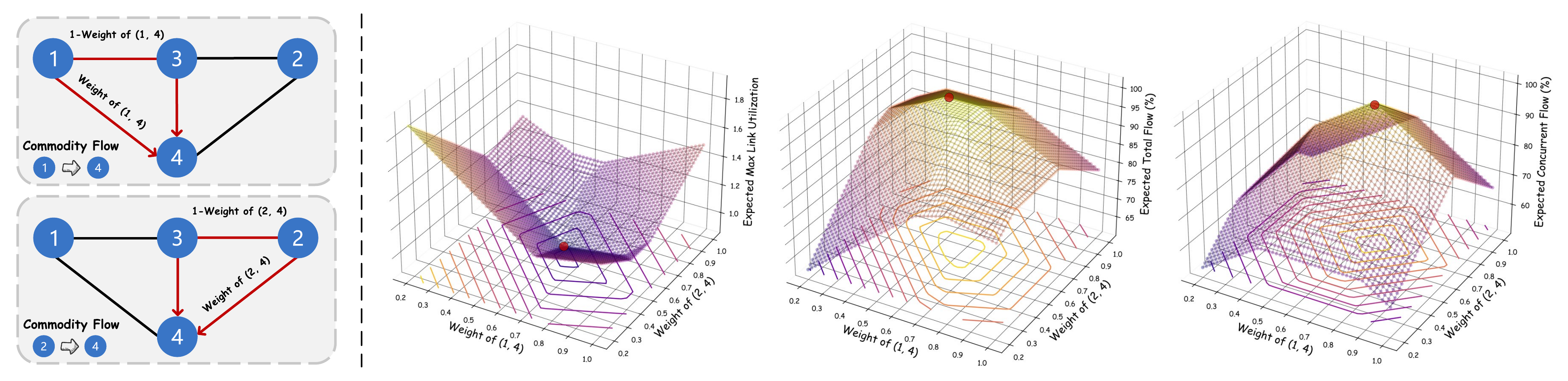}
    \caption{\textbf{A case study on the multi-commodity flow problem.} \textit{(left)} In a simple network, both \textcircled{\scriptsize 1} and \textcircled{\scriptsize 2} need to transmit commodities to \textcircled{\scriptsize 4}, each having two candidate paths. \textit{(right)} The (expected) objective functions are plotted against the path weights, with their optima indicated by \textcolor{red}{red} dots. The curves exhibit clear convexity/concavity, lending theoretical support to the soundness of \textsc{Pram}.}
    \label{fig:case_study}
\end{figure}

\paragraph{Through the Looking Glass of MCF} To delve into the MCF problem, we conduct an illustrative case study in Figure~\ref{fig:case_study}. This study is based on a toy topology consisting of $4$ nodes and $5$ links, each with a capacity of $1$. Now, suppose nodes \textcircled{\scriptsize 1} and \textcircled{\scriptsize 2} need to send commodity flows to \textcircled{\scriptsize 4}. With equal probability, their demands is either $\frac{3}{2},\frac{6}{7}$ or $\frac{7}{6},\frac{3}{2}$. The candidate paths from \textcircled{\scriptsize 1} to \textcircled{\scriptsize 4} are $(1,4)$ and $(1,3,4)$, while those from node \textcircled{\scriptsize 2} to \textcircled{\scriptsize 4} are $(2,4)$ and $(2,3,4)$. In this case, the configuration $\pi$ is fully determined once the weights on links $(1,4)$ and $(2,4)$ are specified. In Figure~\ref{fig:case_study}~(right), we plot the curves of all three (expected) objective functions as the two weights vary, and mark the optimal values with red dots. We observe that the objective possesses a favorable property—\textit{convexity/concavity} with respect to the path weights. such a property ensures that simple first-order methods, i.e., gradient descent (GD), can effectively solve the problem. In particular, by iteratively updating the configuration in the direction of the steepest descent, one can guarantee convergence to the global optimum. We formalize this insight in the following theorem:
\vspace{-0.02cm}
\begin{theorem}
    (Solving MCF with GD) Consider a GD algorithm with update rule $\pi^{(t+1)} = \pi^{(t)} - \eta v_t$. Then, there exists a step size $\eta > 0$ and a finite number of iterations $T$ such that $\mathcal{L}(\pi)$, the MCF objective function, attains the optimum up to an arbitrarily small error.
    \label{theorem:1}
\end{theorem}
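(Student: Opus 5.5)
We would treat the statement as an instance of the classical convergence theory for (projected) subgradient descent on a convex function over a compact convex set. \emph{First}, we fix the setting. The configuration $\pi$ is the vector of path weights $\{r_p\}$, constrained to the product of simplices $\Pi=\prod_{s,t}\Delta(P_{s,t})$; this set is compact and convex. Each objective in Appendix~\ref{subsec:obj} is a pointwise max or min of affine functions of $\pi$, and expectations over historical demand $\mathcal{D}$ preserve this structure. Concretely:
\begin{itemize}
\item maximum link utilization $\max_e \frac{1}{c(e)}\sum_{p\ni e} r_p d_{s,t}$ is convex;
\item total flow and concurrent flow, after the usual capacity-clipping, are concave, so we minimize their negatives.
\end{itemize}
Hence $\mathcal{L}$ is convex on $\Pi$ and Lipschitz with constant $G=\max_e \sum_{p\ni e} d_{s,t}/c(e)$, which is finite because demands are bounded. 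We interpret $v_t$ as a subgradient $g_t\in\partial\mathcal{L}(\pi^{(t)})$. We also interpret the update as followed by Euclidean projection $P_\Pi$ onto $\Pi$, the natural reading needed to stay feasible; the case study in Figure~\ref{fig:case_study} serves only as motivation.

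\emph{Second}, we run the standard potential argument with $D=\mathrm{diam}(\Pi)$ and an optimizer $\pi^\star$. Nonexpansiveness of the projection gives
\[
\|\pi^{(t+1)}-\pi^\star\|^2\le\|\pi^{(t)}-\pi^\star\|^2-2\eta\langle g_t,\pi^{(t)}-\pi^\star\rangle+\eta^2G^2.
\]
By convexity, $\langle g_t,\pi^{(t)}-\pi^\star\rangle\ge\mathcal{L}(\pi^{(t)})-\mathcal{L}^\star$. Summing over $t=0,\dots,T-1$ yields
\[
\min_{t<T}\mathcal{L}(\pi^{(t)})-\mathcal{L}^\star\le\frac{D^2}{2\eta T}+\frac{\eta G^2}{2}.
\]
Given $\epsilon>0$, we take $\eta=\epsilon/G^2$ and $T=\lceil D^2G^2/\epsilon^2\rceil$, which makes the right-hand side at most $\epsilon$. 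The averaged iterate $\bar\pi_T$ satisfies the same bound by Jensen's inequality. This proves the claim, with $\mathcal{L}(\pi)$ read as the best or averaged iterate.

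\emph{Third}, we note the smooth refinement. If the objective is smoothed, e.g.\ by replacing max-utilization with log-sum-exp at temperature $\mu$, it becomes $L$-smooth with $L=O(G^2/\mu)$. Then $\eta=1/L$ gives the rate $O(LD^2/T)$ for the last iterate, and choosing $\mu=\Theta(\epsilon/\log|\mathcal{E}|)$ costs only an additive $\epsilon$. This covers the last-iterate reading of the statement.

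The main obstacle is the first step: justifying convexity (or concavity) of each objective rigorously, since the paper supports it only with a picture. Max-utilization is immediate. Total and concurrent flow are defined through an inner LP over admitted flow, and we would show they are concave in $\pi$ by writing the optimal value of a parametric LP as a minimum of affine functions via LP duality. If that fails for some objective, we would restrict the theorem to the convex ones and note that exactness is only up to $\epsilon$.
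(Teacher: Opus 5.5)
Your convergence argument is the paper's. The paper bounds $\|\pi\|\le B=\sqrt{n^2p}$, derives Lipschitz constants for $\mathcal{L}_l,\mathcal{L}_t,\mathcal{L}_c$, and applies the averaged-iterate subgradient bound $f(\bar w)-f(w^\star)\le B\rho/\sqrt{T}$ with $\eta=\sqrt{B^2/(\rho^2T)}$ and $T\ge B^2\rho^2/\epsilon^2$. That is your potential argument with an equivalent step-size choice. You depart from it in two places, both where the paper is loose.

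\textbf{Projection.} You project onto $\Pi$. The paper runs the unprojected update from $w^{(1)}=0$, so its iterates and $\bar\pi$ need not be valid configurations. Your projection is what makes the $\epsilon$-bound a statement about a feasible allocation.

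\textbf{Concavity of the flow objectives.} The paper proves convexity of $\mathcal{L}_l$, as you do. For the flow objectives it does not prove the concavity it uses: the proof of the theorem invokes Assumption~\ref{asm:a}, and Proposition~\ref{prop:convex} gives only quasi-concavity of $\mathcal{L}_t,\mathcal{L}_c$.

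Your LP-duality step does prove concavity, but of a different function. It is the optimal value of the inner LP admitting $0\le f_p\le\mathcal{D}_{s,t}r_p$ under the capacity constraints. That value is concave because $r$ enters only the right-hand side.

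The paper's $\mathcal{L}_t,\mathcal{L}_c$ instead rescale all path flows uniformly by $\phi(\pi)=\max(\max_e x_e(\pi)/c_e,1)$, which is also what its reward code computes. These functions are genuinely not concave. Take one commodity with demand $\mathcal{D}\ge 2$, split as $(r,1-r)$ over two link-disjoint unit-capacity paths. Then $\mathcal{L}_t(r)=1/\max(r,1-r)$, so $\mathcal{L}_t(3/4)=4/3<3/2=\tfrac12\bigl(\mathcal{L}_t(1/2)+\mathcal{L}_t(1)\bigr)$. Dividing by $\mathcal{D}$ gives the same failure for $\mathcal{L}_c$. Hence ``concave after the usual capacity-clipping'' holds only in your inner-LP reading.

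\textbf{What each route buys.} Yours gives an unconditional proof for MLU and for the inner-LP flow objectives, plus a last-iterate version via smoothing. That is more than the paper establishes without Assumption~\ref{asm:a}. For the paper's rescaled objectives, the convexity inequality in your potential argument is unavailable. There one still needs that assumption, or a quasi-convex method such as normalized GD, which is the relaxation the paper mentions.

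A minor point: your $G$ is the MLU constant, and the flow objectives need their own. For total flow, dual multipliers lie in $[0,1]$, giving e.g.\ $\sqrt{\sum_p\mathcal{D}_p^2}$ with $\mathcal{D}_p$ the demand of $p$'s commodity. Finiteness is all the argument uses, so this does not affect the proof.
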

\vspace{-0.1cm}
Appendix~\ref{subsec:t1_proof} presents the proof of Theorem~\ref{theorem:1}, which hinges on the convexity/concavity of the three objective functions. In our simple example, executing GD requires precise knowledge of the distribution of demands. However, It is almost impossible to estimate the expected objective without exact prediction of the future demands. To address this, a more practical approach is to implicitly model the probability distribution using a DNN trained on extensive empirical data (like \textsc{Pram}). 

\vspace{-0.15cm}
\paragraph{What Makes \textsc{Pram} Tick} Given the above case study, to approximate the optimal mapping from recent observations to configurations, the theoretical analysis of \textsc{Pram}'s effectiveness is intrinsically aligned with the GD algorithm. Before proceeding, we first present the convergence guarantee of \textsc{Pram}'s multi-agent adaptation:
\vspace{-0.02cm}
\begin{lemma}
    Assume that both $R(s,a)$ and its Hessian $\nabla^2_\theta R(s,a)$ are bounded. Let $\{\alpha_k\}_{k=0}^\infty$ be any step-size sequence satisfying $\lim_{k \to \infty} \alpha_k = 0$, $\sum_{k} \alpha_k = \infty$; and $\theta^{(k+1)} = \theta^{(k)} + \alpha_{k} v_{k}$, where $v_{k}$ is in the direction of the gradient. Then the policy iteration converges such that the expected gradient $\lim_{k \to \infty} \mathbb{E}_\pi \left[ \sum_i A_i(s,a)\,\nabla_{\theta^{(k)}} \log \pi_{\theta^{(k)}}(a_i \mid s_i) \right] = 0$.
    \label{lemma:1}
\end{lemma}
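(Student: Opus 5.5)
The plan is to reduce Lemma~\ref{lemma:1} to the classical convergence of gradient ascent on a smooth, bounded objective with diminishing step sizes. First, I will identify the objective $J(\theta) = \mathbb{E}_{s}\mathbb{E}_{a\sim\pi_\theta}[R(s,a)]$. Because the MCF decision is one-step, the return is just $R(s,a)$. Next, I will show that the counterfactual policy gradient $g(\theta)=\mathbb{E}_\pi[\sum_i A_i(s,a)\nabla_\theta\log\pi_\theta(a_i|s_i)]$ equals $\nabla_\theta J(\theta)$.

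This identity is the standard COMA argument. For each agent $i$, split $A_i(s,a)$ into two parts: the term $R(s,a)$, and the baseline $b_i(s,a_{-i})=\sum_{a_i'}\pi_\theta(a_i'|s_i)R(s,(a_{-i},a_i'))$. The baseline does not depend on $a_i$. Conditioning on $a_{-i}$ therefore gives $\mathbb{E}_{a_i\sim\pi_\theta(\cdot|s_i)}[b_i\nabla_\theta\log\pi_\theta(a_i|s_i)] = b_i\nabla_\theta\sum_{a_i}\pi_\theta(a_i|s_i)=0$. Here I use that agents act independently given their local states, so $\pi_\theta(a|s)=\prod_i\pi_\theta(a_i|s_i)$. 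The remaining term $\mathbb{E}[R(s,a)\sum_i\nabla_\theta\log\pi_\theta(a_i|s_i)]=\mathbb{E}[R(s,a)\nabla_\theta\log\pi_\theta(a|s)]$ is exactly $\nabla_\theta J$ by the likelihood-ratio identity. Thus $g(\theta^{(k)})=\nabla J(\theta^{(k)})$, and the update is gradient ascent (possibly stochastic) on $J$.

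Next comes the ascent argument. From the boundedness of the Hessian (which I interpret as bounded $\nabla^2_\theta J$, i.e.\ $\nabla J$ is $L$-Lipschitz), the descent lemma gives
\[
J(\theta^{(k+1)})\ge J(\theta^{(k)})+\alpha_k\|\nabla J(\theta^{(k)})\|^2-\tfrac{L}{2}\alpha_k^2\|v_k\|^2,
\]
where I take $v_k=\nabla J(\theta^{(k)})$ (or its unbiased estimate with bounded second moment, in which case I take expectations). Since $\alpha_k\to0$, for large $k$ we have $1-L\alpha_k/2\ge 1/2$. Because $R$ is bounded, $J$ is bounded above by $\sup|R|$. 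Summing the inequality therefore gives $\sum_k\alpha_k\|\nabla J(\theta^{(k)})\|^2<\infty$. Together with $\sum_k\alpha_k=\infty$, this yields $\liminf_k\|\nabla J(\theta^{(k)})\|=0$.

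The main obstacle is upgrading $\liminf$ to $\lim$, as in Bertsekas--Tsitsiklis (Prop.~1.2.4). I would argue by contradiction. Suppose $\|\nabla J\|$ exceeds $\epsilon$ infinitely often while dropping below $\epsilon/2$ infinitely often. Each upcrossing from $\epsilon/2$ to $\epsilon$ requires $\sum\alpha_k\|\nabla J\|\gtrsim \epsilon/(2L)$, by Lipschitzness of $\nabla J$ and $\|\theta^{(k+1)}-\theta^{(k)}\|=\alpha_k\|\nabla J\|$. This forces $\sum\alpha_k\|\nabla J\|^2\gtrsim\epsilon^2/L$ per crossing, and infinitely many crossings contradict the summability established above. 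In the stochastic case one additionally needs $\sum\alpha_k^2<\infty$ and a martingale convergence argument; since the lemma only assumes $\alpha_k\to0$, I will present the deterministic (expected-gradient) version, which is what the statement concerns. The conclusion $\lim_k g(\theta^{(k)})=0$ then follows.
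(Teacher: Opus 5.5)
Your proposal is correct and follows essentially the same route as the paper. You split off the counterfactual baseline and kill it via $\sum_{a_i}\nabla_\theta\pi_\theta(a_i\mid s_i)=0$, identify the remaining term with the gradient of the one-step expected reward, and conclude $\lim_k g(\theta^{(k)})=0$ using the bounded Hessian (Lipschitz gradient), boundedness of $R$, and the Bertsekas--Tsitsiklis diminishing-step-size argument. The differences are only streamlining: you use the direct likelihood-ratio identity with $\pi_\theta(a\mid s)=\prod_i\pi_\theta(a_i\mid s_i)$ instead of the paper's detour through the MDP policy gradient theorem and compatible function approximation, and you inline the $\liminf$-to-$\lim$ upcrossing argument that the paper states as a separate lemma.
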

\vspace{-0.1cm}
The proof of Lemma~\ref{lemma:1} (see Appendix~\ref{subsec:t2_proof}) refers to the approximation results from previous actor-critic algorithms~\citep{bertsekas1996neuro,sutton1999policy}. Lemma~\ref{lemma:1} establishes the convergence of \textsc{Pram} to a locally optimal agent, which implies that, once the fine-tuning is complete, our model can be regarded as a global mapping from observed demands to allocation schemes. Now back to MCF problems, our intuition that MLMs are capable of running the GD algorithm is again in the light of ICL. Concretely, it can approximate gradient-based few-shot learning within its forward pass~\citep{fu2023transformers}, thereby ``reasoning'' its way to the solution.
We formalize this intuition in Theorem~\ref{theorem:2}, where we assume the MCF problem in the token space of \textsc{Pram} as a linear regression.
\vspace{-0.02cm}
\begin{theorem}
    (\textsc{Pram} Learns to Implement GD) Let the learned context and the problem be well-defined under Assumption~\ref{asm:b} and~\ref{asm:c} in the Appendix. Then there exists an adapted MLM with constant depth and constant width that can simulate multiple steps of GD updates on the problem objective.
    \label{theorem:2}
\end{theorem}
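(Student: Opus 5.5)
We plan to follow the now-standard "transformers implement gradient descent in context" construction of \citet{fu2023transformers} and related ICL analyses, adapted to \textsc{Pram}'s token space. Under Assumption~\ref{asm:c}, the MCF subproblem seen by an agent is modeled as a least-squares objective $\mathcal{L}(w)=\frac{1}{2n}\sum_{j=1}^n (w^\top x_j - y_j)^2$. Here the pairs $(x_j,y_j)$ are encoded by the prompt tokens (demand/path features and targets), and $w$ represents the configuration $\pi$. Under Assumption~\ref{asm:b}, the learned global-context prefix supplies a bounded initialization $w^{(0)}$ together with auxiliary scratch coordinates.

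\textbf{Step 1: fix the token embedding.} The first step is to specify how each token $h_j$ is laid out. It stacks $(x_j, y_j, w^{(t)}, \hat y_j^{(t)}, 1)$ with some zero-padding. This is achievable because the reprogramming cross-attention maps text prototypes into the embedding space, and with LoRA the projection can be taken to be any fixed linear map in that space. The context embeddings carry $w^{(t)}$ (or the current prediction residuals), so that the layers can read and overwrite them.

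\textbf{Step 2: one attention layer performs one GD step.} We then show that a single attention layer, with suitably chosen (LoRA-adapted) $W_Q, W_K, W_V$, computes
\[
w^{(t+1)} = w^{(t)} - \frac{\eta}{n}\sum_j \left(w^{(t)\top}x_j - y_j\right)x_j .
\]
The construction proceeds as follows:
\begin{enumerate}
\item An MLP sublayer (or an attention head with keys given by $x_j$) first forms the residual $e_j = w^{(t)\top}x_j - y_j$ in a scratch coordinate. The MLP can realize this bilinear quantity exactly via the gated/product trick, or approximately with ReLU to accuracy $\epsilon$ on the bounded domain guaranteed by Assumption~\ref{asm:b}.
\item A linear-attention head with query read from the context token, key $=1$ and value $=e_j x_j$ then aggregates $\frac{1}{n}\sum_j e_j x_j$.
\item The residual connection adds $-\eta$ times this aggregate to the $w$ slot.
\end{enumerate}
If softmax attention is required rather than linear attention, we would instead use the standard approximation of linear attention by softmax with small-magnitude keys (or by two heads whose difference cancels the leading term). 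This incurs an error controlled by the bounded inputs.

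\textbf{Step 3: stack layers to obtain $T$ steps.} Stacking $T$ such blocks yields $T$ GD steps. Since the weights are identical across blocks, one can alternatively loop them; either way, depth is $O(T)$ and width is independent of $n$ and of the graph size, hence constant. Combining this with Theorem~\ref{theorem:1}, which gives convergence of GD for a suitable $\eta$ on the convex objective, shows that the adapted MLM reaches near-optimal $\mathcal{L}$.

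\textbf{Main obstacle: error propagation.} We expect the hardest part to be controlling error propagation through the $T$ layers. Each step involves an approximation error (from the MLP product or the softmax-to-linear reduction), and these errors accumulate. We would show that the GD map is nonexpansive for $\eta \le 1/\lambda_{\max}(\frac1n\sum_j x_jx_j^\top)$, which is bounded by Assumption~\ref{asm:b}. The accumulated error is then at most $T\epsilon$, which we drive below any target by choosing the per-layer accuracy. A secondary subtlety is reconciling the multi-agent setting, where each agent sees only its own subset. Here we would argue that the shared context prefix carries the coupling terms, so that each agent's forward pass performs the block-coordinate GD step on the global objective.
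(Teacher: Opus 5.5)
Your route differs from the paper's, and it is essentially workable. You store the iterate $w^{(t)}$ explicitly in context tokens. Each GD step then costs you one sublayer to form the residuals $e_j=w^{(t)\top}x_j-y_j$, which are bilinear in the current state. A second head aggregates $\frac1n\sum_j e_jx_j$. This forces the MLP/softmax approximations and the $T\epsilon$ error-propagation argument.

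The paper instead uses the implicit weight construction it borrows from Ahn et al. The tokens are just $e_i=(x_i,y_i)$, which by Assumption~\ref{asm:b} are supplied by the learned context, together with the agent's query token $(x_{\mathrm{test}},-W_0x_{\mathrm{test}})$. A single linear self-attention head with $W_K=W_Q=\mathrm{diag}(I_x,0)$, $W_V=\begin{pmatrix}0&0\\ W_0&-I_y\end{pmatrix}$ and $P=\frac{\eta}{N}I$ maps every $y_j\mapsto y_j-\Delta W x_j$ exactly. The residual $W_0x_i-y_i$ is linear in the token because $W_0$ is baked into $W_V$ and the current iterate is absorbed into the transformed $y$-slots. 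So stacking such heads gives exact GD steps with no MLP, no scratch coordinates and no error analysis; your ``main obstacle'' never arises.

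What your version buys is an explicit readout of the configuration $w^{(t)}$ and a treatment of softmax attention. The paper works with linear self-attention throughout, which is not what the Qwen backbone uses. What the paper's version buys is exactness and minimality: one head per step.

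Two local repairs are needed.

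First, a head with key $=1$ and value $=e_jx_j$ cannot be realized: values are linear in the token, and $e_jx_j$ is a product of two token coordinates. Put the scalar $e_j$ in the key, with the query reading the constant coordinate, and put $x_j$ in the value. The bilinearity of $v_jk_j^\top q$ then yields $\sum_j e_jx_j$ exactly. This is precisely the mechanism the paper exploits with $k_i=x_i$ and $v_i=(0,W_0x_i-y_i)$.

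Second, you misread Assumption~\ref{asm:b}. It says the learned context is aligned with training observation--decision pairs, i.e.\ the context supplies the regression data, not $w^{(0)}$. Your reading instead needs the targets $y_j$ to sit in the prompt, which \textsc{Pram}'s prompts do not contain. The assumption also gives no boundedness, so the domain and $\lambda_{\max}$ bounds needed for your ReLU/softmax approximations must be assumed separately.

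Finally, the block-coordinate claim for the multi-agent setting is unnecessary for the statement and unsupported as phrased. A fixed learned prefix cannot carry the other agents' current iterates, so drop it.
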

\vspace{-0.1cm}
The proof is deferred to Appendix~\ref{subsec:t3_proof}, which follows a simple weight construction introduced by~\cite{ahn2023transformers}. 
Taken together, the convexity of MCF problems (Theorem~\ref{theorem:1}) and the ability of MLMs to simulate gradient descent (Theorem~\ref{theorem:2}) establish that \textsc{Pram}, once adapted, internally acts as an optimizer on the distance between initial configurations (input token embeddings) and near-optimal configurations (agent-decision embeddings)—consistent with our empirical results (\S~\ref{sec:evaluation}).

\vspace{-0.2cm}
\section{Main Results}
\label{sec:evaluation}
\vspace{-0.1cm}

We evaluate \textsc{Pram} on open-source multimodal language models, with~\inlinelogo{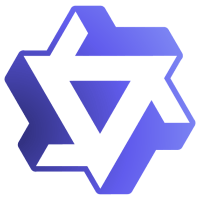} Qwen2.5-VL-7B-Instruct~\citep{bai2025qwen2} as a standard. Unless otherwise noted, we truncate and fine-tune the first \textit{8} layers of its LM as the backbone. Implementation specifics are in Appendix~\ref{app:pram_imp}, and key experimental setups are summarized below with further details in Appendix~\ref{app:setup}. This section sequentially includes comparative experiments on real-world and large-scale generated datasets, generalization analysis, and ablation studies. We also provide additional supplementary experiments (e.g., w.r.t number of layers, used MLMs, and demand distributions) in the Appendix~\ref{app:addexp}.

\textbf{Datasets.} Our evaluation is conducted on two groups of datasets. First, we leverage five real-world datasets, namely \textit{Meta DB, Meta WEB, Abilene, CERNET}, and \textit{G\'{E}ANT}. These datasets are relatively small in scale, each containing fewer than 30 nodes. To complement it, we also incorporate five large-scale topologies with synthetic data, including \textit{GtsCe, Colt, UsCarrier, Cogentco}, and \textit{Kdl}. The sizes of these topologies range from 100 to 800 nodes. In \S~\ref{subsec:lsm}, we employ a gravity model~\citep{roughan2002experience} to generate synthetic demands. Additionally, we select the \textit{12} most recently observed demands as historical information.

\textbf{Baselines.} We compare \textsc{Pram} with {six representative baselines: linear programming (\textit{LP}~\citep{gurobi}), partition-based optimization (\textit{POP}~\citep{cohen2021solving}), heuristic approach (\textit{LP-top}~\citep{namyar2022minding}), reinforcement learning (\textit{DRL}~\citep{valadarsky2017learning}), and GNN-based machine learning (\textit{HARP}~\citep{alqiam2024transferable}, \textit{Aether}~\citep{fan2025aether}). Since some methods rely on the ground-truth demand, we also evaluate their predicted variants \textit{(pred)} using the default moving-average forecasting, which we found to be simple yet effective (see Appendix~\ref{subsec:preds}).}

\textbf{Metrics.} We consider three standard objectives for the MCF problem: minimizing \textit{maximum link utilization} (MLU), maximizing \textit{total flow}, and maximizing \textit{concurrent flow}. To enable consistent comparison across datasets, all reported results are \textit{normalized}, which is the ratio of the result to the (near-)optimal result obtained using the LP optimization solver. We also measure the \textit{computation time} of each approach on the same machine to assess their efficiency. 

\vspace{-0.2cm}
\subsection{Experiment on Real-World Datasets}
\vspace{-0.1cm}

\begin{figure}
    \centering
    \setlength{\belowcaptionskip}{-0.3cm}
    \includegraphics[width=1.\linewidth]{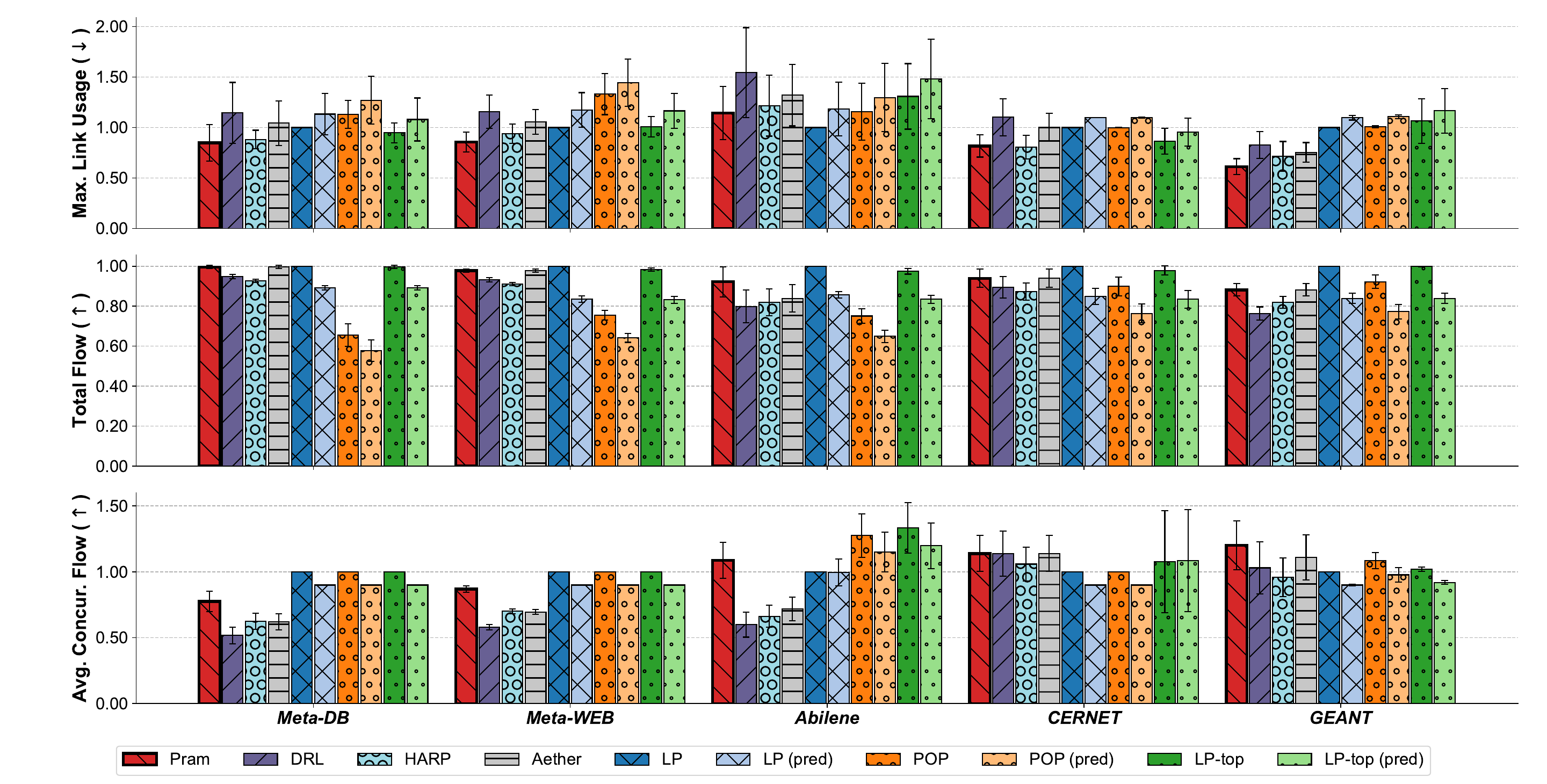}
    \caption{{Real-world evaluation results of maximum link utilization (lower the better), allocated total flow (higher the better), and concurrent flow (higher the better). \textsc{Pram} achieves top-tier performance.}}
    \label{fig:rw_compare}
\end{figure}

Our first experiment aims to answer: \textbf{how does \textsc{Pram} perform in practice?} We present the results of the experiment in Figure~\ref{fig:rw_compare}, which evaluates the performance of our proposed \textsc{Pram} over publicly available real-world datasets. In total, we run three independent sub-experiments, one for each MCF allocation objective. 
First, the results reveal substantial room for improvement when flows are allocated by the prior ML-based approach (i.e., DRL), likely due to the well-known training challenges of RL with randomly initialized networks. Second, the gap between the optimal solution and the partitioned solution produced by POP is evident, making POP the second-worst method. Third, while LP-top often performs comparably to LP, it exhibits less stability because it focuses exclusively on optimizing large flows.
From the figure, we can also see that relative to their exact-demand counterparts, all predictive variants suffer performance degradation (no less than 10\%), with the note that the ML-based methods use historical demand as input. Finally, we find \textsc{Pram} achieves the second-best average performance across the three objectives, ranking just behind the LP solver that has a perfect future prediction, indicating it is able to achieve near-optimal by leveraging the reasoning power of MLMs. One surprising observation is that, in the case of minimizing link usage, \textsc{Pram}'s output configurations are even better than those of LPs (in particular, about 21\% lower in CERNET and 45\% in G\'{E}ANT). This may be because MLU exhibits stronger convexity properties—consistent with our theoretical results—and demonstrates more stable variation ranges, making it especially amenable to fine-tuning with MLMs. {HARP performs second only to ours in MLU because it is specifically optimized for this objective. However, its performance on the other two objectives is subpar. In a clear example, on the CERNET and G\'{E}ANT datasets, its average allocated concurrent flow is even inferior to the simple DRL algorithm. While Aether shows an improvement over DRL, it fails to overcome the inherent RL limitations, resulting in a overall trend that is still inferior to \textsc{Pram} and classical method. In conclusion, these findings suggest \textsc{Pram}'s good suitability for real-world deployment.}

\vspace{-0.2cm}
\subsection{Experiment on Large-Scale Datasets}
\label{subsec:lsm}
\vspace{-0.1cm}

\begin{figure}
    \centering
    \setlength{\belowcaptionskip}{-0.3cm}
    \includegraphics[width=1.\linewidth]{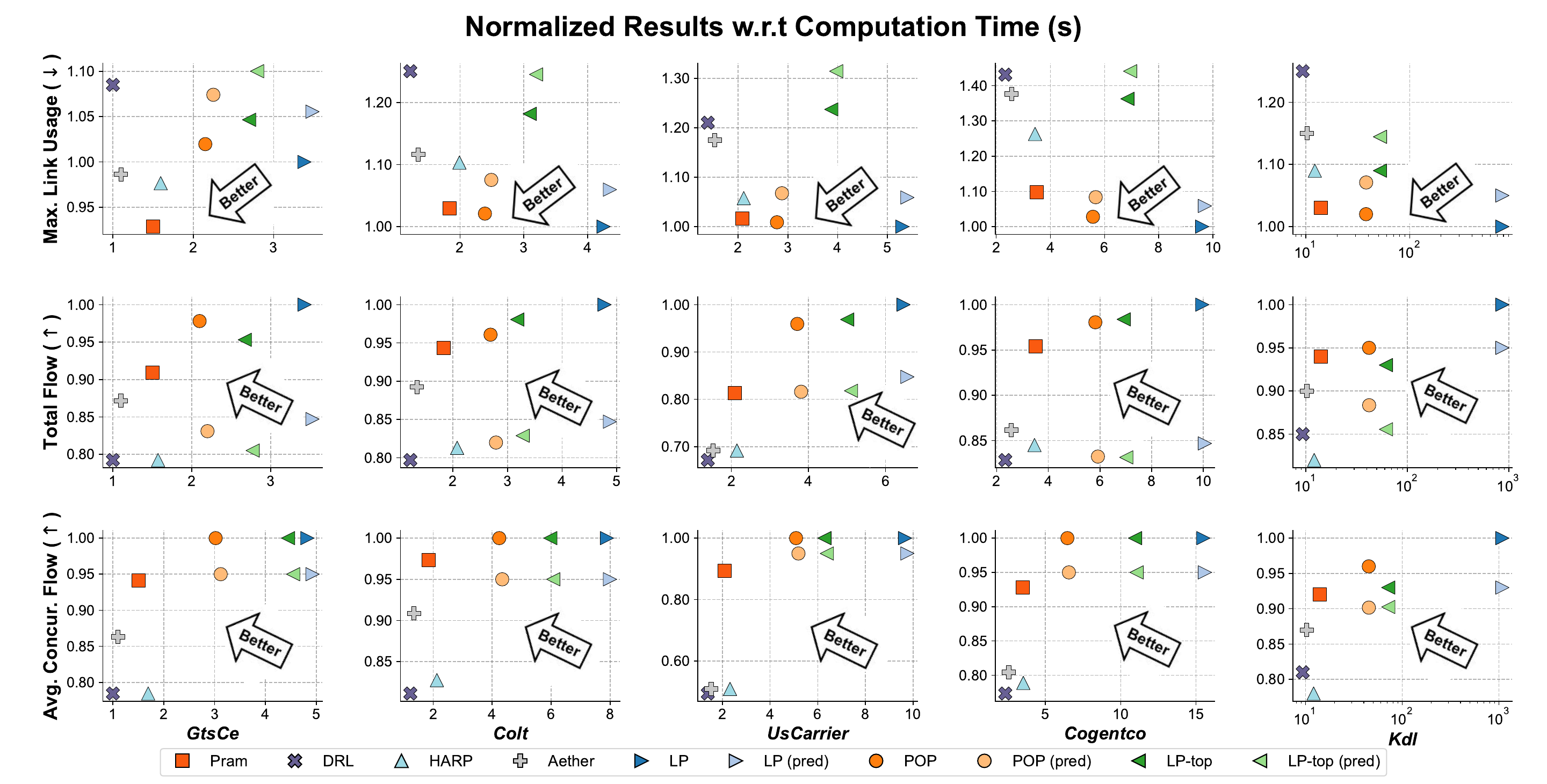}
    \caption{{Comparison of \textsc{Pram} with baselines on large-scale topologies using synthetic data. As the solution space expands, \textsc{Pram} demonstrates scalable performance, achieving comparable or superior results with reduced computation time.}}
    \label{fig:ls_compare}
\end{figure}

Next, we experiment on larger network topologies to explore the following question: \textbf{how does \textsc{Pram} work at scale?} Analogously, Figure~\ref{fig:ls_compare} compares online testing results of different methods together with their running time. As the network size grows, \textsc{Pram} demonstrates the intended scalability due to our dividing operation, delivering high-quality allocations with lower computation time. In particular, on the largest Kdl topology (754 nodes, 1,790 links, and over 2 million candidate path weights), \textsc{Pram} completes each flow allocation in under 25 seconds on average—5$\times$ faster than POP, 7$\times$ faster than LP-top, and 100$\times$ faster than LP. For demand prediction, our non-parametric approach introduces negligible overhead. DRL, benefiting from a small policy network without MLM’s backbone, achieves the fastest inference, but fails to approach near-optimal allocation quality. This is because the simple network also limits its expressive capability and lacks effective modeling of complex structures. 
In contrast, \textsc{Pram} satisfies > 90\% of the performance of the best-performing LP scheme that is close enough to the optimal. 
{Additionally, we outperform HARP and Aether on average by 6.1\% and 17.2\% on MLU, by 16.6\% and 7.3\% on total flow, and by 24.8\% and 13.5\% on concurrent flow. HARP’s recurrent optimization and invariant architecture limits its expressive capability, while Aether’s link-path graph lacks effective modeling of large-scale topology structures and correlations.} 
Since we inject Gaussian noise to mimic temporal demand variations, prediction-based schemes achieve slightly better results than with real-world traces, yet still lag behind \textsc{Pram}. In summary, ML-based methods are fast but inaccurate, while LP-based methods perform well but are inefficient. By striking a balance, \textsc{Pram} offers both efficiency and accuracy, making it a compelling choice for production MCF systems.

\vspace{-0.2cm}
\subsection{Generalization to Unseen Situations}
\vspace{-0.1cm}
In this subsection, we assess our model’s performance on unforeseen conditions, i.e., asking: \textbf{can \textsc{Pram} generalize well?} Due to space constraints, we hereafter report results primarily in terms of link utilization. We consider two most representative situations, where unforeseen changes affect the two inputs of the MCF problem: topology (link failure) and commodity demand (flow fluctuation).

\begin{figure}
    \centering
    \setlength{\abovecaptionskip}{-0.06cm} 
    \setlength{\belowcaptionskip}{-0.1cm}
    \subfigure[{MLU change with random link failures}\label{subfig:lf}]{
        \centering
        \includegraphics[width=0.54\linewidth]{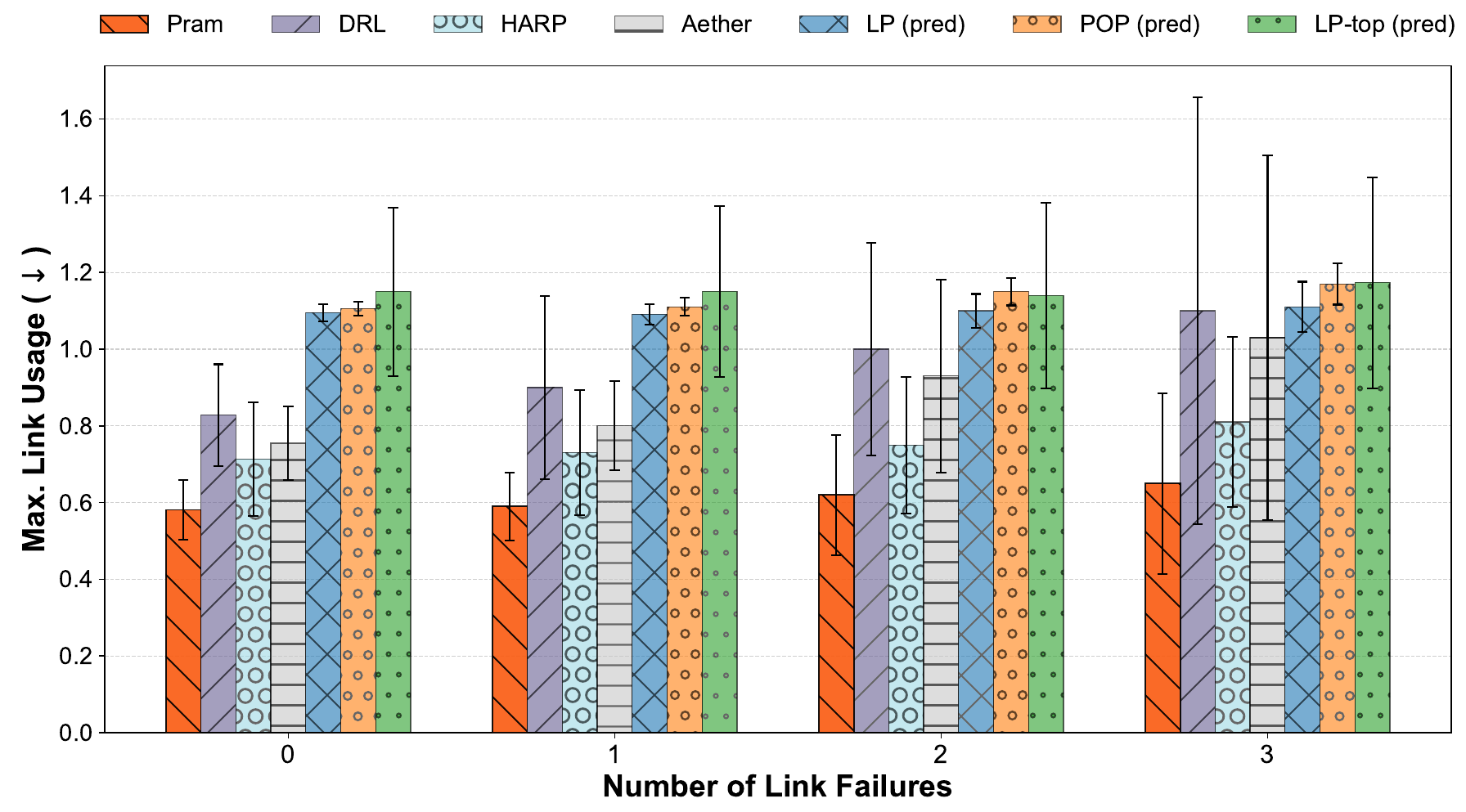}
    }
    \hfill
    \subfigure[{Performance decline with fluctuation}\label{subfig:ff}]{
        \centering
        \includegraphics[width=0.42\linewidth]{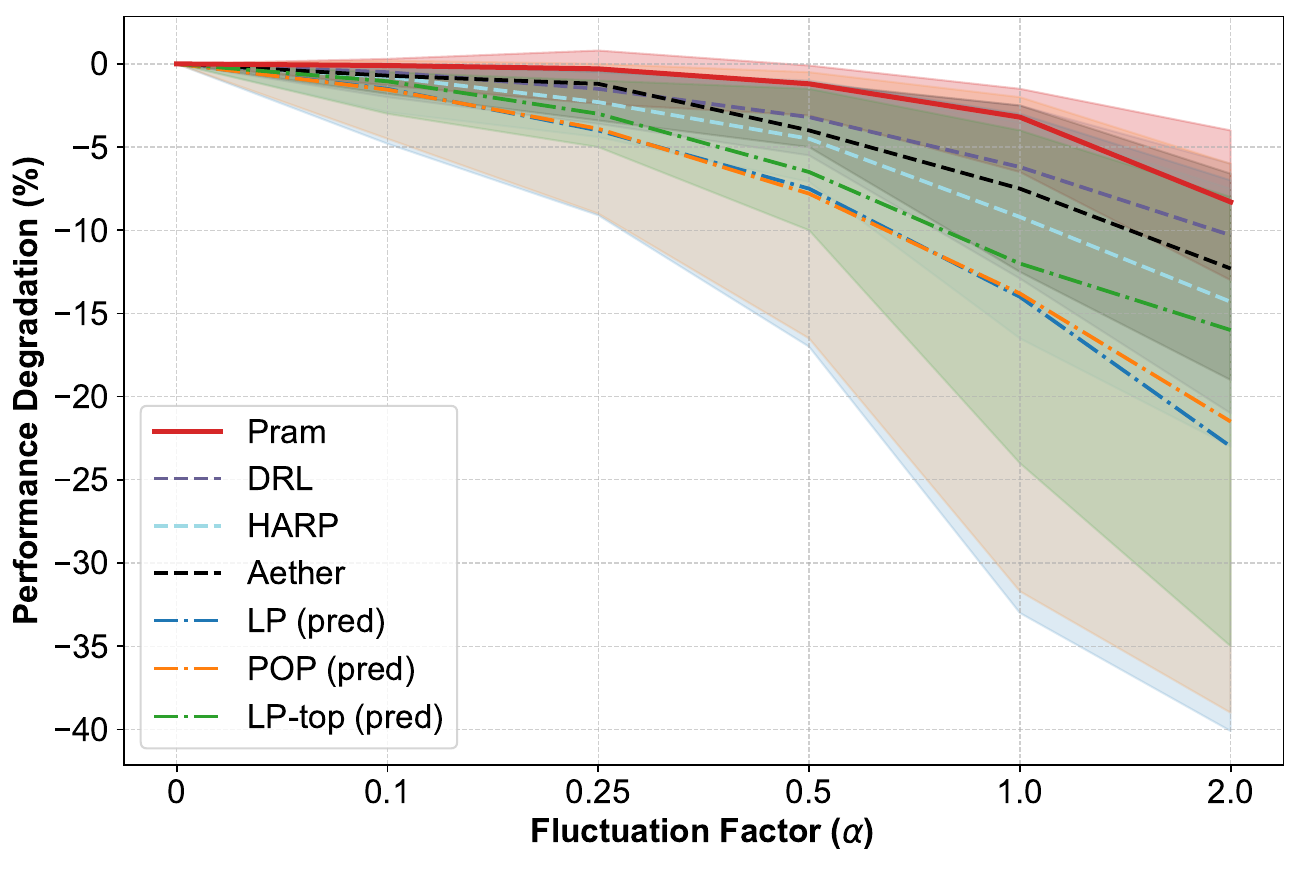}
    }
    \caption{{Handling different unforeseen conditions on G\'{E}ANT; The right plot shows the mean and deviation from 10th to 90th percentile. The performance variations of \textsc{Pram} are not significant.}}
    \label{fig:generalize}
\end{figure}

(1) \textit{Coping with link failures.} 
\textsc{Pram} addresses link failures by first reassigning the affected commodity flows using simple heuristics. Take the case of three candidate paths having weights of $(\frac{1}{2}, \frac{1}{5}, \frac{3}{10})$. If the first path experiences a failure, the adjusted weights would be proportionally reset as $(0, \frac{2}{5}, \frac{3}{5})$. Such reassignment requires minimal computation time, and the entire process operates at the millisecond-level scale, comparable to a standard inference step. In addition, the capacity of failed links will be marked as zero, and it will be depicted in the sub-image, which is then fed into \textsc{Pram}. Figure~\ref{subfig:lf} compares the performance of \textsc{Pram} with prediction-based methods (access to failures), in terms of link utilization, when different numbers of links fail in the G\'{E}ANT topology. {For a fair comparison, we applied the same correction on all ML-based baselines. It can be seen that, except for HARP, they all have showed a significant decline, with an approximately 7\% to 24\% more MLU upon increased link failures. 
Our results show that \textsc{Pram} still consistently outperforms all baselines without re-adaptation, indicating its generality across (transient) failures. }

(2) \textit{Reacting to flow fluctuation.} Our approach for increasing flow variability is as follows: we introduce independent noise into each source-destination pair by sampling from a Gaussian distribution $\mathcal{N}(\mu, \sigma_{s,d}^2)$, where $\mu=0$ and $\sigma_{s,d}$ is the standard deviation of the demand $\mathcal{D}_{s,d}$. These noises are then multiplied by a factor $\alpha$ to control the fluctuation intensity. {Also, figure~\ref{subfig:ff} shows that the performance degradation of \textsc{Pram} is no higher than 15\% even for $\alpha=2$ in the 90th percentile. Almost all the evaluated ML-based baselines handle small fluctuations effectively, but their performance declines noticeably as the fluctuations escalate to $\alpha > 0.5$. Meanwhile, the three classical methods struggle in this scenario, with a drop more than twice that of \textsc{Pram}.}

\vspace{-0.2cm}
\subsection{Ablation Study and Visualization}
\label{subsec:abl}
\vspace{-0.1cm}
At the end of this section, we would like to investigate the final question: \textbf{how is \textsc{Pram} effective?} We perform an ablation study to assess the impact of \textsc{Pram}’s key features on its overall performance.

\begin{figure}
    \centering
    \setlength{\abovecaptionskip}{-0.05cm} 
    \setlength{\belowcaptionskip}{-0.3cm}
    \subfigure[\# of trainable parameters with(out) partition\label{subfig:abp}]{
        \centering
        \includegraphics[width=0.465\linewidth]{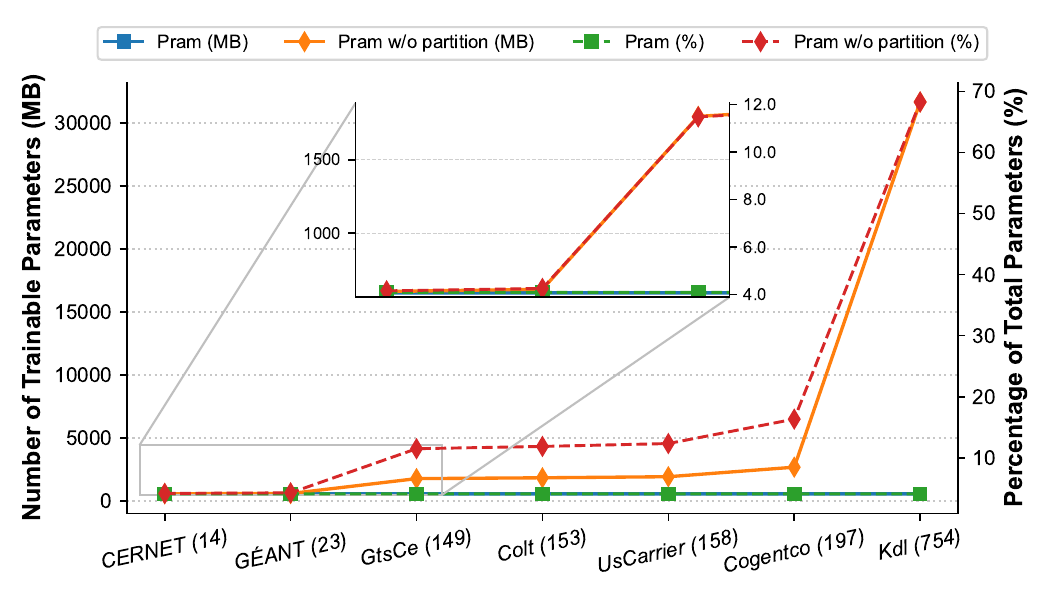}
    }
    \hfill
    \subfigure[Ablation study of \textsc{Pram}'s key components\label{subfig:abd}]{
        \centering
        \includegraphics[width=0.5\linewidth]{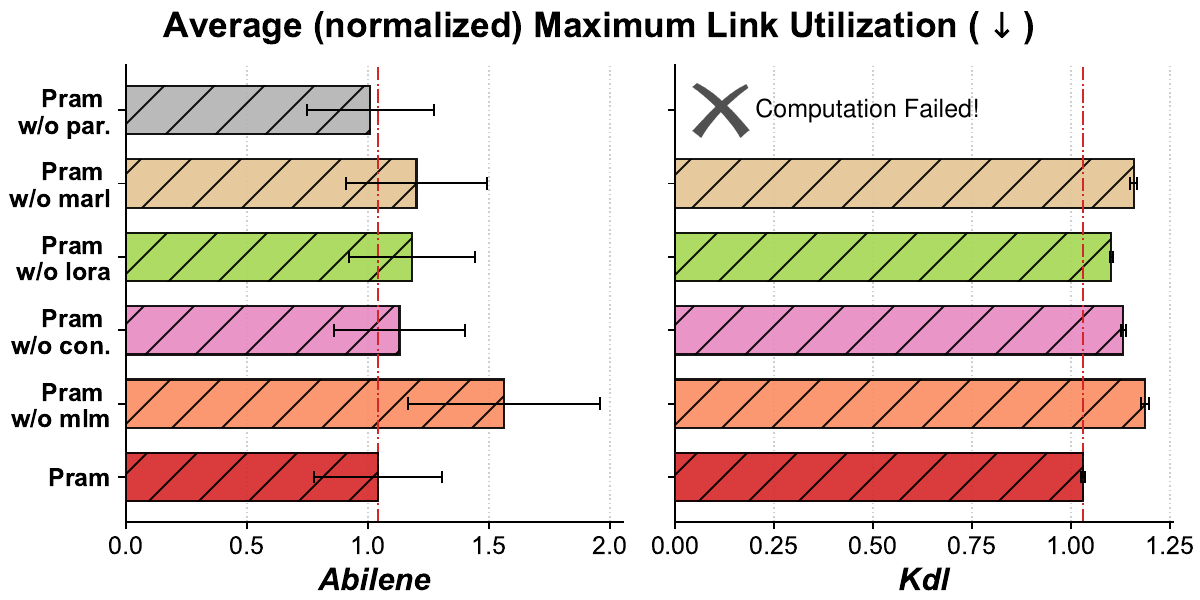}
    }
    \caption{Scalability and link utilization performance comparison between \textsc{Pram} and its variants.}
    \label{fig:ablation}
\end{figure}

(1) \textit{Impact of Problem Partition.} Figure~\ref{subfig:abp} compares the number of tunable parameters in \textsc{Pram} with its non-partitioned variant, i.e., \textsc{Pram} w/o partition (or w/o par. in short), across topologies of different sizes. Without partitioning, the parameter scale grows uncontrollably; for example, on the KDL topology, the number of trainable parameters is about  31,600 MB, which nearly matches that of full-parameter MLM adaptation. In contrast, \textsc{Pram} localizes its parameters within LoRA and context modules—both inherently lightweight as shown in the figure—so its model size hardly increases with network scale. These results underscore the critical role of problem partition in handling MCF optimizations with large solution spaces.

(2) \textit{Effect of Adaptation Choices.} To assess the importance of \textsc{Pram}’s core components, we retrain five variants on the Abilene and KDL topologies: \textcircled{\scriptsize 1} \textsc{Pram} w/o mlm: the MLM backbone is removed, with sub-topologies handled by a GNN and demands by FC layers; \textcircled{\scriptsize 2} \textsc{Pram} w/o con.: the global context is excluded from the input prompt; \textcircled{\scriptsize 3} \textsc{Pram} w/o lora: all low-rank adapters are removed from the backbone; \textcircled{\scriptsize 4} \textsc{Pram} w/o marl: the model is directly fine-tuned with the objective function end to end; \textcircled{\scriptsize 5} \textsc{Pram} w/o par.: identical to \textsc{Pram} w/o partition. The results are presented in Figure~\ref{subfig:abd}. 

\begin{wrapfigure}[13]{r}{0.46\textwidth}
  \vspace{-0.75cm}
  \setlength{\abovecaptionskip}{-0.01cm} 
  \begin{center}
  \includegraphics[width=1.\linewidth]{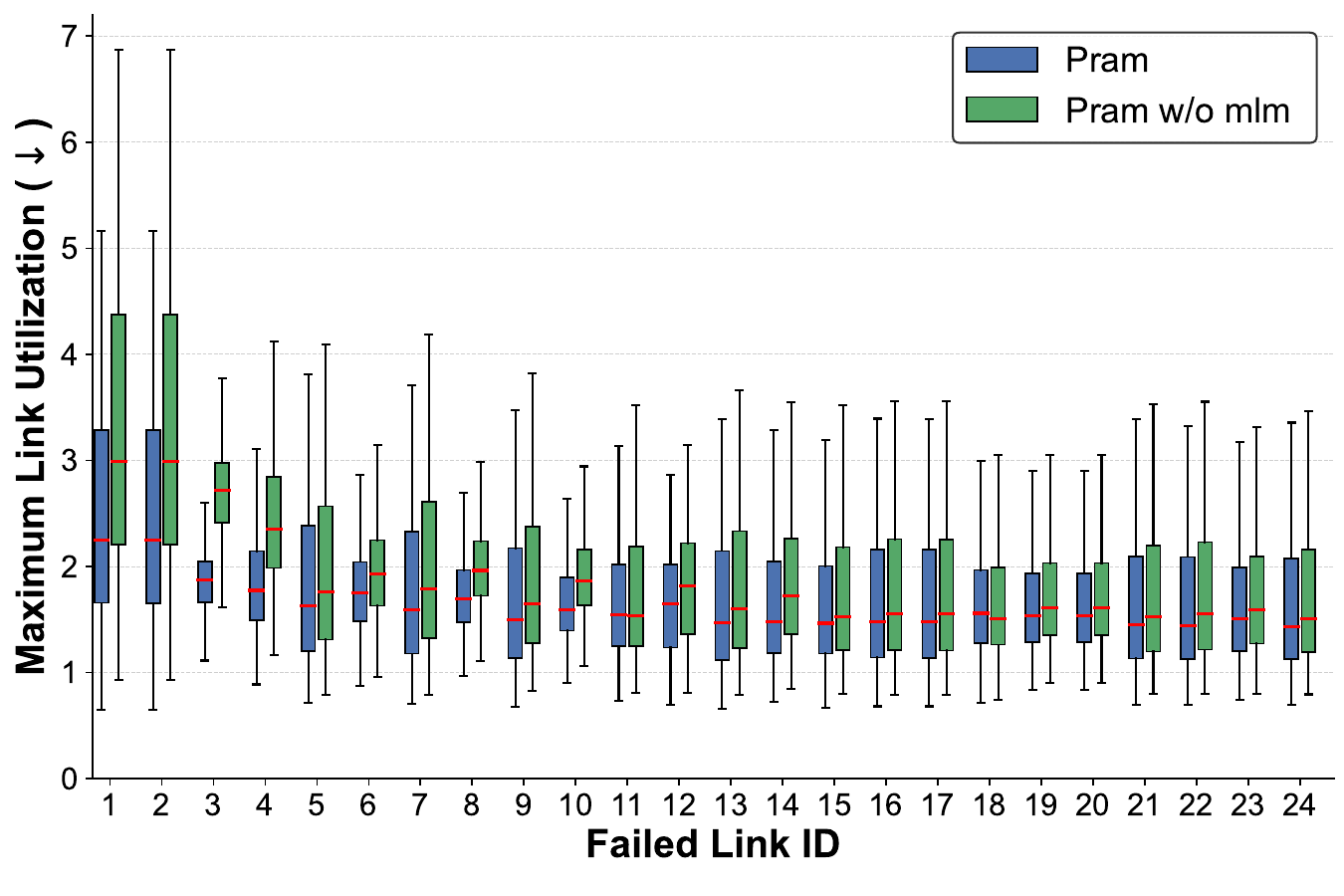}
\caption{{(un-normalized) MLU under different single-link failure scenarios for G\'{E}ANT.}}
\label{single_failure}
\end{center}
\end{wrapfigure}

We can see that using MLM significantly improves performance in minimizing MLU, especially for Abilene. Moreover, all three designs (i.e., LoRA, learned context and multi-agent RL) used in our adaptation contribute to the optimization of the objective function. Although \textsc{Pram} w/o partition achieves a lower MLU on the smaller network, it is too ``gigantic'' to make decisions on large Kdl. 
{As a generality test, moreover, we evaluate the distinct single-failure scenarios and and artificially inject the 24 most influential failures to the G\'{E}ANT network. Figure~\ref{single_failure} compares the performance of \textsc{Pram} with its variant excluding MLM. Across all failure scenarios, \textsc{Pram}’s median MLU ranges from 1.4 to 2.3, while that of \textsc{Pram} w/o MLM ranges from 1.5 to 3 which is clearly not as effective under link failures because rescaling is not as effective as recomputing routes. It confirms \textsc{Pram}’s superiority to handle topology change not observed in the training data, by leveraging the pre-trained knowledge of MLMs.}

\begin{figure}
    \centering
    \setlength{\abovecaptionskip}{-0.05cm} 
    \setlength{\belowcaptionskip}{-0.3cm}
    \subfigure[Cross-attention map\label{subfig:heatmap1}]{
        \centering
        \includegraphics[width=0.26\linewidth]{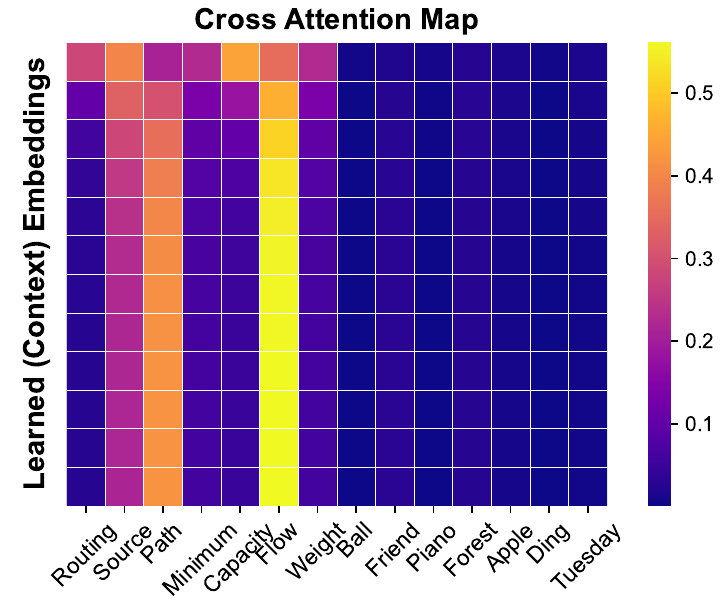}
    }
    \hfill
    \subfigure[Visualization of ten different learned text prototypes\label{subfig:heatmap2}]{
        \centering
        \includegraphics[width=0.703\linewidth]{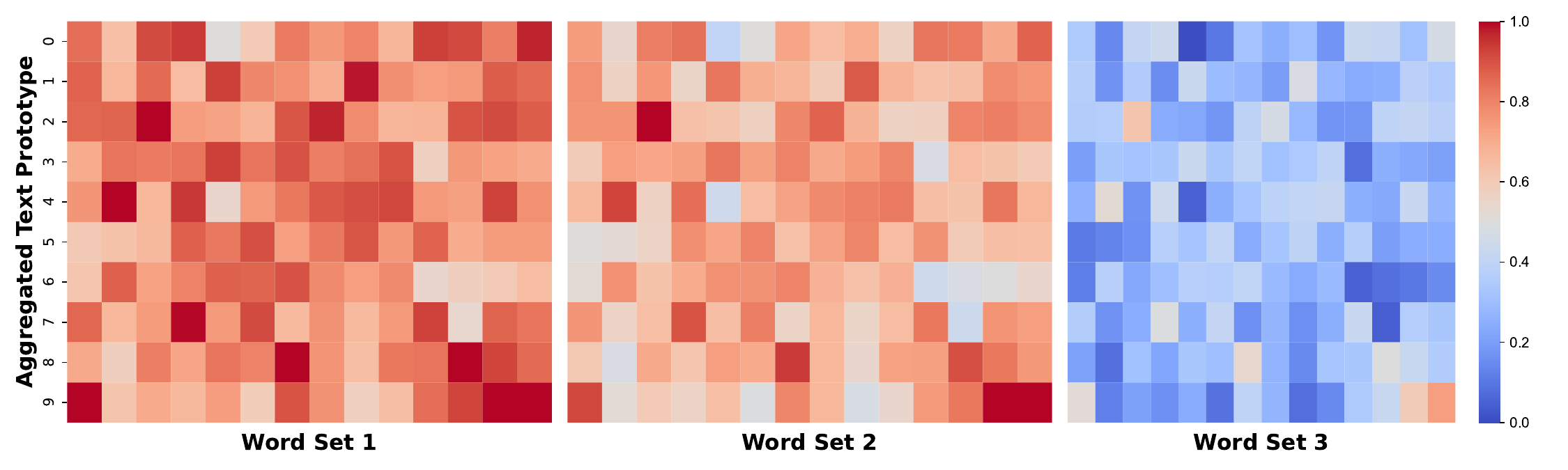}
    }
    \caption{Learned representation interpretation of the context embeddings. In (a), each row represents a context embedding, while columns correspond to selected words. In (b), each row represents a linear aggregated prototype, while columns correspond to words from different sets.}
    \label{fig:heatmaps}
\end{figure}

(3) \textit{Visualization of Learned Representation.}
We present an interpretation of the learned context module on the Abilene dataset in Figure~\ref{fig:heatmaps}. As shown in Figure~\ref{subfig:heatmap1}, we visualize the cross-attention map of a single head by replacing text prototypes with token embeddings of words either related or unrelated to the task. Each cell reflects the relevance between a row and a column (with brighter values indicating stronger correlation). To further examine how context is represented through different prototype combinations, we randomly select ten prototypes and visualize them in Figure~\ref{subfig:heatmap2}. For this analysis, we consider three word sets — word set 1 (commodity):\{``$\mathsf{Flow}$'', ``$\mathsf{Demand}$'', $\dots$\}, word set 2 (topology): \{``$\mathsf{Capacity}$'', ``$\mathsf{Node}$'', $\dots$\}, and word set 3 (others): \{``$\mathsf{Monday}$'', ``$\mathsf{February}$'', $\dots$\}. From the heatmaps, we make two key observations: first, prototypes exhibit strong relevance to words describing MCF problems; and second, \textsc{Pram} successfully aligns its learned context with task-relevant word embeddings. So we can reasonably conclude that context satisfies our assumption.

\vspace{-0.2cm}
\section{Conclusion}
\label{sec:conclusion}
\vspace{-0.1cm}
In this paper, we present \textsc{Pram}, a novel MCF solution that takes advantage of MLMs to optimize practical flow allocation performance. The MLM is equipped with a partition method to ``divide'' complexity and transform the original problem into a multi-agent decision-making problem. Then we design an adaptation (and communication) framework to ``harmonize'' the learning of these agents. We also theoretically study the effectiveness of \textsc{Pram} in well-studied theoretical models. Our evaluations demonstrate that the \textsc{Pram} is accurate, fast and robust, highlighting the potential of leveraging MLMs for ``shooting'' MCF problems. However, fine-tuning \textsc{Pram} remains resource-intensive, even after truncating the backbone model. {In addition, we acknowledge that the visual encoding scheme may introduce an inevitable bias, as discussed in Appendix~\ref{app:evu} \& \ref{app:sps}.} Due to space constraints, we defer discussion of several important limitations regarding \textsc{Pram} and potential future directions to Appendix~\ref{sec:limit}.

\section{Ethics Statement}\label{study_impact} We did not use any non-public data, unauthorized software, or API in our paper. In particular, the measured demand matrices, used in our evaluation, are aggregate counters between pairs of at the granularity of minutes (or coarser). They do not contain user IP addresses or packet contents. Thus, there are no privacy or other related ethical concerns.

\section{Reproducibility Statement} The paper fully discloses all the information needed to reproduce the main experimental results of the paper to the extent that it affects the main claims and conclusions of the paper. Specifically, we detailed all the experimental setups and implementation details in \S~\ref{sec:evaluation} and Appendix~\ref{app:setup} \&~\ref{app:pram_imp}. Moreover, our experimental codebase is publicly available at \href{https://github.com/Y-debug-sys/Pram/}{\texttt{https://github.com/Y-debug-sys/\textsc{Pram}}}.

\section{Acknowledgment} This work is supported in part by the Key Research and Development Program of Zhejiang Province under Grant No. 2025C02103, in part by the National Natural Science Foundation of China under Grant No. 92373205 and No. 62572168, in part by the National Key Research and Development Program of China No. 2023YFB4404401, and in part by the Anhui Provincial Natural Science Fund (2508085MF151).

\bibliography{main}
\bibliographystyle{iclr2026_conference}

\FloatBarrier
\clearpage


\appendix

\part*{Appendix} %

\startcontents[appendices]
\printcontents[appendices]{l}{1}{\section*{\Large{Table of Contents}}
{\hrule height 1.5 pt}
\setcounter{tocdepth}{2}}
\par\bigskip{\hrule height 1.5 pt}

\newpage

\section{Related Works}
\vspace{-0.1cm}

\paragraph{Solving Multi-Commodity Flow Problems} 
The multi-commodity flow (MCF) is a very classic resource allocation problem widely found in various fields such as network~\citep{wang1999explicit,applegate2003making,jain2013b4,kumar2018semi,abuzaid2021contracting,krishnaswamy2022decentralized,zhao2024optimal}, power~\citep{blaauwbroek2015decentralized,chalvatzis2019sustainable}, transportation~\citep{weiner1987urban,schrijver2002history,wey2007using,noor2020survey}, logistics~\citep{mandell1991modelling,balcik2014multi,yan2022time}, could services~\citep{chang2010optimal,akhter2016energy} and so on, playing a critical role in service and provider infrastructures. This part will primarily discuss works related to network traffic engineering (TE), which is the most extensively studied MCF problem. For the adaptive TE, LP solvers such as Gurobi are faster in practice, perhaps because they take larger steps towards the optimal allocation, but are still slow for the problem sizes considered here. Unlike adaptive TE, oblivious routing often optimizes worst-case link utilization across all possible demand matrices (DMs)~\citep{racke2002minimizing,applegate2003making,azar2003optimal}. To fit current demands and react to failures, SMORE~\citep{kumar2018semi} proposed semi-oblivious routing by dynamically adapting sending rates. An improved work is COPE~\citep{wang2006cope}, which achieves optimization within the space spanned by historical DMs without compromising worst-case performance bounds. However, this combined approach of generalizing oblivious routing and performing multi-matrix optimization incurs substantial computational overhead (especially for large-scale systems shown in Figure~\ref{fig:submarine}). Although NCFlow~\citep{abuzaid2021contracting} and POP~\citep{narayanan2021solving} significantly speed up multi-commodity flow computations via problem decomposition and parallelization, their methods inherit the limitations of prediction-dependent approaches. Nowadays, researchers have also explored ML-based allocation~\citep{xiao2021leveraging}. These methods typically use demand-prediction and RL approaches~\citep{valadarsky2017learning,xu2018experience,zhang2020cfr,hope2021gddr,bernardez2021machine}. For example, CFR-RL~\citep{zhang2020cfr} learns a policy to select critical flows for each given DM automatically. To explore the feasibility of combining GNNs with DRL, GDDR~\citep{hope2021gddr} and MARL-GNN~\citep{bernardez2021machine} transformed TE into online optimization problems over graphs to handle topologies of various structures. They represent the state-of-the-art recently, yet still exhibit obvious limitations—it often comes with high computational complexity and sensitivity to parameter settings. More recently, \textsc{Lmte}~\citep{yuan2026putting} leverages large language models to solve TE problems, but it was not intentionally designed or optimized for large-scale systems. Compared to the above methods, \textsc{Pram} is not only easy to train, but also several orders of magnitude faster than them when solving large-scale MCF problems.

\begin{figure}
    \centering
    \setlength{\belowcaptionskip}{-0.45cm}
    \includegraphics[width=1.\linewidth]{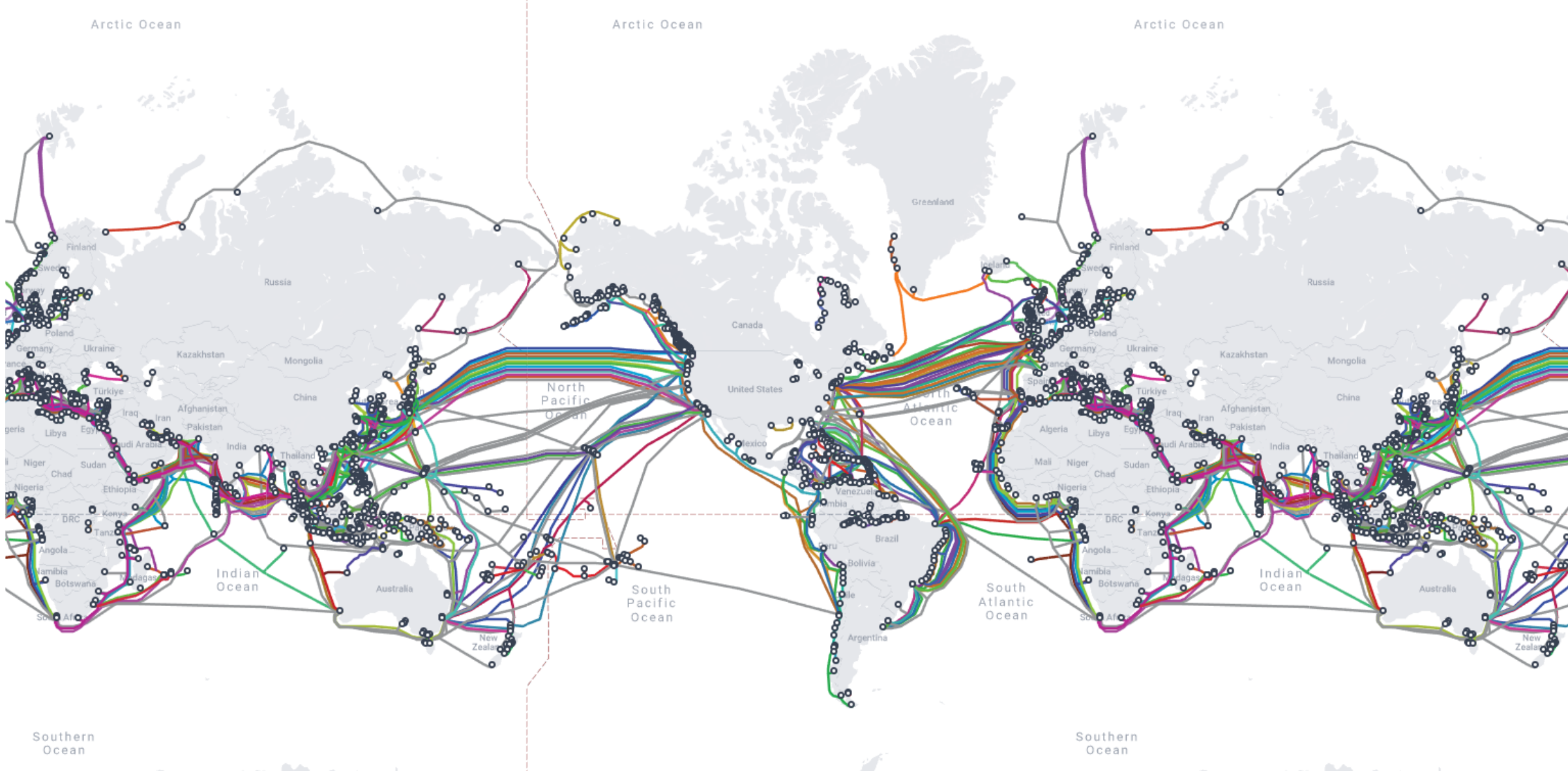}
    \caption{Submarine cables serve as choke points in large-scale topologies for network communication, providing a critical yet expensive infrastructure; the figure is drawn from \href{https://www.submarinecablemap.com/}{this URL}.}
    \label{fig:submarine}
\end{figure}

\vspace{-0.1cm}
\paragraph{Investigating Reasoning Ability of Language Models} Recent advancements in large language models have exhibited exceptional proficiency across diverse NLP tasks. Commercial LLMs such as ChatGPT~\citep{achiam2023gpt} and Gemini~\citep{team2024gemini} currently embody the state-of-the-art in reasoning capabilities. \cite{webb2023emergent} first claimed that LLMs like GPT-3 have acquired an emergent ability to find zero-shot solutions to a broad range of analogy problems. \cite{gonzalez2024does} extended it beyond mathematics to vision domains. And \cite{lin2024transformers} investigated the application of in-context learning (ICL) in pre-trained models for decision-making tasks, while \cite{shi2024transformers} explored its efficacy in game-playing scenarios. For theoretical analysis of LMs' reasoning capabilities, \cite{wang2024alpine} shows that Transformers learn planning via path-finding by encoding adjacency/reachability matrices. Subsequently, \cite{sanford2024understanding} sharply separates Transformers from other architectures and provides width and depth separations via graph algorithms. Moreover, \cite{li2024chain} demonstrates that chain-of-thought (CoT) enables bounded-depth transformers to solve serial computation problems, and \cite{chen2024can} studied the capabilities of the transformer architecture with varying depth. \cite{huang2025transformers} proved that a transformer with CoT prompting can learn to perform multi-step gradient descent autoregressively. Several works have also established the existence of deep transformers capable of implementing gradient descent across different domains~\citep{akyurek2023learning,ahn2023transformers,mahankali2024one,nichani2024transformers,shen2024position}. In the theoretical part of our study, we also leverage the MLMs' advanced reasoning power to guide and assist in resource allocation. 

\vspace{-0.1cm}
\paragraph{Adapting Language Models for TS or Graph} Here, we review previous work that employs LLMs for time series (TS) and graph applications. The first category is to boost time series related tasks~\citep{jin2024position}, such as forecasting, imputation, and anomaly detection. To adapt LLMs for time series forecasting, \cite{chang2023llm4ts} employed a two-stage approach: fine-tuning GPT-2's transformer module and redesigning positional encoding. Besides, \cite{zhou2023one} propose a parameter-efficient adaptation method for pre-trained language models, named “One Fits All”, that preserves their original self-attention and feedforward architectures. Time-LLM~\citep{jin2024time} further proposed an embedding reprogramming technique to align the language model's word embedding space with time-series representations. Moreover, \cite{zhou2025can} conducted a comprehensive investigation into LLMs’ understanding of time series data and their anomaly detection capabilities. 
The second category encompasses graph tasks, which typically include graph learning and graph algorithms. For node classification tasks in text-attributed graphs, GraphGPT~\citep{tang2024graphgpt} processed the input graph through a GNN encoder for tokenization before LLM integration. Following this paradigm, MolCA~\citep{liu2023molca} applied similar GNN-based encoders to handle molecular structures when predicting their properties. To enable LLMs to reason over graph-structured data, NLGraph~\citep{wang2023can} and Talk like a Graph~\citep{fatemi2024talk} introduced a natural language translation step, followed by few-shot prompting or CoT techniques for effective inference. LLaGA~\citep{chen2024llaga} reformatted the center node and its neighborhood into a structure-aware textual representation, and \cite{wu2024can} conducted a pioneering investigation into graph-learning methodologies for task planning in language agents. Unlike existing approaches, this paper advances MCF solutions as a novel framework to bridge the gap between these two research trajectories.

\vspace{-0.3cm}
\section{Detailed Experimental Setup}
\label{app:setup}
\vspace{-0.2cm}
In this section, we provide the detailed experimental setup to ensure clarity of our paper. We first outline the objectives of the experiments, followed by the datasets used in our study. We then introduce the baseline methods for comparison and, finally, describe the path selection strategy.

\vspace{-0.3cm}
\subsection{MCF Objectives}\label{subsec:obj}
\vspace{-0.1cm}
Using the notations introduced in \S~\ref{sec:bm}, we define three optimization objectives as follows.

(1) \textbf{Maximum Link Utilization (MLU)}: It refers to the maximum value of all link utilization ratios in the topology, which is a classical allocation objective. A lower link utilization suggests greater resilience. Equation~(\ref{eqn:mlu_formulte}) summarizes the formulation.
    \begin{equation}
        \begin{aligned}
            \textbf{minimize } & \quad \alpha = \max_{e \in \mathcal{E}} \frac{f_e}{c_e} \\ 
            \textbf{subject to} & \quad f_e = \sum_{s,t \in V} \sum_{p \in P_{s,t}} \sum_{e \ni p} \mathcal{D}_{s,t} \cdot r_p, \quad \forall e \in \mathcal{E} \\ 
            & \quad r_p \ge 0, \quad \forall s,t \in \mathcal{V}, \ \forall p \in P_{s,t}
            \\ 
            & \quad \sum_{p \in P_{s, t}} r_p = 1.0, \quad \forall s,t \in \mathcal{V} 
        \end{aligned}
        \label{eqn:mlu_formulte}
    \end{equation}
    
(2) \textbf{Maximum Total Flow (MTF)}: The objective computes a policy that satisfies the demand and capacity constraints while maximizing permissible flow. We here additionally define a pair-wise capacity $\omega_p$ that represents the maximum permissible flow between $s$ to $t$ along the tunnel $p$. $\omega_p$ is also an optimizable component of the configuration. Then the optimization problem is formulated as follows.
    \begin{equation}
        \begin{aligned}
            \textbf{maximize } & \quad \alpha = \sum_{s,t \in \mathcal{V}} \sum_{p \in P_{s,t}} f_p \\ 
            \textbf{subject to} & \quad f_p \le \mathcal{D}_{s,t} \cdot r_p, \quad \forall s,t \in \mathcal{V}, \ \forall p \in P_{s,t} \\ 
            & \quad f_p \le \omega_p, \quad \forall s,t \in V, \ \forall p \in P_{s,t} \\ 
            & \quad r_p \ge 0, \ \omega_p \ge 0, \ f_p \ge 0, \quad \forall s,t \in \mathcal{V}, \ \forall p \in P_{s,t} \\
            & \quad \sum_{s,t \in V} \sum_{p \in P_{s,t}} \sum_{e \ni p} \omega_p \le c_e, \quad \forall e \in \mathcal{E} \\
            & \quad r_p \ge 0, \quad \forall s,t \in \mathcal{V}, \ \forall p \in P_{s,t} 
        \end{aligned}
    \end{equation}
    
(3) \textbf{Maximum Concurrent Flow (MCF)}: In maximum-concurrent-flow problems, the objective is to compute a configuration that maximizes the total network throughput. Specifically, it depends on a surrogate bound $\alpha$, which is an $\alpha$-fraction of each $\mathcal{D}_{s,t}$ is routed concurrently. By defining $f_{s,t}^{(p)}$ as the actual commodity flow on the tunnel $p$ between $s$ and $t$, the surrogate problem can be formally expressed as 
    \begin{equation}
        \begin{aligned}
            \textbf{maximize } & \quad \alpha \in \left[0, 1\right] \\ 
            \textbf{subject to} & \quad \sum_{p \in P_{s,t}} f_{s,t}^{(p)} \ge \alpha \cdot \mathcal{D}_{s,t}, \quad \forall s,t \in \mathcal{V} \\ 
            & \quad \sum_{s,t \in V} \sum_{p \in P_{s,t}} \sum_{e \ni p} f_{s,t}^{(p)} \le c(e), \quad \forall e \in \mathcal{E} \\ 
            & \quad f_{s,t}^{(p)} \ge 0, \quad \forall s,t \in \mathcal{V}, \ \forall p \in P_{s,t} \\
            & \quad f_{s,t}^{(p)} \le \mathcal{D}_{s,t} \cdot r_p, \quad \forall s,t \in \mathcal{V}, \ \forall p \in P_{s,t} \\ 
            & \quad r_p \ge 0, \quad \forall s,t \in \mathcal{V}, \ \forall p \in P_{s,t} 
        \end{aligned}
    \end{equation}
However, since in many cases the minimum concurrent flow is either very close to zero or exactly zero and thus not representative, in the experiments we instead use the average satisfaction ratio across all flows to capture the intended fairness effect. 

\subsection{Experimental Datasets}

In this subsection, we provide detailed information on the datasets used in all experiments. Note that all these datasets and topologies are publicly available, and their corresponding access links can be found in the references. 

\subsubsection{Real-World Datasets} 

\begin{figure}
    \centering
    \subfigure[Meta DB]{
        \centering
        \includegraphics[width=0.18\linewidth]{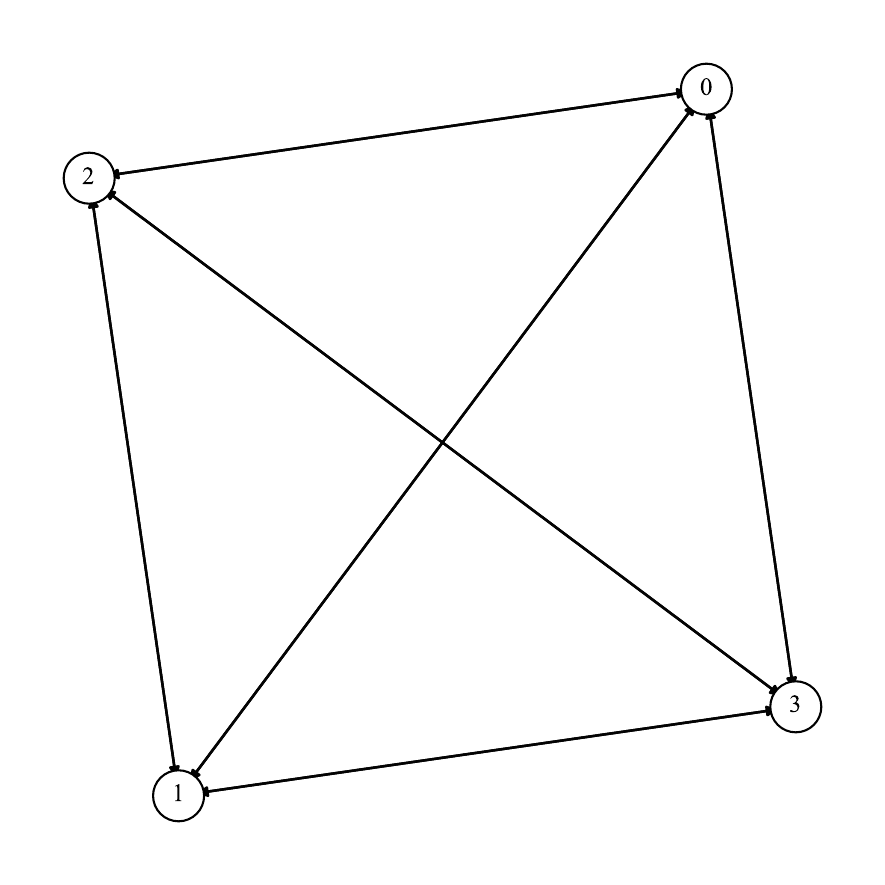}
    }
    \subfigure[Meta WEB]{
        \centering
        \includegraphics[width=0.18\linewidth]{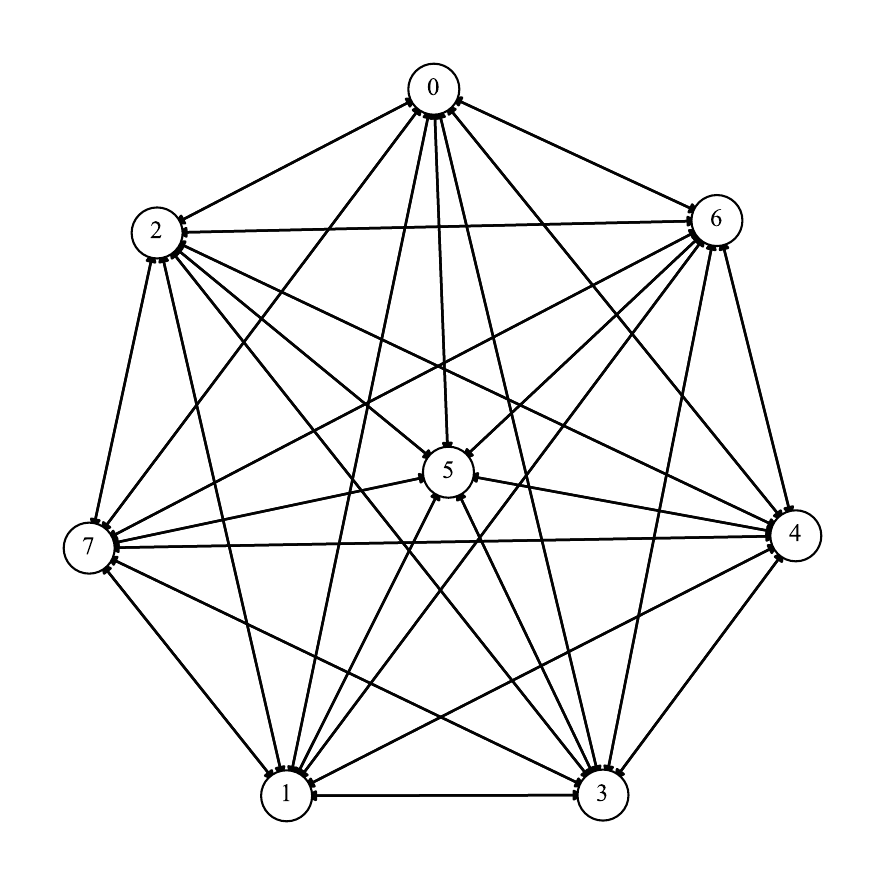}
    }
    \subfigure[Abilene]{
        \centering
        \includegraphics[width=0.18\linewidth]{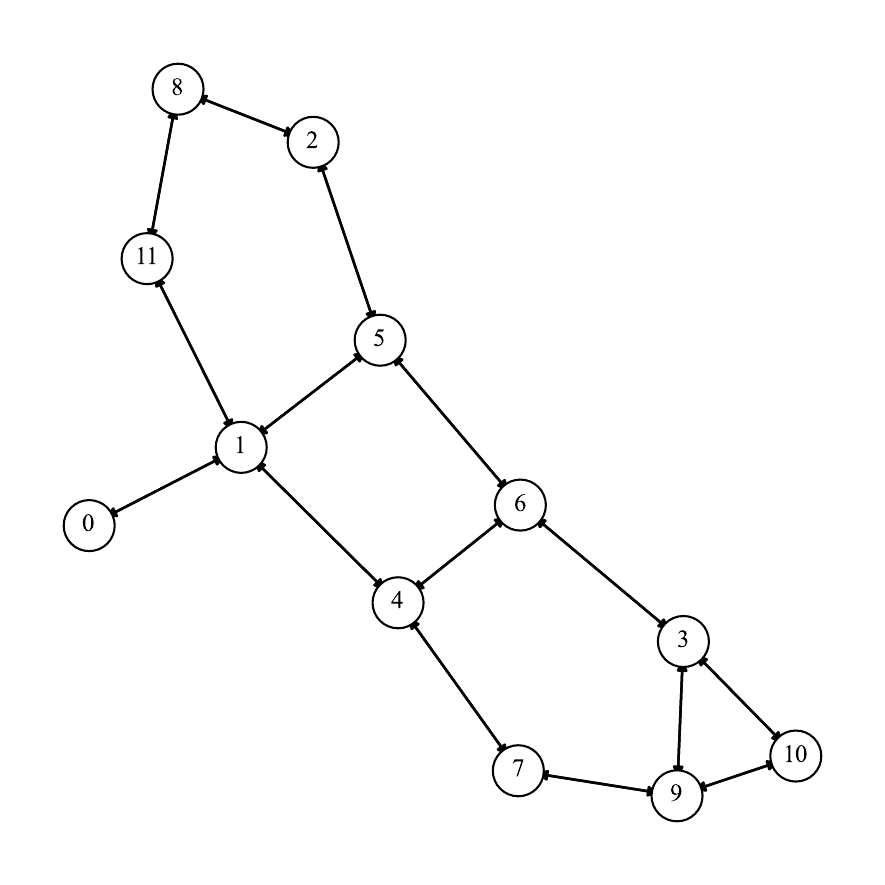}
    }
    \subfigure[CERNET]{
        \centering
        \includegraphics[width=0.18\linewidth]{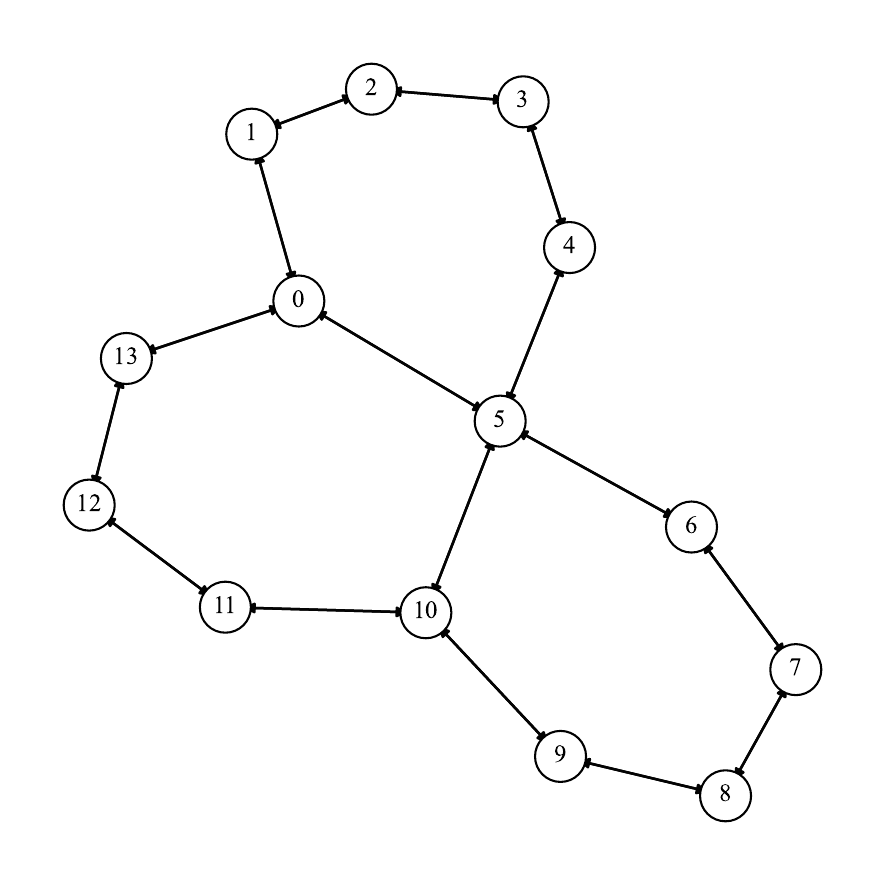}
    }
    \subfigure[G\'{E}ANT]{
        \centering
        \includegraphics[width=0.18\linewidth]{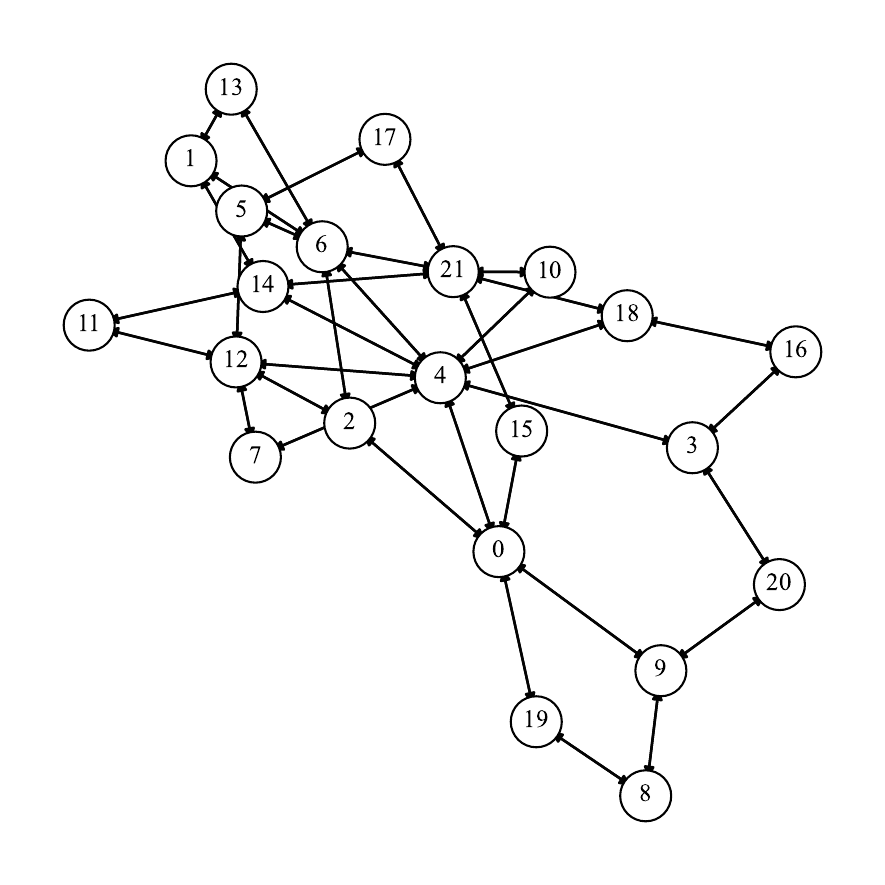}
    }
    \caption{Five real-world network topologies of various systems.}
    \label{fig:rw_topo}
\end{figure}

\begin{table}[htbp]
    \centering
    \begin{tabular}{c || c c || c c c}
        \toprule
        \textbf{Topology} & \textbf{\# Nodes} & \textbf{\# Links} & \textbf{\# Total DMs} & \textbf{Granularity} & \textbf{Collection Time} \\
        \midrule
        {{Meta DB}} & 4 & 12 & 10,000 & \textbackslash{} & \textbackslash{} \\
        {{Meta WEB}} & 8 & 56 & 10,000 & \textbackslash{} & \textbackslash{} \\
        \midrule
        {{Abilene}} & 12 & 30 & 48,384 & 5 minutes & March $\sim$ September 2004 \\
        {{CERNET}} & 14 & 32 & 10,000 & 5 minutes & February $\sim$ March 2013 \\
        {{G\'{E}ANT}} & 23 & 74 & 10,733 & 15 minutes & January $\sim$ April 2004 \\
        \bottomrule
    \end{tabular}
    \caption{Real-world network topology statistics}
    \label{tab:network_topology}
\end{table}

We first consider two small-scale social network datasets, \textbf{Meta DB} and \textbf{Meta WEB}~\citep{roy2015inside}. The Meta DB cluster consists of MySQL servers that store user data and process SQL queries, while the Meta WEB cluster handles web traffic. Both clusters operate at the Point of Delivery (PoD) level, which is fully connected. We also use three real-world WAN datasets: US Internet2 Network (\textbf{Abilene},~\cite{zhang2005network}), Pan-European Research Network (\textbf{G\'{E}ANT},~\cite{uhlig2006providing}) and China Education and Research Network (\textbf{CERNET},~\cite{li2011china}), each with public DMs captured at different snapshots. Table~\ref{tab:network_topology} presents key statistics (i.e., number of nodes, links, and samples) for these datasets. Besides, we visualize their topologies in Figure~\ref{fig:rw_topo}. All DMs are normalized by ten times the maximum link capacity of their topologies for DNN training and testing (hereinafter the same).

\subsubsection{Synthetic Datasets}
\label{subsec:dist}

To evaluate \textsc{Pram}’s ability to scale to larger systems, we also extracted several topologies from the Internet Topology Zoo~\citep{knight2011internet}. They are \textbf{GtsCe} with 149 nodes and 386 edges, \textbf{Colt} with 153 nodes and 354 edges, \textbf{UsCarrier} with 158 nodes and 378 edges, \textbf{Cogentco} with 197 nodes and 486 edges, and \textbf{Kdl} with 754 nodes and 1790 edges. Figure~\ref{fig:topology_zoo} shows visualizations for the five used topologies; note that we set \textit{1,000} for all link capacities. Also note that we only take the largest strongly connected component in these topologies for our evaluations.

\begin{figure}
    \centering
    \subfigure[GtsCe \textit{(149, 386)}]{
        \centering
        \includegraphics[width=0.233\linewidth]{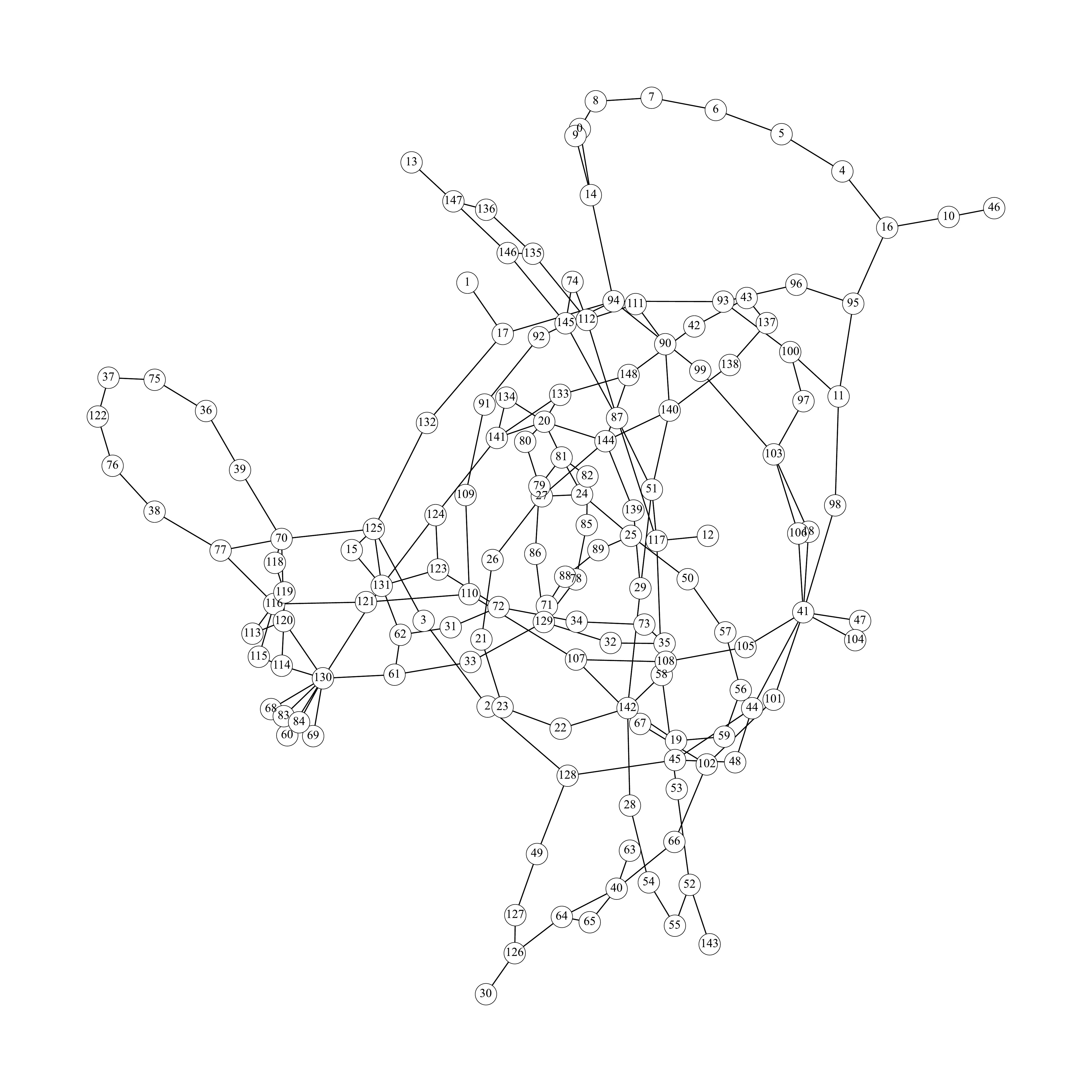}
    }
    \subfigure[Colt \textit{(153, 354)}]{
        \centering
        \includegraphics[width=0.233\linewidth]{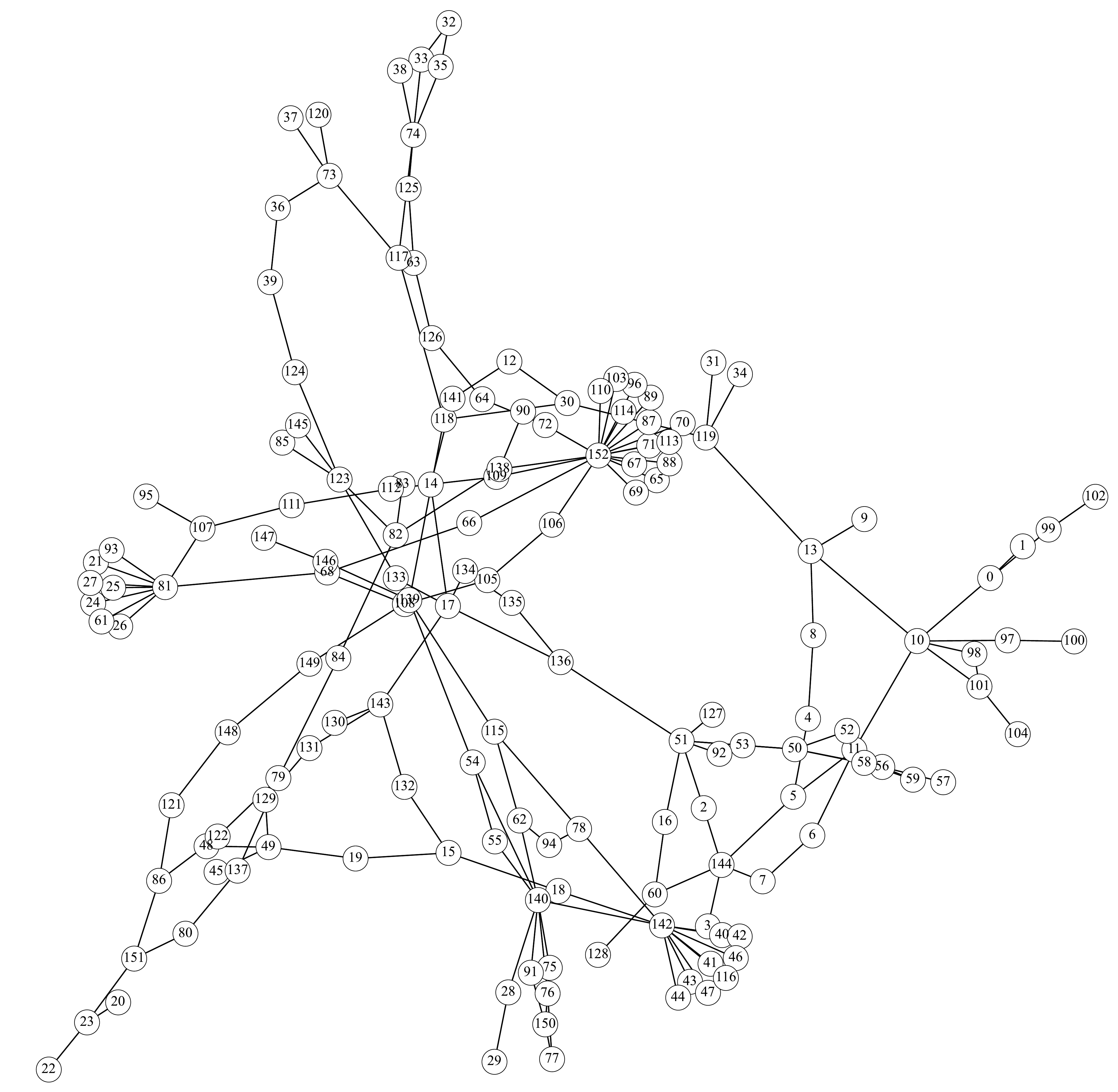}
    }
    \subfigure[UsCarrier \textit{(158, 378)}]{
        \centering
        \includegraphics[width=0.233\linewidth]{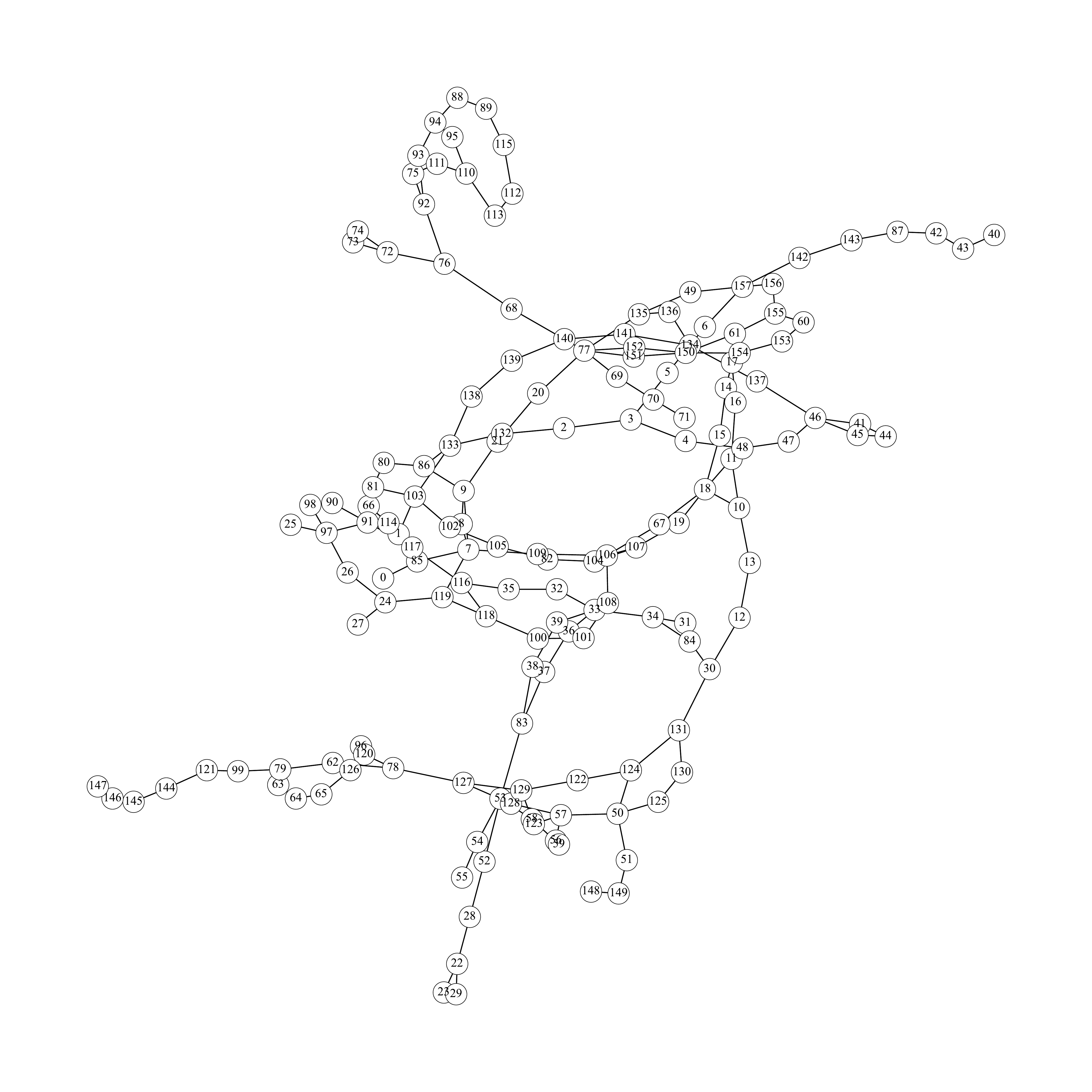}
    }
    \subfigure[Cogentco \textit{(197, 486)}]{
        \centering
        \includegraphics[width=0.233\linewidth]{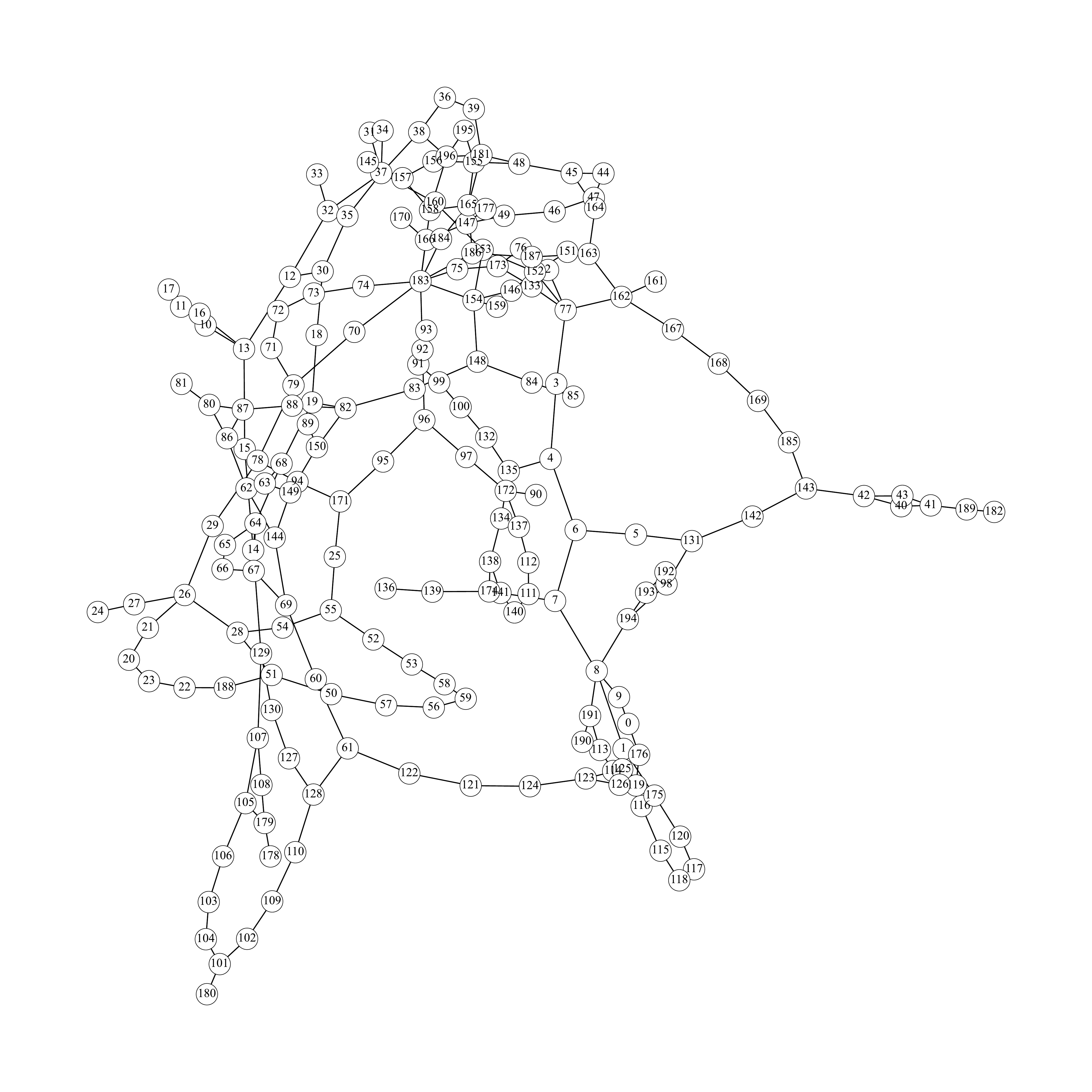}
    }
    \\
    \subfigure[Kdl \textit{(754, 1790)}]{
        \centering
        \includegraphics[width=0.96\linewidth]{Appendix/figures/Kdl.pdf}
    }
    \caption{Five large-scale topologies used in our evaluation. }
    \label{fig:topology_zoo}
\end{figure}

We synthesize DMs for the above large topologies based on three typical types of distribution or method\footnote{The code is mainly based on this Github repository: \href{https://github.com/netcontract/ncflow/blob/master/lib/traffic_matrix.py}{\texttt{https://github.com/netcontract/ncflow}}.}. All datasets except for Kdl, the number of generated samples is \textit{3,000} while that of Kdl is \textit{500} due to its super-high dimension. We present their details as follows: 

\vspace{-0.15cm}
\textbullet\ \textit{Gravity}~\citep{roughan2002experience}: The distribution has been an empirical success in that it accurately predicts trade flows between countries for many goods and services. Its key idea is that the total demand leaving a node is proportional to the total capacity on the node’s outgoing links; this demand is divided among other nodes proportional to the total capacity on their incoming links. We present the Python pseudocode for the corresponding synthetic data in Algorithm~\ref{alg:gravity}. Specifically, the algorithm first computes the total inbound and outbound capacities for all nodes. It then derives the DMs by sampling from a Gaussian distribution centered at the expected fraction, with variance proportional to the same value. Finally, the resulting demand matrix is rescaled by a factor $r$ to match the desired total resource level.

\begin{algorithm}[H]
\caption{Gravity-based Demands Generation in a Python style}
\label{alg:gravity}
\begin{lstlisting}[language=Python]
# G: graph or topology 
# r: scaling factor

num_nodes = len(G.nodes) # number of nodes
in_cap, out_cap = {}, {}
A = zeros(num_nodes, num_nodes) # initialize DMs

for u in G.nodes:
    in_cap[u] = sum(cap(v, u) for v in predecessors(u))
    out_cap[u] = sum(cap(u, v) for v in successors(u))

for u in G.nodes:
    norm_u = out_cap[u] / sum(out_cap)
    for v in G.nodes:
        frac = norm_u * in_cap[v] / (sum(in_cap) - in_cap[u])
            
        # allocate resource with randomness
        A[u, v] = max(Gaussian(frac, frac/4), 0)

A = A * r # scale by constant factor
\end{lstlisting}
\end{algorithm}

\vspace{-0.15cm}
\textbullet\ \textit{Poisson}~\citep{tebaldi1998bayesian}: This model can be formally expressed as $P(\lambda, \delta)$, where the demand between nodes $s$ and $t$ follows a Poisson random variable with mean $\lambda \delta^{d_{s,t}}$. Here, $d_{s,t}$ denotes the hop distance of the shortest path between $s$ and $t$, and $\delta \in [0,1]$ is a decay factor controlling demand concentration. Choosing $\delta$ close to $0$ yields highly concentrated demands (favoring nearby nodes), while values close to $1$ generate more uniform demands across the topology. The corresponding Python-style pseudocode is given in Algorithm~\ref{alg:poisson}. The algorithm first computes the all-pairs shortest path distances. For each source–destination pair, the expected demand is set to $\lambda \delta^{d_{s,t}}$, which is then sampled from a Poisson distribution to incorporate randomness. The diagonal entries are zeroed to exclude self-demands. Finally, the entire demand matrix is scaled by a factor $r$ to adjust the total resource level.

\vspace{-0.15cm}
\textbullet\ \textit{Bimodal}~\citep{applegate2003making}: In the bimodal model, a fraction $p$ of source–destination pairs (selected uniformly at random) are assigned demands drawn from $\mathsf{Uniform}(b, c)$, while the remaining pairs receive demands from $\mathsf{Uniform}(0, a)$. This construction creates a heterogeneous demand matrix where some pairs experience significantly higher traffic than others. Algorithm~\ref{alg:bimodal} provides Python-style pseudocode for generating synthetic data under this model. The algorithm first initializes an empty demand matrix. It then uses a random assignment to decide the group for each entry. Depending on the group, entries are sampled from the corresponding uniform distribution. Finally, the diagonal entries are set to zero. This yields a demand matrix with a bimodal distribution of traffic intensities.

\begin{algorithm}[H]
\caption{Poisson-based Demands Generation in a Python style}
\label{alg:poisson}
\begin{lstlisting}
# G: graph or topology
# lam: base Poisson parameter
# decay: decay factor (<=1)
# r: scaling factor

num_nodes = len(G.nodes) # number of nodes

# compute all-pairs shortest distances
distances = zeros(num_nodes, num_nodes)
for src, dist_dict in shortest_path_length(G):
    for target, dist in dist_dict.items():
        distances[src, target] = dist

# generate Poisson-based demand matrix
A = array([[poisson(lam * (decay ** dist)) for dist in row]  
          for row in distances], dtype=float)
np.fill_diagonal(A, 0.0) # no demand for self

A = A * r # scale by constant factor
\end{lstlisting}
\end{algorithm}

\begin{algorithm}[H]
\caption{Bimodal-based Demands Generation in a Python style}
\label{alg:bimodal}
\begin{lstlisting}
# G: graph or topology 
# fraction: fraction of low-range allocations
# low_range: [min, max] for low allocations
# high_range: [min, max] for high allocations

num_nodes = len(G.nodes) # number of nodes

# initialize demand matrix
A = zeros(num_nodes, num_nodes)

p=[fraction, 1 - fraction]

# randomly choose low/high demand for each entry
inds = random_choice_0_1(num_nodes, num_nodes) with a probablity of p

# low allocations
A[inds] = uniform(low_range[0], low_range[1], sum(inds))  

# high allocations
A[~inds] = uniform(high_range[0], high_range[1], sum(~inds)) 

fill_diagonal(A, 0.0) # no demand for self
\end{lstlisting}
\end{algorithm}

\subsection{Baselines}

In this paper, we compare \textsc{Pram} against the following state-of-the-art MCF solutions\footnote{Implementations of all non-open-source baseline methods can be found in our codebase. {For HARP and Aether, we follow the settings in their papers and experiments.}}:

\textbullet\ \textbf{DRL}~\citep{valadarsky2017learning}: It leverages deep reinforcement learning to replace explicit demand prediction with end-to-end optimization, mapping recent DMs to MCF configurations. Specifically, we train a policy neural network of the same size as \textsc{Pram}’s trainable parameters, using OpenAI's proximal policy optimization (PPO) algorithms~\citep{schulman2017proximal}. 

{\textbullet\ \textbf{HARP}~\citep{alqiam2024transferable}: It is a GNN-based scheme which ensures invariances to natural input transformations (e.g., permutations of node ids, tunnel reordering), and has a neural architecture aligned to optimization models. HARP has focused on the MLU objective, so we extended the other two objective functions according to our framework. Due to its end-to-end nature, this extension is totally fine. Also note that different from HRAP's original implementation which uses exact demands as input of models, we here modified it with historical information.} 

{\textbullet\ \textbf{Aether}~\citep{fan2025aether}: It is a most recently proposed algorithm, which excels in generalizing across different networks and different amounts of demands. In addition to a graph Transformer, Aether also employs a differentiated traffic strategy based on multi-agent RL to focus on larger flows for better learning while handling small flows with rules.}

\begin{table}[t]
\centering
\begin{tabular}{c|c|c|c|c}
\toprule
\textbf{Approach} & \textbf{Abilene} & \textbf{GÉANT} & \textbf{UsCarrier} & \textbf{Cogentco} \\
\midrule
\textbf{LP-w} & 19.67 $\pm$ 8.26 & 1.60 $\pm$ 0.54 & 67.88 $\pm$ 7.25 & 73.39 $\pm$ 2.87 \\
\midrule
\textbf{LP-f} & 22.56 $\pm$ 6.47 & 2.87 $\pm$ 1.09 & 10.08 $\pm$ 0.03 & 17.08 $\pm$ 0.06 \\
\bottomrule
\end{tabular}
\caption{{Comparing different optimization scheme for minimizing MLU objective.}}
\label{tab:lp}
\end{table}

\textbullet\  \textbf{LP}~\citep{gurobi}: It solves the MCF optimization problem for all demands using linear programming (LP), in which~\inlinelogo{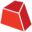} Gurobi (version 11.0.3) optimization solver is employed. {The code for the different MCF objectives is listed in Algorithm~\ref{alg:mluw_gb} to Algorithm~\ref{alg:f_gb}. We include two variants for minimizing MLU: one that treats path weights as the decision variables (LP-w) and another that optimizes over path flows (LP-f). As shown in Table~\ref{tab:lp}, the two formulations yield solutions of noticeably different quality. LP-w performs reasonably well on small topologies, but its effectiveness drops sharply at larger scales. Given the stability and consistency of the results, we adopt LP-f as the default baseline throughout this paper.}

\textbullet\ \textbf{POP}~\citep{narayanan2021solving}: It also decomposes large-scale granular flow problems into subproblems that are solved in parallel using an LP solver. Specifically, the entire topology is replicated $k$ times, with each replica assigned $\tfrac{1}{k}$ of the original link capacities. Commodity demands are then \textit{randomly} distributed across these replicas, and each subproblem is solved independently in parallel. We set $k$ according to the topology size: $k=2$ for the five real-world datasets, $k=12$ for four large-scale topologies other than KDL, and $k=128$ for KDL.

\textbullet\ \textbf{LP-top}~\citep{namyar2022minding}: It implements a simple yet effective heuristic algorithm that is recently revealed as “demand pinning”. It allocates the top 10\% of demands using an LP solver and assigns the remaining demands to the shortest paths. 

\textbullet\ \textbf{$\sim$ (pred)}: Since most of the non-learning method (i.e., LP, POP, and LP-top) optimizes the solution with perfect knowledge of future demands, we consider their practical implementation based on predicted future demands. For simplicity and efficiency, we primarily use a non-parametric weighted moving average for prediction. Experiments on other prediction schemes are presented in Appendix~\ref{subsec:preds}.

\begin{algorithm}[H]
\caption{{Optimize weights for MLU in a Gurobi style}}
\label{alg:mluw_gb}
\begin{lstlisting}[language=Python]
# E: set of edges, each with capacity C[e]
# PathToEdge[p, e] = 1 if path p uses edge e

# create optimization model
m = Model(); mlu = m.addVar(lb=0)
for each pair and path k in P(s,d): add variable w[s,d,k] in [0,1]

# add constraints
for each (s, d): sum_k w[s,d,k] == 1
for each demand D and each edge e in E:
    load = sum over (s,d,k): w[s,d,k] * D[s,d] * PathToEdge[(s,d,k), e]
    load <= mlu * C[e]

# minimize maximum link utilization
m.setObjective(mlu); m.optimize()

\end{lstlisting}
\end{algorithm}

\begin{algorithm}[H]
\caption{{Optimize flows for MLU in a Gurobi style}}
\label{alg:mluf_gb}
\begin{lstlisting}[language=Python]
# E: set of edges, each with capacity C[e]
# PathToEdge[p, e] = 1 if path p uses edge e

# create optimization model
m = Model(); mlu = m.addVar(lb=0) 
f = {p: m.addVar(lb=0) for p in all_paths}

# add constraints
for each edge e:
    load = sum over p: f[p] * PathToEdge[p, e]
    load <= mlu * C[e]
for each commodity k: sum over p in P[k]: f[p] == D[k]

# minimize maximum link utilization
m.setObjective(mlu); m.optimize()

\end{lstlisting}
\end{algorithm}

\begin{algorithm}[H]
\caption{{Optimize flows for MTF \& MCF in a Gurobi style}}
\label{alg:f_gb}
\begin{lstlisting}[language=Python]
# E: set of edges, each with capacity C[e]
# PathToEdge[p, e] = 1 if path p uses edge e

# create optimization model
m = Model(); f = {p: m.addVar(lb=0) for p in all_paths}

if objective == "MTF":
    # maximize total delivered flow
    obj = sum over all p: f[p]
else:
    # maximize concurrent flow factor alpha
    alpha = m.addVar(lb=0, ub=1)
    for each commodity k: sum_{p in P[k]} f[p] >= alpha * D[k]
    obj = alpha

# edge capacity constraints
for each edge e:
    load = sum over p: f[p] * PathToEdge[p, e]
    load <= C[e]

# per-commodity flow cannot exceed demand
for each commodity k: sum_{p in P[k]} f[p] <= d[k]

# optimize
m.setObjective(obj, maximize=True); m.optimize()

\end{lstlisting}
\end{algorithm}

\subsection{Candidate Path Selection}

For each topology, unless otherwise specified, we use Yen’s algorithm~\citep{yen1971finding} to precompute up to \textbf{4 shortest paths} between every pair of nodes, which serve as candidate paths for multi-commodity flow allocation. If fewer than 4 paths exist, the last available path is repeated to ensure a total of 4.

\section{\textsc{Pram} Implementation}
\label{app:pram_imp}

We implement \textsc{Pram} using PyTorch~\citep{paszke2019pytorch} and HuggingFace Transformers~\citep{wolf2019huggingface}. Most experiments are conducted on a Linux server equipped with two NVIDIA A100 GPUs (80GB memory each). And we adopt a 7:1:2 ratio for training, validation, and testing across all datasets. For model initialization, we employ Kaiming uniform initialization for the low-rank matrices. Additional implementation details are provided below.

\begin{figure}
    \centering
    \includegraphics[width=1.\linewidth]{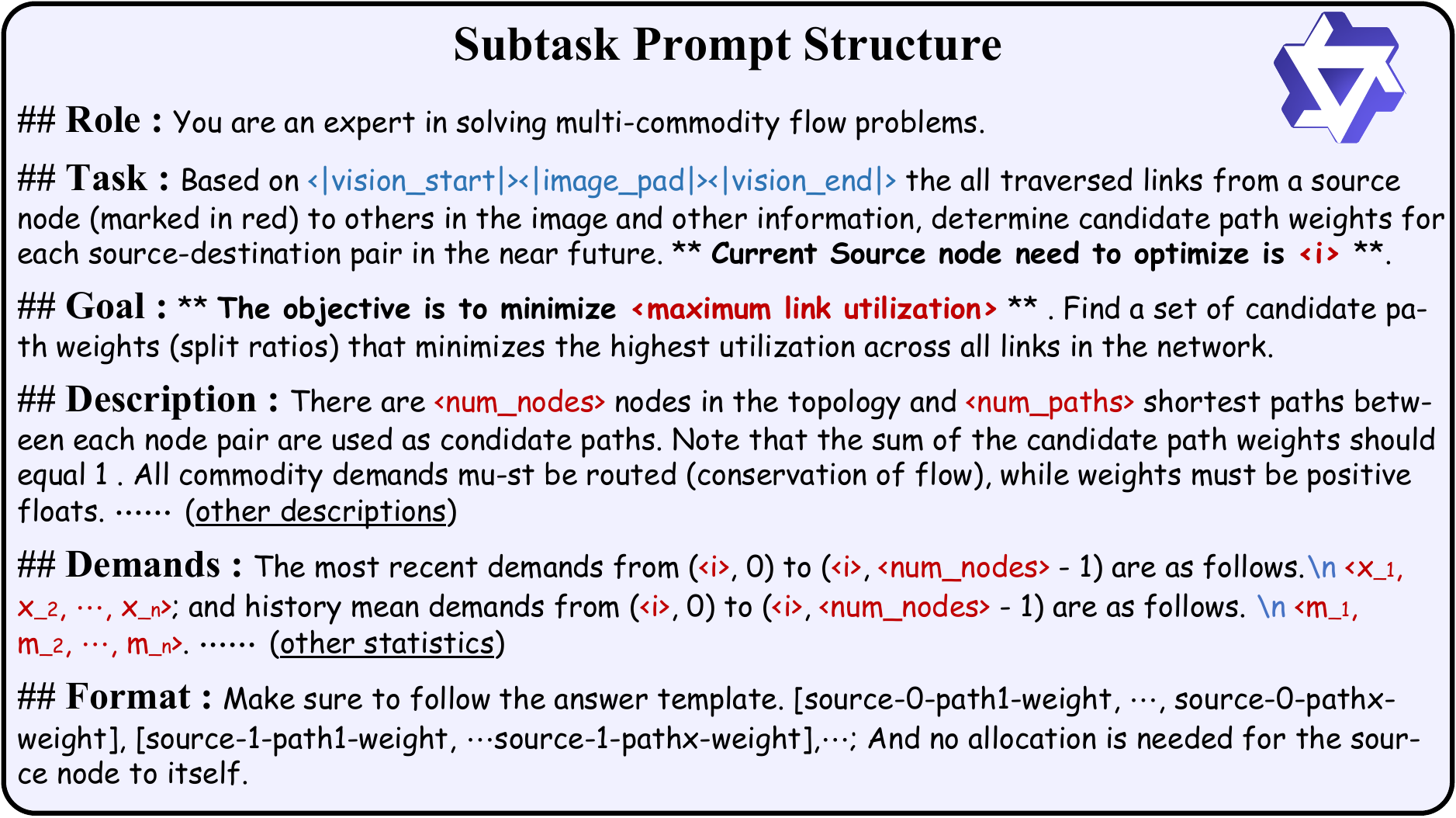}
    \caption{Example subtask prompt structure for maximum link utilization optimization.}
    \label{fig:prompt}
\end{figure}

\subsection{Prompt Structure}
\label{subsec:prompt}

We show the overall prompt structure used by \textsc{Pram} for minimizing maximum link utilization in Figure~\ref{fig:prompt}. Our design can be roughly divided into three parts: task instructions, domain knowledge, and commodity information, as illustrated below. (i) Task instructions: The role section positions the MLM as an expert in multi-commodity flow problems, ensuring that subsequent reasoning is guided by a subtask-oriented perspective. The task and goal sections clearly specify the optimization target—minimizing maximum link utilization—while constraining the solution to path weights that obey feasibility requirements (e.g., non-negativity and normalization); (ii) Domain knowledge: The description encodes domain knowledge, such as the number of nodes, candidate paths, and flow conservation principles, thereby narrowing the search space toward admissible allocations; (iii) Commodity information: The demands section conveys the actual commodity information, including most recent traffic volumes and their historical statistics, enabling the model to reason about both instantaneous and smoothed demand dynamics. Finally, the format clause enforces a deterministic answer template, which facilitates automatic parsing and validation of the policy head network. The template for the maximum flow and concurrent flow is essentially the same as the one outlined above.

\begin{algorithm}[H]
\caption{{Draw sub-images per source in a Python style}}
\label{alg:draw_paths}
\begin{lstlisting}[language=Python]
# G: network graph

# group by source
grouped_paths = {s: all paths starting from s} 

for s, paths in grouped_paths.items():
    subG = combine all edges in paths
    layout(subG)    # compute layout positions and scales
    draw all nodes and highlight source node s
    draw all edges, add arrows if G is directed
    for edge in subG.edges: draw edge label = G[edge]['capacity']

\end{lstlisting}
\end{algorithm}

\subsection{Sub-image Plotting}

We utilize Matplotlib~\citep{hunter2007matplotlib} and NetworkX~\citep{hagberg2008exploring} to generate visual representations of the sub-graph data. We draw all candidate paths grouped by source node. Each source gets one figure containing all its candidate paths (to all destinations). {The pseudo-code is presented in Algorithm~\ref{alg:draw_paths}.} These visualizations are then provided to multimodal LLMs capable of processing image inputs ($\mathsf{.png}$ file with 240 dpi). Figure~\ref{fig:subimg} shows all sub-image examples from the G\'{E}ANT topology, where its source node is marked in red and the link capacities are explicitly indicated in the figure. 

\begin{figure}
    \centering
    \includegraphics[width=1.\linewidth]{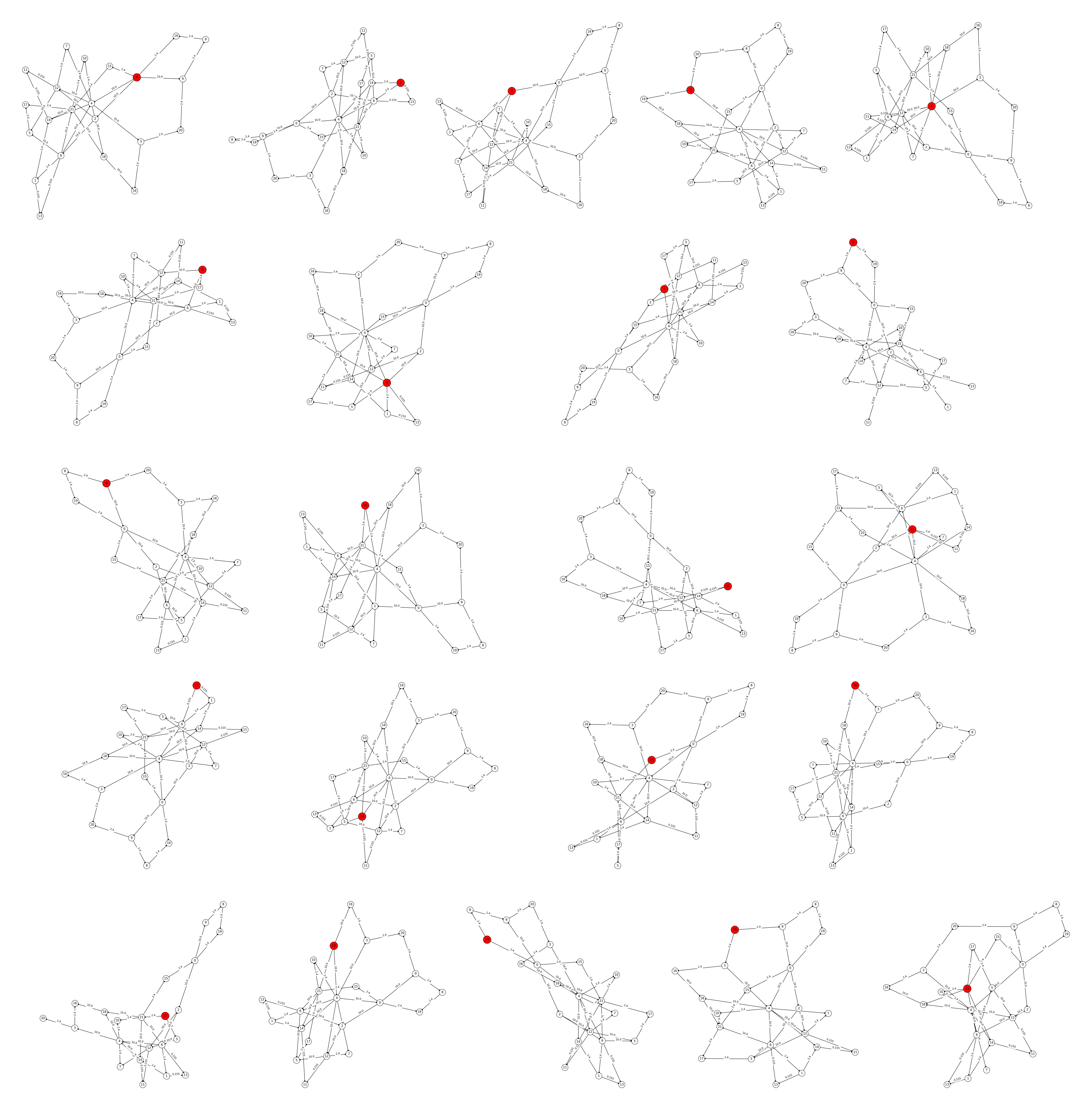}
    \caption{Examples of input sub-images on G\'{E}ANT topology. In each image, the starting point of the commodity flow is marked in red, only the links that exist in the candidate paths are drawn, and the link capacities are directly indicated on the links.}
    \label{fig:subimg}
\end{figure}

\subsection{Adaptation Algorithm}
\label{subsec:adapt_alg}

We next provide implementation details that complement the main text. We first outline how objectives (i.e., MLU, MTF and MCF) are computed efficiently as rewards using parallel matrix operations, followed by the multi-agent RL procedure adopted to fine-tune \textsc{Pram}. 

\subsubsection{Objective (Reward) Computation}

For MLU, we train \textsc{Pram} to output all path weights (with $\mathsf{Sigmoid}$ non-linear as its final layer). The mapping between these weights and the resulting MLU can be represented using Algorithm~\ref{alg:mlu}. It is noteworthy that our reward computation relies on real samples (demand at future time steps), whereas the model only has access to historical information. No need for any loop operations, this mapping can be established through simple matrix operations in parallel. Regarding MTF and MCF, we force \textsc{Pram} to directly predict allocated demand $w_p$ for each candidate path (with $\mathsf{ReLU}$ non-linear as its final layer). As seen in Algorithm~\ref{alg:mf}, we then scale $w_p=\frac{w_p}{\gamma}$, where $\gamma = \max\left( \max_{e \in \mathcal{E}} \sum_{p: e \in p} \frac{w_p}{c(e)},\, 1 \right)$. In this way, \textsc{Pram} guarantees that no link capacity can be exceeded, and the computation is also fast. 

\begin{algorithm}[H]
\caption{Compute MLU in a PyTorch style}
\label{alg:mlu}
\begin{lstlisting}[language=Python]
# w: path weights (split ratios)
# d: commodity demand vector
# C: link capacities
# PathtoEdge: path-to-edge incidence (1 if edge j in path i else 0)
# SDtoPath: SD-to-path incidence (1 if path j serves commodity i else 0)

# normalize path weights
total_w = SDtoPath @ w
w = w * (SDtoPath.T @ (1.0 / total_w))

# demand on paths and flow on edges
demand_on_paths = (SDtoPath.T @ d.T) * w
flow_on_edges = PathtoEdge.T @ demand_on_paths

# maximum link utilization
congestion = flow_on_edges / C
MLU = max(congestion)

\end{lstlisting}
\end{algorithm}

\begin{algorithm}[H]
\caption{Compute MTF \& MCF in a PyTorch style}
\label{alg:mf}
\begin{lstlisting}[language=Python]
# w_p: predicted demand vector
# d: commodity demand vector
# C: link capacities
# PathtoEdge: path-to-edge incidence (1 if edge j in path i else 0)
# SDtoPath: SD-to-path incidence (1 if path j serves SD i else 0)

# scale planned demands on each path
w_p_scaled = w_p / max((PathtoEdge.T @ w_p) / C)

# compute flow per SD pair, limited by demand
flow = min((SDtoPath @ w_p_scaled).T, d)

# total flow
MTF = sum(flow)

# maximum concurrent ratio
MCF = min(flow[d != 0] / d[d != 0])

\end{lstlisting}
\end{algorithm}

\subsubsection{Fine-tuning Details}

As detailed in the main text (\S~\ref{subsec:lma}), \textsc{Pram} opts for employing the adaptation framework of multi-agent RL. Since MCF allocations are deterministic, we adopt the standard approach of modeling $\pi_\theta(a_i|s_i)$ as a Gaussian distribution: during training, actions are sampled from this distribution, while at deployment the mean is used as the deterministic allocation. To be more specific, we employ a separate one-layer MLP to predict the mean, while keeping the standard deviation fixed\footnote{We also experimented with learnable standard deviations in our early trials, but the results were suboptimal.}. After sampling from the corresponding Gaussian distribution, the log probability is calculated as:
\(
\log \pi_\theta(a_i | s_i) = -\frac{1}{2} \left( \log(2\pi\sigma_i^2) + \frac{(a_i - \mu_i)^2}{\sigma_i^2} \right),
\)
which is implemented by $\mathsf{torch.distributions.normal.log\_prob}$. Algorithm~\ref{alg:finetune_abstract} presents the pseudo-code for fine-tuning \textsc{Pram}. It leverages the observation that allocations in MCF problems computed for a single time interval do not influence subsequent intervals (e.g., the demand matrices). This domain-specific insight enables us to simplify training by reducing the long-term return to a one-step return. Furthermore, note that the subproblem decomposition and the $\mathsf{.png}$ image transformation are performed only once at the beginning of the task, ensuring that they do not compromise the efficiency of model fine-tuning. 

\begin{algorithm}[H]
\caption{Fine-tuning \textsc{Pram} in a PyTorch style}
\label{alg:finetune_abstract}
\begin{lstlisting}[language=Python]
# Model: policy network (based on MLM)
# Optimizer: optimizer for Model
# episodes: sequence of training episodes

for observation in loader:
    # observation: pre-processed sub-images and text prompts
    loss = 0
    for step in episode:
        action, log_probability = Model(observation)     # select action
        advantage = compute_objective(action)    # get multi-agent reward

        # accumulate policy gradient
        loss += (- log_probability * advantage).mean()   

    Optimizer.zero_grad()
    loss.backward()
    Optimizer.step() 

    if EarlyStop and converged(): break
\end{lstlisting}
\end{algorithm}

\subsection{Framework Details}

In this subsection, we first outline the architectural information of \textsc{Pram} and provide an overview of our hyperparameter choices.

\subsubsection{Architecture}

\begin{figure}
    \centering
    \includegraphics[width=1.\linewidth]{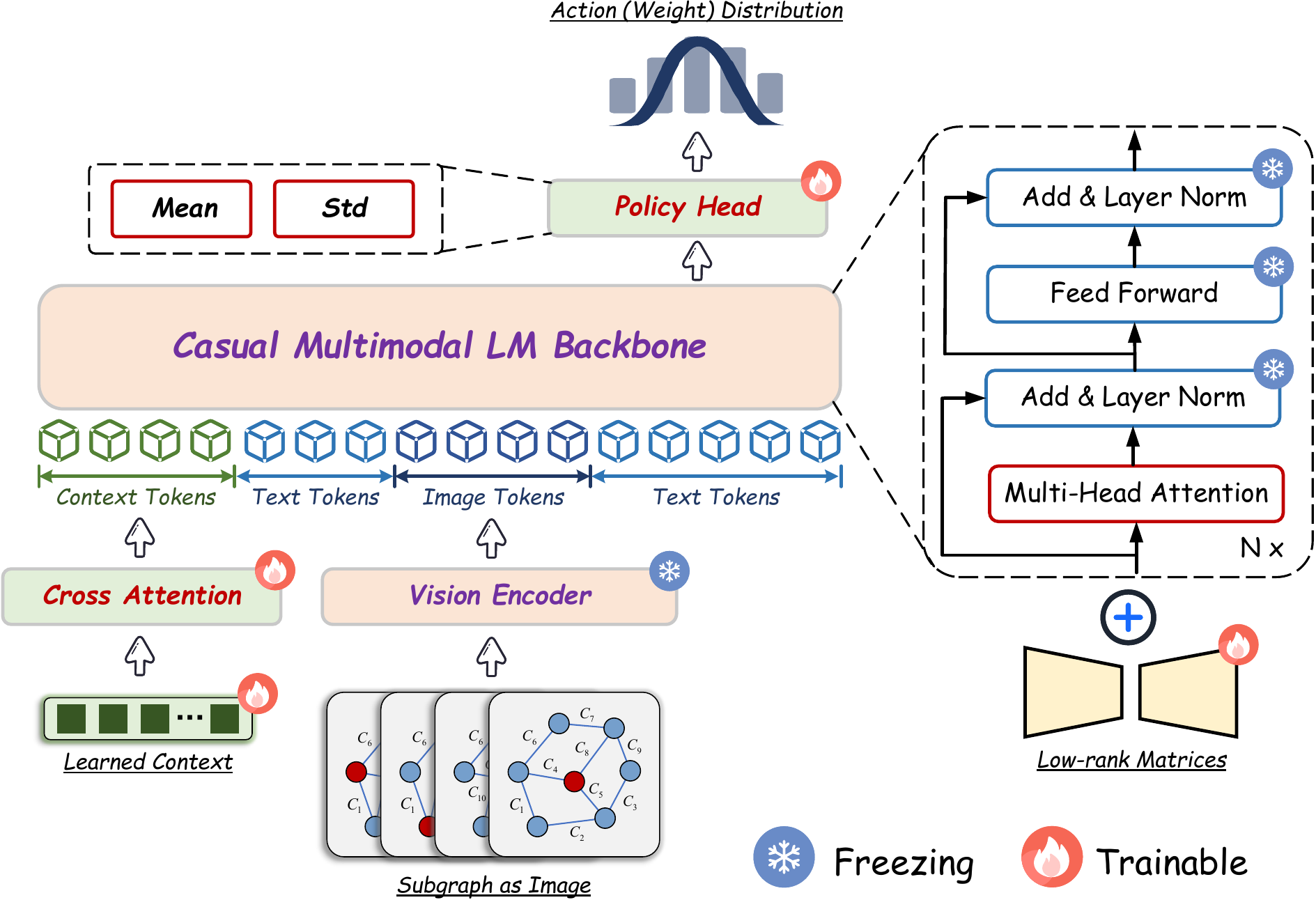}
    \caption{Model architecture of \textsc{Pram}. In the figure, we use a Qwen2.5-VL-style MLM, which integrates of a vision encoder and a language model decoder to process multimodal inputs.}
    \label{fig:arch}
\end{figure}

As shown in Figure~\ref{fig:arch}, given a Qwen2.5-VL-style multimodal language model, images are inserted at the position marked by ``$\mathsf{<|vision\_start|><|image\_pad|><|vision\_end|>}$'' (see Figure~\ref{fig:prompt}). Text and images are arranged in the order they appear in the original prompt. The context is prepended to the input as a prefix. We apply LoRA fine-tuning exclusively to the attention blocks of the language model, specifically restricted to the QV projections. As described in the preceding subsection, the final stage of \textsc{Pram}'s architecture samples actions (i.e., path weights) from a Gaussian distribution parameterized by the predicted mean and a fixed standard deviation.

\subsubsection{Hyperparameter}

Since \textsc{Pram} primarily relies on the multimodal interface of the pre-trained model, most components (including token embedding dimensions) are fixed, leaving only a limited set of parameters to tune, as detailed below.
First, we set the length of the learnable context proportional to the number of nodes in the topology, with a minimum of 48 tokens.
Second, in the cross-attention module, the embedding dimensions of the Q, K, and V matrices are chosen from [${64, 128, 256, 512}$].
Third, we take the final hidden state of the language model as input to the policy head, which consists of a single fully connected input layer, followed by two output heads (for mean and standard deviation), each implemented as a single fully connected layer (with output size $\approx$ number of paths $\times$ number of nodes).
To speed up processing, we crop pre-saved sub-images to $128 \times 128$ pixels.
For the sampling batch size, we select from [${8, 16, 32, 64}$]. The sampling granularity is defined at the source-node level, meaning that allocations for larger topologies require multiple batches. For example, allocating resources for a 150-node topology requires at least three batches.
Finally, the learning rate for fine-tuning is selected from [${10^{-3}, 5 \times 10^{-4}, 10^{-4}}$], and the adaptation of \textsc{Pram} is limited to a maximum of 10 epochs with early stopping. Moreover, by default, the Adam optimizer~\citep{kingma2014adam} is employed throughout all experiments. 

\subsubsection{Post-Solution Refinement}

Although our MLM-powered method already produces high-quality allocation results, there is still room for further improvement under certain test conditions. We refer to this fine-grained adjustment as \textit{post-solution refinement}. When partial demand information or other known conditions are available, post-solution refinement can be carried out in roughly three ways: (i) running gradient descent with ADMM~\citep{xu2023teal}, (ii) recurrent adjustment with RNNs~\citep{alqiam2024transferable}, and (iii) LP-based partial solution refinement~\citep{liu2025shooting}. Herein we use the idea of (iii) for our refinement. Specifically, we can select $\mathcal{O}\left(|\mathcal{V}|\right)$ flows with the highest (Gaussian) variation by the policy head as the manageable demands for solution refinement. This approach still reduces the complexity and scale of the MCF problem, making it tractable for large-scale scenarios while maintaining optimal or near-optimal performance. 

\newpage 

\section{Theoretical Results}
\label{sec:proofs}

In this section, we provide the detailed theoretical proofs that underpin the main results presented in the paper. Our goal is to establish rigorous guarantees for the proposed method, including its correctness, convergence, and properties. To this end, we first formalize the necessary assumptions and notations, and then proceed to derive the main lemmas and theorems step by step. In particular, our arguments borrow insights from classical results on convex optimization and more recent analyses of LMs. Some key inspirations were excerpted from prior works of ~\cite{bertsekas1996neuro,sutton1999policy,foerster2018counterfactual,perry2023dote,ahn2023transformers}.

\subsection{Notations \& Definitions}

\paragraph{Notations} Below, we list our notational conventions used in this section.
\begin{itemize}
    \item $\left\| \cdot \right\|$ is the Euclidean norm.
    \item $n$ denotes the total number of nodes, i.e., $|\mathcal{V}|$. 
    \item $c_{min}$ denotes the minimum link capacity. 
    \item $c_{max}$ denotes the maximum link capacity.
    \item $\mathcal{D}_{max}$ is an upper bound on the maximum demand between a source-destination pair.
    \item $\mathcal{D}_{min}$ is a lower bound on the maximum demand between a source-destination pair. Although it can be $0$, to ensure the validity of the results, we replace $0$ with a value arbitrarily close to $0$.
    \item $p$ denotes the maximum number of candidate paths interconnecting a source-destination pair.
    \item $\pi$ denotes an allocation configuration, which can be represented as a vector with $n^2 \times p$ components. It is noteworthy that each source-destination element in $\pi$ is itself a vector whose components sum to $\leq 1$, specifying its weights $r_{(\cdot)}$ across at most $p$ candidate paths.
    \item $\mathcal{L}_{l}$ denotes the maximum link utilization objective function, which, in its simplest form, is $\mathcal{L}_{l}(\pi)=\max_{e \in \mathcal{E}} \frac{f_e (\pi)}{c_e} $.
    \item $\mathcal{L}_{t}$ denotes the maximum total flow objective function. For convenience, we reformulate it as follows: Let the allocated flow on each link $x_e(\pi) = \sum_{e \ni p} x_p(\pi)$ where $x_p(\pi)$ is the allocated flow on path $p$. Then the actual flow on $p$ is $f_p(\pi) = x_p(\pi) \cdot \min \left( \min_e  \frac{c_e}{x_e(\pi)}, 1 \right) $, and the actual flow successfully transmitted between $s$ and $t$ is $f_{s.t}(\pi) = \sum_{p \in P_{s,t}} f_p(\pi)$. Now, we can define $\mathcal{L}_{t}(\pi) = \sum_{s,t} f_{s,t}(\pi)$ or $\sum_{s,t} \min \left(\mathcal{D}_{s,t}, f_{s,t}(\pi)\right)$.
    \item $\mathcal{L}_{c}$ denotes the maximum concurrent flow objective function. Similar to $\mathcal{L}_{t}$, we can also reformulate it as follows: $\mathcal{L}_{c}(\pi) = \min \left(\left\{ \frac{f_{s,t}(\pi)}{\mathcal{D}_{s,t}} \mid s,t \in \mathcal{V}, \mathcal{D}_{s,t}>0 \right\}\right)$ or $\min \left(\left\{ \frac{f_{s,t}(\pi)}{\mathcal{D}_{s,t}} \mid s,t \in \mathcal{V}, \mathcal{D}_{s,t}>0 \right\} \cup \{1\} \right)$.
    \item $\bar{R}(\cdot)$ denotes the long-term expected reward per step. Given a policy $\pi$, it is defined as $\bar{R}(\pi)=\mathbb{E}\left[ \sum^\infty_{t=1} \gamma^{t-1}r_t \mid s_0, \pi \right]$, where $\gamma$, $s_0$ and $r_t$ denote discount rate, start state, and reward at time $t$ respectively.
    \item $Q^{(\cdot)}(\cdot, \cdot)$ is an action-value function. Given a policy $\pi$, it is defined as $Q^\pi(s,a)=\mathbb{E}\left[ \sum^\infty_{k=1} \gamma^{k-1}r_{t+k} \mid s_t=s, a_t=a, \pi \right]$, where $a_t$ is the action at time $t$.
    \item $d^{(\cdot)}(\cdot)$ is the stationary distribution of states. Given a policy $\pi$, it is defined as $d^\pi (s) = \mathbb{E}\left[ \sum^\infty_{t=1} \gamma^{t} \Pr \left\{ s_t = s \mid s_0, \pi \right\} \right]$.
\end{itemize}

\vspace{0.15cm}

\begin{definition}
    (Convexity) A set $C$ in a vector space is convex if for any two vectors $u,v \in C$ and $\alpha \in \left[0,1\right]$, we have that $\alpha u + \left(1-\alpha\right)v \in C$. A function $f: C \rightarrow \mathbb{R}$ is convex if $f\left(\alpha u + \left(1-\alpha\right)v\right) \leq \alpha f(u) + \left(1-\alpha\right)f(v)$.
    \label{def:cov}
\end{definition}

\begin{definition}
    (Lipschitzness) Let $ C \subset \mathbb{R}^d $. A function $ f : \mathbb{R}^d \to \mathbb{R}^k $ is $\rho$-Lipschitz over $ C $ if for every $ w_1, w_2 \in C $ we have that \(\|f(w_1) - f(w_2)\| \leq \rho \|w_1 - w_2\|\).
    \label{def:lip}
\end{definition}

\begin{definition}
    (Subgradients) Let $S$ be an open convex set, and $f: S \rightarrow \mathbb{R}$ be a convex function. A vector $v$ that satisfies $f(w) + \left\langle {u-w, v} \right\rangle \leq f(u)$ for any $u \in S$, is called a subgradient of $f$ at $w$. The set of subgradients of $f$ at $w$ is called the differential set and denoted $\partial f(w)$.
    \label{def:subg}
\end{definition}

\begin{definition}
    (Gradient Descent, GD) Gradient descent is an iterative optimization method that, at each step, updates the solution by moving in the direction of the negative gradient of the objective function evaluated at the current point, i.e., $\theta^{(t)} = \theta^{(t-1)} - \Delta \cdot  \nabla_\theta f$.
    \label{def:gd}
\end{definition}

\begin{definition}
    (Markov Decision Process, MDP) A MDP is defined by the tuple $\mathcal{M} = (\mathcal{S}, \mathcal{A}, P, R, \gamma)$. The state, action, and reward at each time $t$ are denoted $s_t \in \mathcal{S}$, $a_t \in \mathcal{A}$ and $r_t \in \mathbb{R}$ respectively. $P(s',s,a) = \Pr\left\{s_{t+1} = s' \mid s_t = s, a_t = a\right\}$ denotes the transition kernel, $R(s,a)$ the expected rewards $\mathbb{E} \left[ r_{t+1} \mid s_t=s, a_t=a \right]$, and $\gamma \in [0,1)$ the discount factor. The agent's decision-making procedure at each time is characterized by a policy, $\pi_\theta (a \mid s) = \Pr \left\{ a_t = a \mid s_t = s, \theta \right\}$, where $\theta$ is a parameter vector. We also write just $\pi (s, a)$ for $\pi_\theta (a \mid s)$. 
    \label{def:mdp}
\end{definition}

\begin{definition}
    {(Quasi-Convexity,~\cite{hazan2015beyond}) A function \(f:\mathbb{R}^d \rightarrow \mathbb{R}\) is quasi-convex if, for all \(\mathbf{x}, \mathbf{y} \in \mathbb{R}^d\) with \(f(\mathbf{y}) \leq f(\mathbf{x})\),
    \(
        \langle \nabla f(\mathbf{x}),, \mathbf{y}-\mathbf{x} \rangle \le 0 
    \);
    or it satisfies that its level sets, $\mathcal{L}_\alpha (f) = \{ x \mid f(x) \leq \alpha \}$, are convex sets for all $\alpha \in \mathbb{R}$.}
    \label{def:qconv}
\end{definition}

\subsection{Assumptions \& Useful Lemmas}

\begin{assumption}
    (Convex or Concave Objective) All three objective functions, i.e., maximum link utilization, maximum total flow, and maximum concurrent flow, are convex/concave with respect to the input candidate path weights.
    \label{asm:a}
\end{assumption}

\begin{assumption}
In each attention block, there exist token embeddings of observation–decision pairs $(x_i, y_i)$ from the training set that are aligned with the embeddings of the learned context.
\label{asm:b}
\end{assumption}

\begin{assumption}
In each attention block, the token embeddings of the observation–decision pairs $(x_i, y_i)$ approximately satisfy a linear relation\footnote{Note that we only assume the attention inputs after token embedding are approximately linear, not that the original problem is linear. Indeed, a linear representation model (e.g., followed by an MLP) can express any nonlinear function~\citep{ahn2023transformers}.}
\(
y_i = W x_i,
\)
where $W \in \mathbb{R}^{N_y \times N_x}$ is a weight matrix.
\label{asm:c}
\end{assumption}

\begin{wrapfigure}[8]{r}{0.3\textwidth}
  \vspace{-0.75cm}
  \begin{center}
  \includegraphics[width=0.9\linewidth]{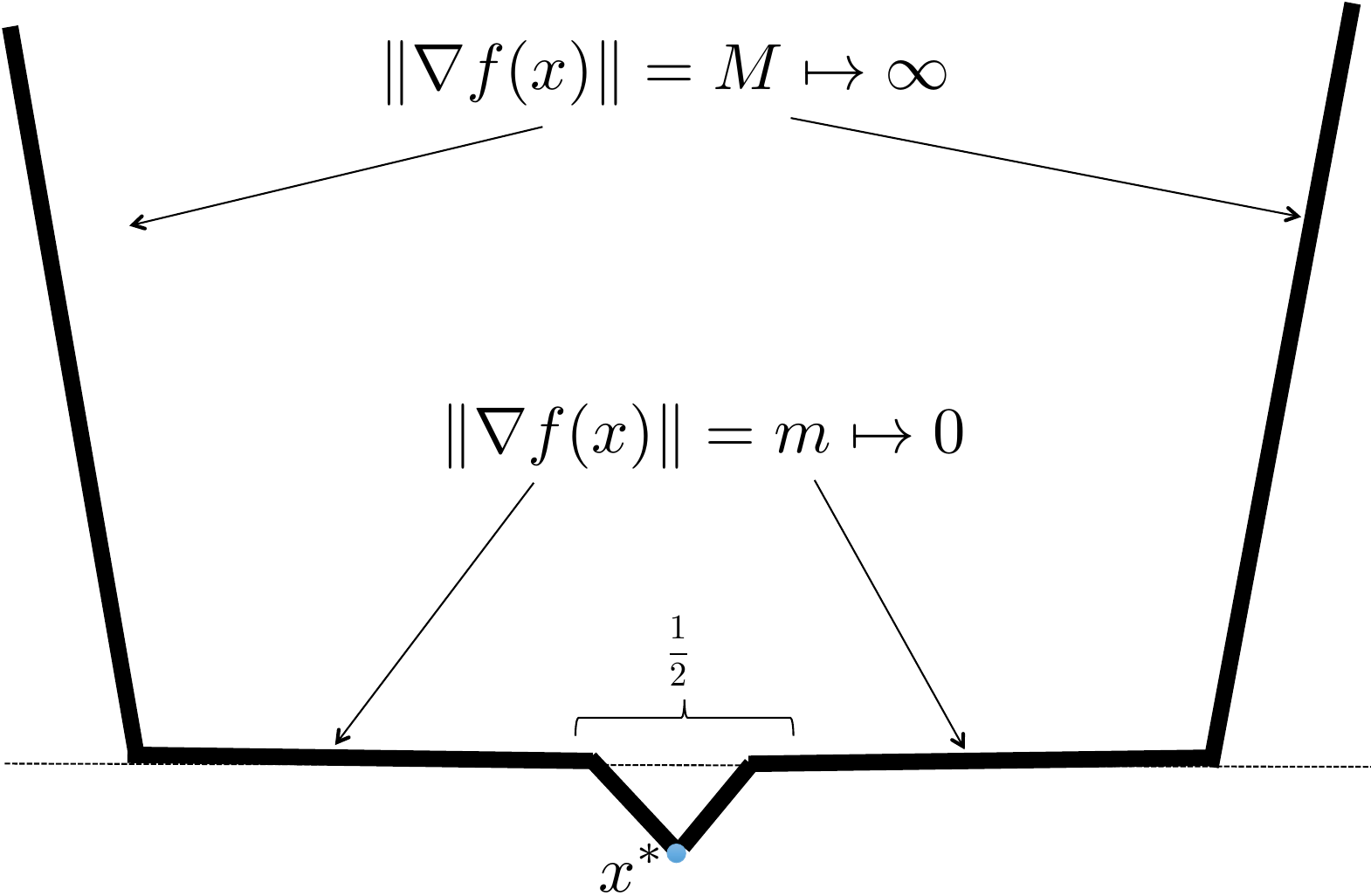}
\caption{{A quasi-convex Locally-Lipschitz function.}}
\label{qc}
\end{center}
\end{wrapfigure}

Note that the aforementioned Assumption~\ref{asm:a} can actually be relaxed slightly; as long as non-smooth quasi-convexity (see Figure~\ref{qc} on the right) is satisfied, it does not affect the related GD-convergence results in this paper, by using the study of \cite{konnov2003convergence} and \cite{hazan2015beyond} (Theorem 5.1). Next, we explicitly prove the core property for all three MCF objective functions in Proposition~\ref{prop:convex} to support our insight.

\begin{lemma}
    $\frac{ax+(1-x)c}{bx+(1-x)d} \ge \min \left(\frac{a}{b}, \frac{c}{d}\right)$, $x \in [0, 1]$, holds for any $a,b,c,d>0$.
    \label{lem:ineqn}
\end{lemma}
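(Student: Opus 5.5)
The inequality is the mediant inequality, parametrized by $x$. The plan is to show that the left-hand side is a weighted average of $\frac{a}{b}$ and $\frac{c}{d}$ with nonnegative weights. A weighted average of two numbers can never fall below the smaller of them.

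Write
\[
\frac{ax+(1-x)c}{bx+(1-x)d} \;=\; \lambda\,\frac{a}{b} + (1-\lambda)\,\frac{c}{d}, \qquad \lambda = \frac{bx}{bx+(1-x)d}.
\]
This identity is checked directly. Multiply out $\lambda \frac{a}{b} = \frac{ax}{bx+(1-x)d}$, and similarly $(1-\lambda)\frac{c}{d} = \frac{(1-x)c}{bx+(1-x)d}$. The two terms sum to the left-hand side.

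Next we check that $\lambda$ is a genuine convex weight. Since $b,d>0$ and $x\in[0,1]$, the denominator $bx+(1-x)d$ is strictly positive: it is a convex combination of two positive numbers. In particular, the fraction in the lemma is well defined at the endpoints $x=0$ and $x=1$. The numerator $bx$ is nonnegative and at most the denominator, so $\lambda\in[0,1]$. It follows that
\[
\lambda\frac{a}{b}+(1-\lambda)\frac{c}{d} \;\ge\; \lambda m + (1-\lambda) m \;=\; m, \qquad m=\min\left(\tfrac{a}{b},\tfrac{c}{d}\right),
\]
which is the claim.

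As a fallback, one can argue by contradiction without the explicit weights. Suppose the ratio were less than $m$. Cross-multiplying by the positive denominator gives $ax+(1-x)c < m\,bx + m(1-x)d$. But termwise $ax \ge m\,bx$ and $(1-x)c \ge m(1-x)d$, because $a \ge mb$, $c \ge md$, and $x, 1-x \ge 0$. This is a contradiction.

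There is no real obstacle. The only point needing care is the positivity of the denominator, which justifies both the cross-multiplication and the interpretation of $\lambda$ as a weight. This is exactly where the hypothesis $b,d>0$ is used; the positivity of $a$ and $c$ is not actually needed.
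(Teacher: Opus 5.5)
Your proof is correct, but it takes a different route from the paper. The paper argues by calculus. With $f(x)=\frac{ax+(1-x)c}{bx+(1-x)d}$, it computes $f'(x)=\frac{ad-bc}{(bx+(1-x)d)^2}$. This has constant sign on $[0,1]$, so $f$ is monotone there and lies between $f(0)=\frac{c}{d}$ and $f(1)=\frac{a}{b}$. (The paper's displayed computation has slips: the factor $bx-(1-x)d$ should read $bx+(1-x)d$, and the denominator should be squared. Only the sign of $f'$ matters, so the argument survives.)

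You instead write the ratio explicitly as $\lambda\frac{a}{b}+(1-\lambda)\frac{c}{d}$ with $\lambda=\frac{bx}{bx+(1-x)d}\in[0,1]$. This is purely algebraic and avoids differentiation. It gives the two-sided bound $\min(\frac{a}{b},\frac{c}{d})\le f(x)\le\max(\frac{a}{b},\frac{c}{d})$ in one line. The monotonicity in $x$ that the paper gets for free is also recoverable from your form, since $\lambda'(x)=\frac{bd}{(bx+(1-x)d)^2}>0$.

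Your identity also fits how the lemma is used in Proposition~\ref{prop:convex}. There it is invoked as the mediant bound $\min\bigl(\frac{A}{\phi},\frac{A'}{\phi'}\bigr)\le\frac{\lambda A+(1-\lambda)A'}{\lambda\phi+(1-\lambda)\phi'}$ with $\phi,\phi'\ge 1$. The numerators $A,A'$ are total flows and can be zero. So your remark that only $b,d>0$ is needed is worth keeping: the paper's derivative argument never uses $a,c>0$ either, even though the lemma is stated with that hypothesis.
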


\begin{proof}
    {Let $f(x) = \frac{ax+(1-x)c}{bx+(1-x)d}$, then we have its derivative:
    \[
        f'(x) = \frac{\left(a-c\right)\left(bx-(1-x)d\right)-(b-d)\left(ax+(1-x)c\right)}{\left(bx+(1-x)d\right)^2} = \frac{ad-bc}{\left(bx+(1-x)d\right)}
    \]}
    {Now, since $f(0)=\frac{c}{d}$, $f(1)=\frac{a}{b}$ and $f'(x)$ is monotonic on the interval $[0, 1]$, $f(x) = \frac{ax+(1-x)c}{bx+(1-x)d} \ge \min \left(\frac{a}{b}, \frac{c}{d}\right)$.}
\end{proof}

\begin{proposition}
    {The objective function $\mathcal{L}_{l}$ is convex, while $\mathcal{L}_{t}$ and $\mathcal{L}_{c}$ are quasi-concave, in the flow allocation configuration.}
    \label{prop:convex}
\end{proposition}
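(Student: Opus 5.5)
The plan is to handle the three objectives separately. For $\mathcal{L}_l$ we use the elementary rule that a pointwise maximum of linear functions is convex. For $\mathcal{L}_t$ and $\mathcal{L}_c$ we rewrite each as a pointwise minimum of linear-fractional functions of $\pi$. Each such function is quasi-concave by Lemma~\ref{lem:ineqn}, and a minimum of quasi-concave functions is quasi-concave, since its superlevel sets are intersections of convex sets (Definition~\ref{def:qconv}). Throughout, the demands $\mathcal{D}_{s,t}$ and capacities are fixed, and the feasible set of configurations (nonnegative weights summing to at most $1$ per pair) is convex. Hence it suffices to check the defining inequality along an arbitrary segment $\pi_\alpha=\alpha\pi_1+(1-\alpha)\pi_2$.

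\textbf{Step 1 ($\mathcal{L}_l$).} By the constraint in (\ref{eqn:mlu_formulte}), $f_e(\pi)=\sum_{s,t}\sum_{p\ni e}\mathcal{D}_{s,t}r_p$ is linear in $\pi$, so $f_e(\pi)/c_e$ is linear. Then
\[
\mathcal{L}_l(\alpha\pi_1+(1-\alpha)\pi_2)\le \alpha\mathcal{L}_l(\pi_1)+(1-\alpha)\mathcal{L}_l(\pi_2)
\]
follows by bounding each term inside the max and then taking the max over $e$, which is exactly Definition~\ref{def:cov}.

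\textbf{Step 2 (structural rewriting).} The path flows $x_p(\pi)$, link loads $x_e(\pi)$, and per-pair allocations $X_{s,t}(\pi)=\sum_{p\in P_{s,t}}x_p(\pi)$ are all linear in $\pi$. The crucial observation is that the throttling factor $\min(\min_e c_e/x_e(\pi),1)$ is \emph{global}: it is the same for every path. Therefore
\[
\mathcal{L}_c(\pi)=\min_{s,t}\min\Bigl(\tfrac{X_{s,t}(\pi)}{\mathcal{D}_{s,t}},\ \min_e \tfrac{c_e X_{s,t}(\pi)}{\mathcal{D}_{s,t}\,x_e(\pi)}\Bigr),
\qquad
\mathcal{L}_t(\pi)=\min\Bigl(S(\pi),\ \min_e \tfrac{c_e S(\pi)}{x_e(\pi)}\Bigr),
\]
where $S=\sum_{s,t}X_{s,t}$ (for $\mathcal{L}_c$ the optional extra element $1$ in the min is a constant and does no harm). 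Each term is a ratio of two linear functions of $\pi$ that is positive on the domain; to keep denominators strictly positive we use the convention that zero demands are replaced by a value arbitrarily close to $0$.

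\textbf{Step 3 (quasi-concavity of each ratio).} Restricted to the segment, each term takes the form $\frac{a\alpha+(1-\alpha)c}{b\alpha+(1-\alpha)d}$. Lemma~\ref{lem:ineqn} gives that it is at least the minimum of its endpoint values, which is precisely quasi-concavity along the segment. Taking minima preserves this property: if every term is $\ge\min(g(\pi_1),g(\pi_2))$, so is the overall min. This yields quasi-concavity of $\mathcal{L}_c$ and of the first form of $\mathcal{L}_t$.

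\textbf{Main obstacle.} The hard case is the capped version $\mathcal{L}_t=\sum_{s,t}\min(\mathcal{D}_{s,t},f_{s,t})$, because a sum of quasi-concave functions need not be quasi-concave. Here I would first try to absorb the cap into the configuration space. We restrict to configurations with $X_{s,t}\le\mathcal{D}_{s,t}$, which is convex, and on which the cap is inactive since $f_{s,t}\le X_{s,t}$. We then argue that clipping any configuration to this set does not decrease $\mathcal{L}_t$. If this fails, the fallback is to state the proposition for the uncapped form and invoke the relaxation remark following Assumption~\ref{asm:a}.
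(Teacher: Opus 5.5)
Your arguments for $\mathcal{L}_l$, for $\mathcal{L}_c$, and for the uncapped $\mathcal{L}_t$ are correct. They use the same two ingredients as the paper: Lemma~\ref{lem:ineqn} and the fact that the throttling factor is a single global number. Only the packaging differs. The paper keeps $\mathcal{L}_t=S/\phi$ with $\phi=\max(\max_e x_e/c_e,1)$ convex. It applies Lemma~\ref{lem:ineqn} once to the endpoint values and then passes to the midpoint via $\phi''\le\lambda\phi+(1-\lambda)\phi'$ (inequality~(\ref{ineqn:1})). That is compact and works verbatim for any nonnegative linear numerator over a positive convex denominator. You instead expand $\phi$ into its linear pieces, so $S/\phi=\min(S,\min_e c_eS/x_e)$ becomes a minimum of linear-fractional maps that are quasi-linear along segments, and then use that minima preserve quasi-concavity. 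This is more modular and makes the outer minimum over pairs in $\mathcal{L}_c$ fully transparent.

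One detail needs fixing. The zero-demand convention does not keep your denominators positive, because $x_e(\pi)=0$ whenever no weight sits on paths through $e$. Set $c_eS/x_e=+\infty$ there. Lemma~\ref{lem:ineqn} still holds along a segment when one endpoint denominator vanishes or a numerator is $0$.

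Your ``main obstacle'' is not an obstacle in the paper's setting, and you should discard the clipping step. With $x_p(\pi)=\mathcal{D}_{s,t}r_p$ and $\sum_{p\in P_{s,t}}r_p\le 1$, which is the domain you yourself adopted, $f_{s,t}\le X_{s,t}\le\mathcal{D}_{s,t}$ holds identically. So the capped and uncapped forms of $\mathcal{L}_t$ coincide, and your Steps~2--3 already cover both; the paper's proof treats only the uncapped form.

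On an enlarged domain the clipping claim is false. Take two pairs on disjoint single-edge paths with $c_{e_A}=100$, $c_{e_B}=1$, $\mathcal{D}_A=1$, $\mathcal{D}_B=10$, $X_A=20$, $X_B=10$. Then $\phi=10$ and the capped value is $1+1=2$. Clipping $X_A$ to $1$ leaves $\phi=10$ and gives $0.1+1=1.1$. Even a correct value comparison would not show convex superlevel sets on the larger domain.

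If you want the capped form with unnormalized allocations (as in Algorithm~\ref{alg:mf}), work with superlevel sets directly. Since $\phi\ge 1$, multiply by $\phi$ and write $\sum_{s,t}\min(\mathcal{D}_{s,t}\phi,X_{s,t})$ as a minimum over sets $J$ of pairs to get
\[
\mathcal{L}_t(\pi)\ge\alpha
\iff
\sum_{(s,t)\notin J}X_{s,t}(\pi)\ \ge\ \Bigl(\alpha-\sum_{(s,t)\in J}\mathcal{D}_{s,t}\Bigr)\phi(\pi)\quad\text{for every set } J \text{ of pairs}.
\]
Constraints with a nonpositive coefficient on $\phi$ are vacuous. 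The rest are superlevel sets of a linear function minus a positive multiple of the convex $\phi$, hence convex, and so is their intersection. The shared denominator is what prevents the usual failure of quasi-concavity under sums.
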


\begin{proof}
    {We prove the corresponding properties of the three functions sequentially. 
    First of all, according to Defination~\ref{def:qconv}, we shall prove a function $f$ is quasi-concave if $f\left( \lambda x + \left( 1-\lambda \right) x' \right) \geq \min \left( f\left(x\right), f\left(x'\right) \right)$ for any $\lambda \in [0, 1]$.}
    
    {\textbf{\textbullet\ $\mathcal{L}_l$ is convex.} The function $f_e(\pi)$ is, by definition, linear in the splitting ratios $\mathbf{\pi}$. Since the $\ell_l$ is expressed as the maximum of functions that are linear in $\mathbf{\pi}$, and the maximum of linear (or convex) functions is convex, it follows that $\mathcal{L}_{l}(\pi)=\max_{e \in \mathcal{E}} \frac{f_e (\pi)}{c_e}$ is convex with respect to $\pi$.}

    {\textbf{\textbullet\ $\mathcal{L}_t$ is quasi-concave.} We denote by $\phi$ the normalization constant $1 / \min \left( \min_e  \frac{c_e}{x_e(\pi)}, 1 \right) = \max \left(\max_e \frac{x_e(\pi)}{c_e}, 1 \right)$, then $\mathcal{L}_t = \sum_{s,t} \sum_{p \in P_{s,t}} \frac{x_p(\pi)}{\phi}$. We also denote $x_p(\pi'') = \lambda x_p(\pi) + \left( 1-\lambda \right) x_p(\pi')$, and let $\phi''$ denote its corresponding normalization constant, respectively. Therefore, by using Lemma~\ref{lem:ineqn},}
    {\[\begin{aligned}
        \min \left( \mathcal{L}_t(\pi), \mathcal{L}_t(\pi') \right) =& \min \left( \sum_{s,t} \sum_{p \in P_{s,t}} \frac{x_p(\pi)}{\phi}, \sum_{s,t} \sum_{p \in P_{s,t}} \frac{x_p(\pi')}{\phi'} \right) \\
        \leq& \frac{\lambda \sum_{s,t} \sum_{p \in P_{s,t}} {x_p(\pi)} + \left( 1 - \lambda \right) \sum_{s,t} \sum_{p \in P_{s,t}} {x_p(\pi')}}{\lambda \phi + \left( 1-\lambda \right) \phi'} \\
        =& \sum_{s,t} \frac{\sum_{p \in P_{s,t}} \left( \lambda x_p(\pi) +\left(1-\lambda\right) x_p(\pi') \right)}{\lambda \phi + \left( 1-\lambda \right) \phi'} \\
        \leq& \sum_{s,t} \sum_{p \in P_{s,t}} \frac{x_p(\pi'')}{\phi''} = \mathcal{L}_t(\pi''),
    \end{aligned}\]
    where the second inequality is because \begin{equation}
    \begin{aligned}
        \phi'' &= \max\left(\max_e \frac{x_e(\pi'')}{c_e}, 1\right) = \max\left(\max_e \frac{\lambda x_e(\pi) + (1-\lambda)x_e(\pi')}{c_e}, 1\right) \\
        &\leq \max\left(\max_e \frac{\lambda x_e(\pi)}{c_e}, \lambda\right) + \max\left(\max_e \frac{(1-\lambda)x_e(\pi')}{c_e}, 1-\lambda\right) \\
        &= \lambda \max\left(\max_e \frac{x_e(\pi)}{c_e}, 1\right) + (1-\lambda) \max\left(\max_e \frac{x_e(\pi')}{c_e}, 1\right)
        = \lambda \phi + \left( 1-\lambda \right) \phi'.
    \end{aligned}
    \label{ineqn:1}
    \end{equation}}

    {\textbf{\textbullet\ $\mathcal{L}_c$ is quasi-concave.} To analyze $\mathcal{L}_c$, in the following proof, we reuse some definitions from the proof of $\mathcal{L}_t$. Similarly, we have}
    {\[\begin{aligned}
         \min \left( \mathcal{L}_c(\pi), \mathcal{L}_c(\pi') \right) =& \min \left( \min \left\{ \frac{\sum_{p\in P_{s,t}} \frac{x_p \left( \pi \right)}{\phi} }{\mathcal{D}_{s,t}} \right\}, \min \left\{ \frac{\sum_{p\in P_{s,t}} \frac{x_p \left( \pi' \right)}{\phi'} }{\mathcal{D}_{s,t}} \right\} \right)\\
        \leq& \frac{\lambda \min \left\{ \frac{\sum_{p\in P_{s,t}} \frac{x_p \left( \pi \right)}{\phi} }{\mathcal{D}_{s,t}} \right\} + \left( 1-\lambda \right) \min \left\{ \frac{\sum_{p\in P_{s,t}} \frac{x_p \left( \pi' \right)}{\phi'} }{\mathcal{D}_{s,t}} \right\}}{\lambda \phi + \left( 1-\lambda \right) \phi'} \\
        =& \min \left\{ \frac{\sum_{p\in P_{s,t}} \frac{\lambda x_p \left( \pi \right) + \left( 1-\lambda \right) x_p \left( \pi' \right)}{\lambda \phi + \left( 1-\lambda \right) \phi'}}{\mathcal{D}_{s,t}} \right\} \\
        \leq& \min \left\{ \frac{\sum_{p\in P_{s,t}} \frac{x_p \left( \pi'' \right)}{\phi''} }{\mathcal{D}_{s,t}} \right\} = \mathcal{L}_c\pi''),
    \end{aligned}
    \]
    where the first inequality is by Lemma~\ref{lem:ineqn}, and the second inequality is by Ineqn~(\ref{ineqn:1}).}
\end{proof}

\begin{lemma}
    Let $ A $ be a nonempty open convex set, and let $ f: A \to \mathbb{R} $ be a convex function. Let $ \rho > 0 $. Then the following are equivalent: \textcircled{\scriptsize 1} $ f $ is $\rho$-Lipschitz continuous on $ A $. \textcircled{\scriptsize 2} At every point $ w \in A $, every subgradient $ v \in \partial f(w) $ satisfies \( \|v\| \leq \rho \).
    \label{lemma:rho_bound}
\end{lemma}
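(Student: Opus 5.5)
To prove Lemma~\ref{lemma:rho_bound}, I would establish the two implications separately, using only Definition~\ref{def:lip} and Definition~\ref{def:subg}, and treating $f$ as scalar-valued.

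\textbf{Direction \textcircled{\scriptsize 1}$\Rightarrow$\textcircled{\scriptsize 2}.} Fix $w\in A$ and $v\in\partial f(w)$, and assume $v\neq 0$, since otherwise there is nothing to show. Because $A$ is open, there is an $\epsilon>0$ such that $u = w + \epsilon\, v/\|v\|$ lies in $A$. The subgradient inequality of Definition~\ref{def:subg} applied at $u$ gives
\[
f(u)-f(w)\ \ge\ \langle u-w, v\rangle = \epsilon\|v\| .
\]
Lipschitzness gives $f(u)-f(w)\le \rho\|u-w\| = \rho\epsilon$. Comparing the two bounds and dividing by $\epsilon$ yields $\|v\|\le\rho$.

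\textbf{Direction \textcircled{\scriptsize 2}$\Rightarrow$\textcircled{\scriptsize 1}.} Fix $w,u\in A$. I need the existence of subgradients at every point of an open convex domain, which is the standard supporting-hyperplane fact for convex functions on open sets. I would invoke it as known, since the paper implicitly presupposes $\partial f(w)\neq\emptyset$ by stating \textcircled{\scriptsize 2}. Take $v\in\partial f(w)$. The subgradient inequality at $u$ gives
\[
f(w)-f(u)\ \le\ \langle v, w-u\rangle\ \le\ \|v\|\,\|w-u\|\ \le\ \rho\|w-u\|,
\]
where the middle step is Cauchy--Schwarz. Swapping the roles of $w$ and $u$, now using a subgradient at $u$, gives the reverse inequality. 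Together these give $|f(w)-f(u)|\le\rho\|w-u\|$.

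The only delicate points are two. First, openness of $A$ is needed in the first direction so that the perturbed point $u$ remains in the domain. Second, nonemptiness of the subdifferential is needed in the second direction. I expect the latter to be the main obstacle if it must be proved rather than cited. In that case I would argue via the separating hyperplane theorem applied to the epigraph of $f$ at the boundary point $(w,f(w))$. Openness of $A$ ensures the separating hyperplane is non-vertical, so it can be normalized to a subgradient. Everything else is routine.
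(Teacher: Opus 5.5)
Your proof is correct and follows the paper's argument essentially step for step. For the forward direction you perturb to $u = w + \epsilon v/\|v\|$ and compare the subgradient inequality with the Lipschitz bound; for the converse you apply the subgradient inequality with Cauchy--Schwarz in both directions. Your explicit treatment of the case $v=0$, and of the nonemptiness of $\partial f$ on an open convex domain, fills in two points the paper leaves implicit.
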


\begin{proof}
    First, we assume that $f$ is $\rho$-Lipschitz. Now choose $w \in A$ and $v \in \partial f(w)$. Since $A$ is open, we have $u = w + \frac{v}{\|v\|} \epsilon \in A$ by choosing appropriate $\epsilon > 0$. We can write $\left\langle {u - w, v} \right\rangle = \epsilon \|v\|$ and $\|u-v\|=\epsilon$. Using Definition~\ref{def:subg}, i.e., $f(u) \geq f(w) + \left\langle {u - w, v} \right\rangle$, we obtain $f(u) - f(w) \ge \epsilon \|v\|$. Recall that $f(u) - f(w) \leq \rho \|u-w\| = \rho \epsilon$, which means $\|v\| \leq \rho$. 
    
    Second, we assume that $\|v\| \leq \rho$. From the definition of the subgradient, $f(w) - f(u) \leq \left\langle {v, w - u} \right\rangle$. Now using Cauchy-Schwartz inequality we have $f(w) - f(u) \leq \|v\|\|w - u\| \leq \rho \|w - u\|$. By repeating similar steps, also we have $f(u) - f(w) \leq \rho \|w - u\|$. Therefore, $\|f(u) - f(w)\| \leq \rho \|w - u\|$ — $f$ is $\rho$-Lipschitz. 
\end{proof}

\begin{lemma}
    Consider a convex, $\rho$-Lipschitz function $f$. Let $w^*$ be a vector minimizing $f(w)$, in which $\|w\|$ is bounded by $B$, and GD algorithm with an update rule of the form $w^{(t+1)} = w^{(t)} - \eta v_t$ where $v_t \in \partial f(w)$, $w^{(1)}=0$ and $\bar{w} = \frac{1}{T}\sum_{t=1}^Tw^{(t)}$. If we run the algorithm on $f$ for $T \ge \frac{B^2\rho^2}{\epsilon^2}$ steps with $\eta = \sqrt{\frac{B^2}{\rho^2 T}}$, then the output vector $\bar{w}$ satisfies \( f(\bar{w})-f(w^*) \leq \epsilon, \forall \epsilon > 0 \).
    \label{lemma:gd_cvg}
\end{lemma}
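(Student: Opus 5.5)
The plan is the standard online subgradient descent argument with a telescoping potential $\|w^{(t)}-w^*\|^2$, followed by Jensen's inequality to pass to the averaged iterate $\bar w$.

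\textbf{Step 1 (one-step inequality).} By convexity and Definition~\ref{def:subg}, with $v_t\in\partial f(w^{(t)})$, we have
\[
f(w^{(t)})-f(w^*)\le \langle w^{(t)}-w^*, v_t\rangle .
\]
So it suffices to bound $\sum_{t=1}^T\langle w^{(t)}-w^*,v_t\rangle$. Expanding the update $w^{(t+1)}=w^{(t)}-\eta v_t$ gives the identity
\[
\|w^{(t+1)}-w^*\|^2=\|w^{(t)}-w^*\|^2-2\eta\langle w^{(t)}-w^*,v_t\rangle+\eta^2\|v_t\|^2 .
\]
Rearranging yields
\[
\langle w^{(t)}-w^*,v_t\rangle=\frac{1}{2\eta}\bigl(\|w^{(t)}-w^*\|^2-\|w^{(t+1)}-w^*\|^2\bigr)+\frac{\eta}{2}\|v_t\|^2 .
\]

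\textbf{Step 2 (telescoping and gradient bound).} Summing over $t=1,\dots,T$ telescopes the squared distances. Dropping the nonpositive term $-\|w^{(T+1)}-w^*\|^2$ and using $w^{(1)}=0$ with $\|w^*\|\le B$ gives
\[
\sum_{t=1}^T\langle w^{(t)}-w^*,v_t\rangle\le \frac{B^2}{2\eta}+\frac{\eta}{2}\sum_{t=1}^T\|v_t\|^2 .
\]
Lemma~\ref{lemma:rho_bound} says that every subgradient of a $\rho$-Lipschitz convex function satisfies $\|v_t\|\le\rho$. Hence the right-hand side is at most $\frac{B^2}{2\eta}+\frac{\eta T\rho^2}{2}$. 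Substituting $\eta=\sqrt{B^2/(\rho^2T)}$ balances the two terms and gives the bound $B\rho\sqrt{T}$.

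\textbf{Step 3 (averaging).} By convexity (Jensen's inequality, Definition~\ref{def:cov}),
\[
f(\bar w)-f(w^*)\le \frac1T\sum_{t=1}^T\bigl(f(w^{(t)})-f(w^*)\bigr)\le \frac{B\rho}{\sqrt T}.
\]
The condition $T\ge B^2\rho^2/\epsilon^2$ is exactly what makes $B\rho/\sqrt T\le\epsilon$, which proves the claim.

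The only delicate point is applying Lemma~\ref{lemma:rho_bound}, which is stated on an open convex set. I would assume $f$ is convex and $\rho$-Lipschitz on an open convex set containing all iterates, for instance $\mathbb{R}^d$ or a neighbourhood of the feasible region. I would also read the statement's $v_t\in\partial f(w)$ as $v_t\in\partial f(w^{(t)})$. Everything else is routine algebra.
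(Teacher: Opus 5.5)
Your proposal is correct and follows essentially the same argument as the paper: the subgradient inequality, the identity for $\|w^{(t+1)}-w^*\|^2$ telescoped using $w^{(1)}=0$ and $\|w^*\|\le B$, the bound $\|v_t\|\le\rho$ from Lemma~\ref{lemma:rho_bound}, the balancing choice of $\eta$, and Jensen's inequality to pass to $\bar w$; the paper simply applies Jensen first rather than last. Your two caveats, reading $v_t\in\partial f(w^{(t)})$ and requiring $f$ to be convex and $\rho$-Lipschitz on an open convex set containing all iterates, are exactly the assumptions the paper's proof relies on without stating them.
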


\begin{proof}
    We start by Jensen's inequality, 
    \[
        f(\bar{w})-f(w^*) = f\left(\frac{1}{T}\sum_{t=1}^Tw^{(t)}\right)-f(w^*) \leq \frac{1}{T}\sum_{t=1}^Tf\left(w^{(t)}\right)-f(w^*).
    \]
    Thus we have $f(\bar{w})-f(w^*) \leq \frac{1}{T}\sum_{t=1}^T\left(f\left(w^{(t)}\right)-f(w^*)\right)$. using Definition~\ref{def:subg}, we can write
    \[
       f(\bar{w})-f(w^*) \leq \frac{1}{T}\sum_{t=1}^T\left(f\left(w^{(t)}\right)-f(w^*)\right) \leq \frac{1}{T}\sum_{t=1}^T\left\langle {w^{(t)}-w^*, v_t} \right\rangle.
    \]
    To bound the right-hand side, we focus on $\left\langle {w^{(t)}-w^*, v_t} \right\rangle$. Given the positive strength $\eta$,
    \[
         \begin{aligned}
             \left\langle {w^{(t)}-w^*, v_t} \right\rangle = \frac{1}{\eta} \left\langle {w^{(t)}-w^*, \eta v_t} \right\rangle & = \frac{1}{2\eta} \left( \left\|w^{(t)}-w^*\right\|^2 - \left\| w^{(t)} - w^* - \eta v_t \right\|^2 + \eta^2 \left\| v_t \right\|^2 \right) \\
             & = \frac{1}{2\eta} \left( \left\|w^{(t)}-w^*\right\|^2 - \left\| w^{(t+1)} - w^* \right\|^2 + \eta^2 \left\| v_t \right\|^2 \right).
         \end{aligned}
    \]
    Then we have that
    \[
        \begin{aligned}
            \sum_{t=1}^T\left\langle {w^{(t)}-w^*, v_t} \right\rangle & = \frac{1}{2\eta} \sum_{t=1}^T \left( \left\|w^{(t)}-w^*\right\|^2 - \left\| w^{(t+1)} - w^* \right\|^2 + \eta^2 \left\| v_t \right\|^2 \right) \\
            & = \frac{1}{2\eta} \underbrace{\left\|w^{(1)}-w^*\right\|^2}_{w^{(1)}=0} - \frac{1}{2\eta} \underbrace{\left\|w^{(t+1)}-w^*\right\|^2}_{>0} + \frac{\eta}{2} \sum_{t=1}^T \left\| v_t \right\|^2 \\ 
            & \leq \frac{1}{2\eta} \left\|w^*\right\|^2 + \frac{\eta}{2} \sum_{t=1}^T \left\| v_t \right\|^2
        \end{aligned}
    \]
    Now according to Lemma~\ref{lemma:rho_bound},  $\|v_t\| \leq \rho$, and $\|w^*\| \leq B$. Plugging them along with $\eta = \sqrt{\frac{B^2}{\rho^2 T}}$ in the above inequality, we have
    \( \frac{1}{T} \sum_{t=1}^T\left\langle {w^{(t)}-w^*, v_t} \right\rangle \leq \frac{B\rho}{\sqrt{T}} \). Therefore, for all $\epsilon > 0$, after $T \ge \frac{B^2\rho^2}{\epsilon^2}$ iterations of gradient descent, the objective function converges, i.e., $f(\bar{w})-f(w^*) \leq \epsilon$, which concludes the proof.
\end{proof}

\begin{lemma}
    (Policy Gradient Theorem,~\cite{sutton1999policy}) For any MDP $\mathcal{M} = (\mathcal{S}, \mathcal{A}, P, R, \gamma)$, 
    \[
        \frac{\partial \bar{R}}{\partial \theta} = \sum_s d^\pi(s) \sum_a \frac{\partial \pi(s, a)}{\partial \theta} Q^\pi(s, a).
    \]
    \label{lemma:mdp_avg}
\end{lemma}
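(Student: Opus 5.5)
The plan is the standard ``unrolling'' argument of \cite{sutton1999policy} for the start-state formulation, using the notation of Definition~\ref{def:mdp}. Write $V^\pi(s)=\sum_a \pi(s,a)\,Q^\pi(s,a)$ for the state-value function, so that $\bar{R}(\pi)=V^\pi(s_0)$. The action value satisfies the Bellman identity
\[
Q^\pi(s,a)=R(s,a)+\gamma\sum_{s'}P(s',s,a)\,V^\pi(s').
\]
The only place $\theta$ enters is through $\pi$. Both $R$ and $P$ are independent of $\theta$, so differentiating the Bellman identity gives $\partial Q^\pi(s,a)/\partial\theta=\gamma\sum_{s'}P(s',s,a)\,\partial V^\pi(s')/\partial\theta$. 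I would assume finite (or suitably bounded) state and action spaces and $\gamma<1$, so that all sums and derivatives can be interchanged.

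The first step is to differentiate $V^\pi(s)$ with the product rule:
\[
\frac{\partial V^\pi(s)}{\partial\theta}=\sum_a\frac{\partial\pi(s,a)}{\partial\theta}Q^\pi(s,a)+\gamma\sum_a\pi(s,a)\sum_{s'}P(s',s,a)\frac{\partial V^\pi(s')}{\partial\theta}.
\]
The second term is $\gamma$ times the one-step transition kernel under $\pi$ applied to $\partial V^\pi/\partial\theta$. Writing $\phi(s)=\sum_a \partial_\theta\pi(s,a)\,Q^\pi(s,a)$, the identity reads $\nabla V=\phi+\gamma P_\pi\nabla V$.

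The second step is to unroll this recursion $k$ times. This yields
\[
\frac{\partial V^\pi(s_0)}{\partial\theta}=\sum_{x}\sum_{k=0}^{\infty}\gamma^k\Pr\{s_0\to x \text{ in } k \text{ steps}\mid\pi\}\,\phi(x).
\]
The remainder term $\gamma^{k}P_\pi^{k}\nabla V$ vanishes as $k\to\infty$, because $\nabla V$ is bounded (bounded rewards and $\gamma<1$ give bounded $Q^\pi$), and the series converges absolutely. Equivalently, the unrolled sum is the Neumann series $(I-\gamma P_\pi)^{-1}\phi$. I then identify the coefficient $\sum_k\gamma^k\Pr\{s_k=x\mid s_0,\pi\}$ with the discounted state weighting $d^\pi(x)$, which gives the claimed formula $\partial\bar R/\partial\theta=\sum_s d^\pi(s)\sum_a \partial_\theta\pi(s,a)\,Q^\pi(s,a)$.

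The main obstacle is bookkeeping rather than analysis: the paper's definitions of $\bar{R}$, $Q^\pi$ and $d^\pi$ use shifted time indices ($\gamma^{t-1}$ starting at $t=1$ in $\bar{R}$, $\gamma^{t}$ in $d^\pi$). I would fix the convention explicitly so that the discount powers in $d^\pi$ match the unrolled recursion exactly, absorbing any constant factor of $\gamma$ into the definition. The justification for interchanging differentiation with the infinite sum is also delicate and would be handled by the boundedness and contraction argument above. In the one-step MCF setting used by \textsc{Pram}, the formula collapses to $\sum_s d(s)\sum_a\partial_\theta\pi(s,a)R(s,a)$, which is what is subsequently needed for Lemma~\ref{lemma:1}.
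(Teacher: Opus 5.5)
Your proposal is correct and follows the paper's proof essentially step for step. Both arguments use the Bellman identity for $Q^\pi$, differentiate $V^\pi(s)=\sum_a\pi(s,a)Q^\pi(s,a)$ by the product rule, unroll the recursion into $\sum_x\sum_{k\ge0}\gamma^k\Pr(s_0\to x,k,\pi)\sum_a\partial_\theta\pi(x,a)Q^\pi(x,a)$, and identify the coefficient with $d^\pi$. Your extra care about the vanishing remainder $\gamma^kP_\pi^k\nabla V$ and about the index convention for $d^\pi$ fills in points the paper glosses over without changing the route. On the latter, the paper's stated definition, with $\gamma^t$ for $t\ge1$, does not literally match the $k=0$-inclusive sum its own proof produces.
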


\begin{proof}
    We begin with the following observation:
    \[
        \begin{aligned}
            Q^\pi(s, a) & = \mathbb{E} \left[ r_{t+1} + \gamma \sum_{k=2}^{\infty} \gamma^{k-2} r_{t+k} \,\middle|\, s_t = s, a_t = a, \pi \right] \\
            & = \mathbb{E} \left( r_{t+1} \mid s_t = s, a_t = a\right) + \sum_{s'} \gamma \Pr \left\{s_{t+1} = s' \mid s_t = s, a_t = a \right\} \mathbb{E} \left[ \sum_{k=2}^{\infty} \gamma^{k-2} r_{t+k} \,\middle|\, s_{t+1} = s', \pi \right] \\
            & = R(s,a) + \sum_{s'} \gamma P(s',s,a) \sum_a \pi(s', a) Q^\pi(s', a).
        \end{aligned}
    \]
    Defining the value function as $V^\pi(s) = \sum_a \pi(s, a) Q^\pi(s, a), \forall s \in \mathcal{S}$, we have that
    \begin{align*}
        \frac{\partial V^\pi(s)}{\partial \theta} &= \frac{\partial}{\partial \theta} \sum_a \pi(s, a) Q^\pi(s, a) \\
        &= \sum_a \left[ \frac{\partial \pi(s, a)}{\partial \theta} Q^\pi(s, a) + \pi(s, a) \frac{\partial}{\partial \theta} Q^\pi(s, a) \right] \\
        &= \sum_a \left[ \frac{\partial \pi(s, a)}{\partial \theta} Q^\pi(s, a) + \pi(s, a) \frac{\partial}{\partial \theta} \left[ R(s,a) + \sum_{s'} \gamma P(s',s,a) V^\pi(s') \right] \right] \\
        &= \sum_a \left[ \frac{\partial \pi(s, a)}{\partial \theta} Q^\pi(s, a) + \pi(s, a) \sum_{s'} \gamma P(s',s,a) \frac{\partial V^\pi(s')}{\partial \theta} \right],
    \end{align*}
    By expanding the recursion iteratively, i.e., $\frac{\partial V^\pi(s)}{\partial \theta} \to \frac{\partial V^\pi(s')}{\partial \theta} \to \frac{\partial V^\pi(s'')}{\partial \theta} \to \dots$, we obtain
    \[
       \frac{\partial V^\pi(s)}{\partial \theta} = \sum_x \sum_{k=0}^\infty \gamma^k Pr(s \to x, k, \pi) \sum_a \frac{\partial \pi(x, a)}{\partial \theta} Q^\pi(x, a),
    \]
    where $\Pr(s \to x, k, \pi)$ denotes the probability of reaching state $x$ from state $s$ in exactly $k$ steps under policy $\pi$. Also we have
    \[
        V^\pi(s) =\sum_a \pi(s,a)\,\mathbb{E}\!\left[\sum_{k=1}^\infty \gamma^{k-1} r_{t+k}\mid s_t=s,a_t=a,\pi\right] = \mathbb{E}\!\left[\sum_{k=1}^\infty \gamma^{k-1} r_{t+k}\mid s_t=s,\pi\right].
    \]
    Finally, the gradient of the expected discounted return $\bar{R}$ follows directly:
    \begin{align*}
        \frac{\partial \bar{R}}{\partial \theta} &= \frac{\partial}{\partial \theta} \mathbb{E} \left[ \sum_{t=1}^\infty \gamma^{t-1} r_t \mid s_0, \pi \right] = \frac{\partial}{\partial \theta} V^\pi(s_0) \\
        &= \sum_s \sum_{k=0}^\infty \gamma^k Pr(s_0 \to s, k, \pi) \sum_a \frac{\partial \pi(s, a)}{\partial \theta} Q^\pi(s, a) \\
        &= \sum_s d^\pi(s) \sum_a \frac{\partial \pi(s, a)}{\partial \theta} Q^\pi(s, a). 
\end{align*}
    Hence, the proof is complete.
\end{proof}

\begin{lemma}
Let \( f_w : \mathcal{S} \times \mathcal{A} \rightarrow \mathbb{R} \) be a parametric approximation to \( Q^\pi(s,a) \) with parameter \( w \). If \( f_w \) is locally optimal and satisfies
\(
    \frac{\partial f_w(s, a)}{\partial w} = \frac{1}{\pi(s, a)} \frac{\partial \pi(s, a)}{\partial \theta},
\)
then
\(
    \frac{\partial \bar{R}}{\partial \theta} = \sum_s d^\pi(s) \sum_a \frac{\partial \pi(s, a)}{\partial \theta} f_w(s, a).
\)
\label{lemma:mdp_app}
\end{lemma}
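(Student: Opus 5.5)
This is the compatible function approximation result of \cite{sutton1999policy}, and I would prove it by the same two-step argument. The idea is to show that the approximation error of $f_w$ is orthogonal to the policy score. Then substitute $f_w$ for $Q^\pi$ in Lemma~\ref{lemma:mdp_avg}.

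First, make "locally optimal" precise. I would take it to mean that $w$ is a stationary point of the on-policy weighted squared error
\[
\mathcal{E}(w)=\frac{1}{2}\sum_s d^\pi(s)\sum_a \pi(s,a)\bigl(Q^\pi(s,a)-f_w(s,a)\bigr)^2 .
\]
Equivalently, $w$ is a fixed point of the process that regresses $f_w$ onto unbiased estimates of $Q^\pi$, with $(s,a)$ sampled from $d^\pi$ and $\pi$. Setting $\partial\mathcal{E}/\partial w=0$ gives
\[
\sum_s d^\pi(s)\sum_a \pi(s,a)\bigl(Q^\pi(s,a)-f_w(s,a)\bigr)\frac{\partial f_w(s,a)}{\partial w}=0 .
\]

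Second, apply the compatibility hypothesis $\frac{\partial f_w}{\partial w}=\frac{1}{\pi}\frac{\partial \pi}{\partial\theta}$. The factor $\pi(s,a)$ cancels, which leaves
\[
\sum_s d^\pi(s)\sum_a \frac{\partial \pi(s,a)}{\partial\theta}\bigl(Q^\pi(s,a)-f_w(s,a)\bigr)=0 .
\]
For this to make sense, $w$ and $\theta$ must have the same dimension, which the hypothesis already implies. We also need $\pi(s,a)>0$ wherever $d^\pi(s)>0$, so the division is legitimate. Our Gaussian policy in Appendix~\ref{subsec:adapt_alg} satisfies this. Third, Lemma~\ref{lemma:mdp_avg} gives $\frac{\partial\bar R}{\partial\theta}=\sum_s d^\pi(s)\sum_a \frac{\partial\pi}{\partial\theta}Q^\pi$. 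Subtracting the zero quantity from the second step yields the claimed identity with $f_w$ in place of $Q^\pi$.

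The main obstacle is interpretive rather than computational: what "locally optimal" means. If it is read as a local minimizer of some error other than the $d^\pi\pi$-weighted squared error, the orthogonality condition does not follow. I will therefore state this weighting explicitly as the definition. For continuous action spaces, such as our Gaussian heads, the sums over $a$ become integrals. Exchanging $\partial/\partial w$ with these sums or integrals then needs boundedness of $Q^\pi$, which holds because $R$ is bounded. It also needs $\partial f_w/\partial w$ to be integrable, which I would assume, and which holds for the Gaussian score.
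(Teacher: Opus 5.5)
Your proposal is correct and follows the paper's proof essentially step for step. Stationarity of the $d^\pi\pi$-weighted squared error gives the orthogonality condition, the compatibility hypothesis cancels $\pi(s,a)$, and subtracting the resulting zero from Lemma~\ref{lemma:mdp_avg} yields the claim; your explicit definition of ``locally optimal'' as a stationary point of $\mathcal{E}(w)$, with the same $d^\pi$ weighting used in Lemma~\ref{lemma:mdp_avg}, spells out what the paper leaves implicit in its phrase ``sampling trajectories under $\pi$''.
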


\begin{proof}
    We consider learning \( f_w \) by sampling trajectories under policy \( \pi \) and updating \( w \) via a squared-error loss:
    \(
        \Delta w_t \propto \frac{\partial}{\partial w} \big[\hat{Q}^\pi(s_t, a_t) - f_w(s_t, a_t)\big]^2 
        \propto \big[\hat{Q}^\pi(s_t, a_t) - f_w(s_t, a_t)\big] \frac{\partial f_w(s_t, a_t)}{\partial w},
    \)
    where \( \hat{Q}^\pi(s_t, a_t) \) denotes an unbiased estimator of \( Q^\pi(s_t, a_t) \). At convergence to a local optimum, the update direction vanishes in expectation, yielding
    \begin{equation}
        \sum_s d^\pi(s) \sum_a \pi(s, a) \big[Q^\pi(s, a) - f_w(s, a)\big] \frac{\partial f_w(s, a)}{\partial w} = 0.
    \end{equation}
    Substituting the assumption 
    \(
        \frac{\partial f_w(s, a)}{\partial w} = \tfrac{1}{\pi(s, a)} \tfrac{\partial \pi(s, a)}{\partial \theta}
    \),
    we obtain
    \(
        \sum_s d^\pi(s) \sum_a \frac{\partial \pi(s, a)}{\partial \theta} \big[Q^\pi(s, a) - f_w(s, a)\big] = 0.
    \)
    This shows that the approximation error of \( f_w \) is orthogonal to the gradient of the policy parameterization.
    Now recall the policy gradient theorem (Lemma~\ref{lemma:mdp_avg}) and
    subtract the orthogonality condition above, we obtain
    \[
        \begin{aligned}
            \frac{\partial \bar{R}}{\partial \theta} 
            &= \sum_s d^\pi(s) \sum_a \frac{\partial \pi(s, a)}{\partial \theta} Q^\pi(s, a) 
               - \sum_s d^\pi(s) \sum_a \frac{\partial \pi(s, a)}{\partial \theta} \big[Q^\pi(s, a) - f_w(s, a)\big] \\
            &= \sum_s d^\pi(s) \sum_a \frac{\partial \pi(s, a)}{\partial \theta} f_w(s, a).
        \end{aligned}
    \]
    This establishes the claim.
\end{proof}

\begin{lemma}
    Let $f$ be any differentiable function, and $w_t$ be a sequence generated by a GD-style update rule $w_{t+1} = w_t + \eta_t v_t$, where for all $t$, $v_t$ satisfies \textcircled{\scriptsize 1} $c_1 \left\| \nabla f\left( w_t \right) \right\|^2 \leq - \nabla f\left( w_t \right) v_t$; \textcircled{\scriptsize 2} $\left\| v_t \right\| \leq c_2 \left\| \nabla f\left( w_t \right) \right\|$, where $c_1$ and $c_2$ are some positive scalars. Assume that for some constant $L > 0$, for all $w, \bar{w} \in \mathbb{R}^n$, we have \(\left\| \nabla f\left(w\right) - \nabla f\left(\bar{w}\right) \right\|  \leq L \left\| w - \bar{w} \right\| \), and that $\eta_t \rightarrow 0, \ \sum_{t=0}^\infty \eta_t = \infty$. Then either $f(w_t) \rightarrow - \infty$ or else $f(w_t)$ converges to a finite value and $\lim_{t \rightarrow \infty} \nabla f(w_t) = 0$.
    \label{lemma:dim_grad}
\end{lemma}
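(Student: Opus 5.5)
The plan is the classical descent-lemma argument for gradient methods with diminishing step sizes, in the style of Bertsekas and Tsitsiklis. First, the Lipschitz continuity of $\nabla f$ gives, for all $w,\bar w$,
\[
f(\bar w) \le f(w) + \nabla f(w)^\top (\bar w - w) + \tfrac{L}{2}\|\bar w - w\|^2 .
\]
This follows by writing $f(\bar w)-f(w)=\int_0^1 \nabla f(w+\tau(\bar w-w))^\top(\bar w-w)\,d\tau$ and bounding the integrand's deviation from $\nabla f(w)^\top(\bar w-w)$ by $L\tau\|\bar w-w\|^2$.

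Next, substitute $\bar w = w_{t+1} = w_t+\eta_t v_t$ and use conditions \textcircled{\scriptsize 1} and \textcircled{\scriptsize 2}:
\[
f(w_{t+1}) \le f(w_t) - \eta_t\Big(c_1 - \tfrac{L}{2}\eta_t c_2^2\Big)\|\nabla f(w_t)\|^2 .
\]
Since $\eta_t\to 0$, there is $\bar t$ such that $\tfrac{L}{2}\eta_t c_2^2 \le c_1/2$ for all $t\ge \bar t$. From then on,
\[
f(w_{t+1}) \le f(w_t) - \tfrac{c_1}{2}\eta_t \|\nabla f(w_t)\|^2 ,
\]
so $f(w_t)$ is eventually monotonically nonincreasing. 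Either it tends to $-\infty$, or it converges to a finite limit. In the latter case, summing the inequality gives $\sum_t \eta_t\|\nabla f(w_t)\|^2<\infty$. Combined with $\sum_t\eta_t=\infty$, this forces $\liminf_t\|\nabla f(w_t)\|=0$.

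The main obstacle is upgrading this $\liminf$ to a full limit. I will argue by contradiction. Suppose $\limsup_t\|\nabla f(w_t)\|>\epsilon>0$. Then there are infinitely many disjoint index windows $[m_j,n_j)$ on which the gradient norm crosses upward: $\|\nabla f(w_{m_j})\|<\epsilon/2$, $\|\nabla f(w_{n_j})\|>\epsilon$, and $\epsilon/2\le\|\nabla f(w_t)\|\le\epsilon$ for $m_j<t<n_j$. (Alternatively, one can use windows where the norm starts at or above $\epsilon/2$; either variant works.) On such a window, the Lipschitz gradient and condition \textcircled{\scriptsize 2} give
\[
\epsilon/2 \le \|\nabla f(w_{n_j})\|-\|\nabla f(w_{m_j})\| \le L\sum_{t=m_j}^{n_j-1}\eta_t\|v_t\| \le Lc_2\sum_{t=m_j}^{n_j-1}\eta_t\|\nabla f(w_t)\| .
\]
The step at $t=m_j$ contributes at most $Lc_2\eta_{m_j}\epsilon/2$, which tends to $0$. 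Hence, for $j$ large, $\sum_{t=m_j+1}^{n_j-1}\eta_t \ge \tfrac{1}{4Lc_2}$, up to the vanishing boundary term. On those indices $\|\nabla f(w_t)\|^2\ge\epsilon^2/4$, so each window contributes at least a fixed positive amount, about $\epsilon^2/(16Lc_2)$, to $\sum_t\eta_t\|\nabla f(w_t)\|^2$.

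Infinitely many disjoint windows would make this series diverge, contradicting its finiteness. Therefore $\lim_{t\to\infty}\nabla f(w_t)=0$. The care needed is in the bookkeeping of window endpoints and in absorbing the single boundary step using $\eta_t\to0$.
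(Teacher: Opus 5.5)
Your proof is correct and follows essentially the same Bertsekas--Tsitsiklis argument as the paper: the descent inequality, eventual monotonicity, and an up-crossing window argument contradicting the finiteness of $\lim f(w_t)$. You phrase the contradiction via summability of $\sum_t\eta_t\|\nabla f(w_t)\|^2$ and absorb the left endpoint as a vanishing boundary term, whereas the paper bounds that endpoint below by $\epsilon/4$; your explicit step $\liminf_t\|\nabla f(w_t)\|=0$ is also worth keeping, since it is what guarantees infinitely many indices with $\|\nabla f(w_t)\|<\epsilon/2$, which the paper's proof uses without justification.
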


\begin{proof}
    We begin by establishing a standard descent inequality. Fix $w, z \in \mathbb{R}^n$, and define $g(\xi) = f(w + \xi z)$ for $\xi \in [0,1]$. By the chain rule,
    \(
        g'(\xi) = \langle z, \nabla f(w + \xi z) \rangle.
    \)
    Applying the fundamental theorem of calculus,
    \begin{align*}
        f(w + z) - f(w) &= g(1) - g(0) = \int_0^1 g'(\xi) \, d\xi = \int_0^1 \langle z, \nabla f(w + \xi z) \rangle \, d\xi \\
        &= \langle z, \nabla f(w) \rangle + \int_0^1 \langle z, \nabla f(w + \xi z) - \nabla f(w) \rangle \, d\xi \\
        &\leq \langle z, \nabla f(w) \rangle + \int_0^1 \|z\| \cdot \|\nabla f(w + \xi z) - \nabla f(w)\| \, d\xi \\
        &\leq \langle z, \nabla f(w) \rangle + \|z\| \int_0^1 L \xi \|z\| \, d\xi \\
        &= \langle z, \nabla f(w) \rangle + \frac{L}{2} \|z\|^2.
    \end{align*}
    Now set $z=\eta_t v_t$. Using also condition \textcircled{\scriptsize 1} and \textcircled{\scriptsize 2}, we can write
    \begin{align*}
        f(w_{t+1}) &\leq f(w_t) + \langle \eta_t v_t, \nabla f(w_t) \rangle + \frac{L}{2} \|\eta_t v_t\|^2 = f(w_t) + \eta_t \langle v_t, \nabla f(w_t) \rangle + \frac{L}{2} \eta_t^2 \|v_t\|^2 \\
        &\leq f(w_t) - \eta_t c_1 \|\nabla f(w_t)\|^2 + \frac{L}{2} \eta_t^2 c_2^2 \|\nabla f(w_t)\|^2 \\
        &= f(w_t) - \eta_t \left( c_1 - \frac{L c_2^2}{2} \eta_t \right) \|\nabla f(w_t)\|^2.
    \end{align*}
    Since $\eta_t \to 0$, there exists a positive constant $c$ such that for all $t$ beyond some index $\bar{t}$, $\bar t$, $f(w_{t+1}) \leq f(w_t) - \eta_t c \left\|\nabla f(w_t)\right\|^2$. This inequality implies that for $t \geq \bar{t}$, the sequence ${f(w_t)}$ is monotonically non-increasing. Consequently, either $f(w_t) \to -\infty$, in which case the proof is complete, or $f(w_t)$ converges to a finite limit. We now proceed under the latter. 

    To prove that $\lim_{t \to \infty} \nabla f(w_t) = 0$, suppose the contrary, namely that $\limsup_{t \to \infty} |\nabla f(w_t)| > 0$. Then there exists $\epsilon > 0$ such that $|\nabla f(w_t)| < \epsilon/2$ for infinitely many $t$, while $|\nabla f(w_t)| > \epsilon$ for infinitely many $t$. Hence, we can extract an infinite subset of indices $\mathcal{T}$ such that for each $t \in \mathcal{T}$, there exists $i(t) > t$ with
    \[
        \| \nabla f(w_t) \| < \epsilon/2, \quad \| \nabla f(w_{i(t)}) \| > \epsilon, \quad \epsilon/2 \leq \| \nabla f(w_i) \| \leq \epsilon, \quad \text{if } \ t < i < i(t).
    \]
    Meanwhile, because 
    \[
        \| \nabla f(w_{t+1}) \| - \| \nabla f(w_t) \| \leq \| \nabla f(w_{t+1}) - \nabla f(w_t) \| \leq L \| \underbrace{w_{t+1} - w_t}_{=\eta_t v_t} \| \leq \eta_t L c_2 \| \nabla f(w_t) \|,
    \] 
    it follows that for all $t \in \mathcal{T}$ that are sufficiently large so that $\gamma_t L c_2 < 1$, we have \( \frac{\epsilon}{4} \leq \| \nabla f(w_t) \|\); otherwise the condition $\epsilon/2 \leq |\nabla f(w_{t+1})|$ would be contradicted. Without loss of generality, we assume that this relation holds for all $t \in \mathcal{T}$.

    We have for all $t \in \mathcal{T}$, using the condition $\| s_t \| \leq c_2 \| \nabla f(w_t) \|$ and the Lipschitz condition:
    \begin{align*}
        \frac{\epsilon}{2} &\leq \| \nabla f(w_{i(t)}) \| - \| \nabla f(w_t) \| \leq \| \nabla f(w_{i(t)}) - \nabla f(w_t) \| \\
        &\leq L \| w_{i(t)} - w_t \| \leq L \sum_{i=t}^{i(t)-1} \eta_i \| v_i \|  \\ 
        &\leq L c_2 \sum_{i=t}^{i(t)-1} \eta_i \| \nabla f(w_i) \| \leq L c_2 \epsilon \sum_{i=t}^{i(t)-1} \eta_i,
    \end{align*}
    which gives $\sum_{i=t}^{i(t)-1} \eta_i \geq \frac{1}{2 L c_2}$. Now using $\bar t$, $f(w_{t+1}) \leq f(w_t) - \eta_t c \left\|\nabla f(w_t)\right\|^2$, for sufficiently large $t \in \mathcal{T}$, and the relation $\| \nabla f(w_i) \| \geq \epsilon/4$ for $i = t, t+1, \ldots, i(t)-1$, we can write 
    \[
        f(w_{i(t)}) \leq f(w_t) - c \left( \frac{\epsilon}{4} \right)^2 \sum_{i=t}^{i(t)-1} \eta_i, \quad \forall \ t \in \mathcal{T}.
    \]
    Since $f(r_t)$ converges to a finite value, the preceding relation implies that \( \lim_{t \to \infty, \ t \in \mathcal{T}} \sum_{i=t}^{i(t)-1} \eta_i = 0 < \frac{1}{2 L c_2} \). Therefore, the previous assumption is false, and we conclude that $\lim_{t \to \infty} \nabla f(w_t) = 0$. This concludes the proof.
\end{proof}

\subsection{Proof of Theorem~\ref{theorem:1}}
\label{subsec:t1_proof}

\begin{proof}
    The basic idea of this proof is to show that the configuration vector is upper bounded and that all three objective functions are $\rho$-Lipschitz. Then, since we assume the problem is convex, we can directly obtain the final result via Lemma~\ref{lemma:gd_cvg}. First, according to the definition of $\pi$, configuration $\|\pi\|$ is clearly bounded by $\sqrt{n^2 \times p} \buildrel\textstyle.\over= B$. Below, we discuss the Lipschitz properties of the functions.

    \paragraph{Function $\mathcal{L}_l$'s Lipschitz} We can observe that $f_e$ is linear in the path weights and so in $\pi$: $f_e = \sum_{s,t \in V} \sum_{p \in P_{s,t}} \sum_{e \ni p} \mathcal{D}_{s,t} \cdot r_p \leq \sum_{s,t \in V} \sum_{p \in P_{s,t}} \sum_{e \ni p} \mathcal{D}_{max} \cdot r_p$. So we have 
    \[
       \left\| \mathcal{L}_l\left(\pi^{(i)}\right) - \mathcal{L}_l\left(\pi^{(j)}\right) \right\| = \left\| \max_e \frac{f_e\left( \pi^{(i)} - \pi^{(j)} \right)}{c_e} \right\| \leq \frac{\mathcal{D}_{max}}{c_{min}} \left\| \pi^{(i)} - \pi^{(j)} \right\|.
    \]
    Therefore, $\mathcal{L}_l$ is $\rho$-Lipschitz, with $\rho = \frac{\mathcal{D}_{max}}{c_{min}}$.

    \paragraph{Function $\mathcal{L}_t$'s Lipschitz} From the definition of the maximum total flow, one should notice that $\mathcal{L}_t$ is the sum of $f_p$. Therefore, it suffices to show that $f_p$ is $\rho$-Lipschitz, in which case $\mathcal{L}_t$ is simply $\sum_p \rho$-Lipschitz. To focus on $\left\|f_p\left(\pi^{(i)}\right) - f_p\left(\pi^{(j)}\right)\right\|$, we distinguish between two cases according to the value of $\min\left( \min_e \frac{c_e}{x_e(\pi)}, 1 \right) \buildrel\textstyle.\over= \psi$ and make the following observation: $\psi \leq \frac{c_e}{x_e(\pi)} \leq \frac{c_{max}}{x_p(\pi)}$.

    \textcolor{brown}{\underline{\textbullet\ $\psi\left(\pi^{(j)}\right)=1$.}} We have that
    \[
         \left\|f_p\left(\pi^{(i)}\right) - f_p\left(\pi^{(j)}\right)\right\| = \left\| x_p\left(\pi^{(i)}\right) \psi\left(\pi^{(i)}\right) - x_p\left(\pi^{(j)}\right) \right\|.
    \]
    Without loss of generality, we assume that $f_p\left(\pi^{(i)}\right) \geq f_p\left(\pi^{(j)}\right)$. Then 
    \[
         \left\|f_p\left(\pi^{(i)}\right) - f_p\left(\pi^{(j)}\right)\right\| = x_p\left(\pi^{(i)}\right) \psi\left(\pi^{(i)}\right) - x_p\left(\pi^{(j)}\right) \leq x_p\left(\pi^{(i)}\right) - x_p\left(\pi^{(j)}\right).
    \]
    Also note that $x_p(\pi) = \mathcal{D}_{s,t} \cdot r_p$, where $r_p \in \pi$. Therefore,
    \[
         \left\|f_p\left(\pi^{(i)}\right) - f_p\left(\pi^{(j)}\right)\right\| \leq \mathcal{D}_{max} \left\| \pi^{(i)} - \pi^{(j)} \right\|  \leq 2 \cdot \frac{c_{max} \cdot \mathcal{D}_{max}}{c_{min}} \left\| \pi^{(i)} - \pi^{(j)} \right\|.
    \]

    \textcolor{brown}{\underline{\textbullet\ $\psi\left(\pi^{(j)}\right)<1$.}} Under the same assumption as above, we additionally set $\psi\left(\pi^{(j)}\right) = \frac{c_{e_j}}{x_{e_j}\left( \pi^{(j)} \right)}$.
    \[
         \begin{aligned}
             & \left\|f_p\left(\pi^{(i)}\right) - f_p\left(\pi^{(j)}\right)\right\| \\
             = & \ x_p\left( \pi^{(i)} \right) \cdot \psi\left( \pi^{(i)} \right) - x_p\left( \pi^{(j)} \right) \cdot \psi\left( \pi^{(j)} \right) \\
             = & \ \psi\left( \pi^{(i)} \right) \cdot x_p\left( \pi^{(j)} \right) - \psi\left( \pi^{(j)} \right) \cdot x_p\left( \pi^{(j)} \right) + \psi\left( \pi^{(i)} \right) \cdot x_p\left( \pi^{(i)} \right) - \psi\left( \pi^{(i)} \right) \cdot x_p\left( \pi^{(j)} \right) \\
             = & \ \psi\left( \pi^{(i)} \right) \cdot \psi\left( \pi^{(j)} \right) \cdot x_p\left( \pi^{(j)} \right) \cdot \left[ \frac{1}{\psi\left( \pi^{(j)} \right)} - \frac{1}{\psi\left( \pi^{(i)} \right)} \right] + \psi\left( \pi^{(i)} \right) \cdot \left[ x_p\left( \pi^{(i)} \right) - x_p\left( \pi^{(j)} \right) \right] \\
             \leq & \ \psi\left( \pi^{(i)} \right) \cdot \psi\left( \pi^{(j)} \right) \cdot x_p\left( \pi^{(j)} \right) \cdot \left[ \frac{x_{e_j\left( \pi^{(j)} \right)}}{c_{e_j}} - \frac{x_{e_j\left( \pi^{(i)} \right)}}{c_{e_j}} \right] + \psi\left( \pi^{(i)} \right) \cdot \left[ x_p\left( \pi^{(i)} \right) - x_p\left( \pi^{(j)} \right) \right] \\ 
             \leq & \ \psi\left( \pi^{(i)} \right) \cdot \psi\left( \pi^{(j)} \right) \cdot x_p\left( \pi^{(j)} \right) \cdot \left\| \frac{x_{e_j\left( \pi^{(j)} \right)}}{c_{e_j}} - \frac{x_{e_j\left( \pi^{(i)} \right)}}{c_{e_j}} \right\| + \psi\left( \pi^{(i)} \right) \cdot \left\| x_p\left( \pi^{(i)} \right) - x_p\left( \pi^{(j)} \right) \right\| \\
             \leq & \ \psi\left( \pi^{(j)} \right) \cdot x_p\left( \pi^{(j)} \right) \cdot \left\| \frac{x_{e_j\left( \pi^{(j)} \right)}}{c_{e_j}} - \frac{x_{e_j\left( \pi^{(i)} \right)}}{c_{e_j}} \right\| + \left\| x_p\left( \pi^{(i)} \right) - x_p\left( \pi^{(j)} \right) \right\| \\
             \leq & \ \frac{c_{max}}{x_p\left( \pi^{(j)} \right)} \cdot x_p\left( \pi^{(j)} \right) \cdot \left\| \frac{x_{e_j\left( \pi^{(j)} \right)}}{c_{e_j}} - \frac{x_{e_j\left( \pi^{(i)} \right)}}{c_{e_j}} \right\|  + \left\| x_p\left( \pi^{(i)} \right) - x_p\left( \pi^{(j)} \right) \right\| \\
             \leq & \ \frac{c_{max}}{c_{min}} \cdot \left\| x_{e_j\left( \pi^{(j)} \right)} - x_{e_j\left( \pi^{(i)} \right)} \right\|  + \left\| x_p\left( \pi^{(i)} \right) - x_p\left( \pi^{(j)} \right) \right\|,
         \end{aligned}
    \]
    where the second inequality is derived using $|a+b|\leq|a|+|b|$. Recall that $x_e(\pi) = \sum_{e \ni p} x_p(\pi) = \sum_{e \ni p} \mathcal{D}_{s,t} \cdot r_p$, and $x_p\left(\pi^{(i)}\right) - x_p\left(\pi^{(j)}\right) \leq \mathcal{D}_{max} \left\| \pi^{(i)} - \pi^{(j)} \right\|$. Therefore, 
    \[
        \begin{aligned}
            \left\|f_p\left(\pi^{(i)}\right) - f_p\left(\pi^{(j)}\right)\right\| & \leq \frac{c_{max}}{c_{min}} \cdot \mathcal{D}_{max} \left\| \pi^{(i)} - \pi^{(j)} \right\| + \mathcal{D}_{max} \left\| \pi^{(i)} - \pi^{(j)} \right\| \\
            & \leq 2 \cdot \frac{c_{max} \cdot \mathcal{D}_{max}}{c_{min}} \left\| \pi^{(i)} - \pi^{(j)} \right\|.
        \end{aligned}
    \]

    By combining two cases, we can conclude that the function $f_p$ is $\rho$-Lipschitz, where $\rho$ is at most $\frac{2 \cdot c_{max} \cdot \mathcal{D}_{max}}{c_{min}}$. As a result, $\mathcal{L}_t = \sum_{s,t} \sum_{p \in P_{s,t}} f_p$ is Lipschitz, and its Lipschitz constant is at most $\sum_{s,t} \sum_{p \in P_{s,t}} \frac{2 \cdot c_{max} \cdot \mathcal{D}_{max}}{c_{min}}$.

    \paragraph{Function $\mathcal{L}_c$'s Lipschitz} The proof for $\mathcal{L}_c$ follows essentially the same idea as for $\mathcal{L}_t$, as both focus on the Lipschitz property of $f_p$. Hence, we can directly reuse the result established in the proof of $\mathcal{L}_t$ for $f_p$. And $\mathcal{L}_c$ is also guaranteed to be $\rho$-Lipschitz, with $\rho$ at most $\sum_{p \in P_{s,t}} \frac{2 \cdot \mathcal{D}_{max} \cdot c_{max}}{\mathcal{D}_{min} \cdot c_{min}}$.

    We have now established that all three objective functions $\mathcal{L}$ are $\rho$-Lipschitz and convex, and that $\| \pi \|$ is bounded by $B$. Finally, by Lemma~\ref{lemma:gd_cvg}, we can write: For every $\epsilon > 0$, if we run the algorithm on $\mathcal{L}$ for finite $T$ (at least $\frac{B^2\rho^2}{\epsilon^2}$) steps with $\eta = \sqrt{\frac{B^2}{\rho^2 T}}$, then the output vector $\bar{\pi}$ satisfies \( f(\bar{\pi})-f(\pi^*) \leq \epsilon\), which completes the proof.
\end{proof}

\subsection{Proof of Lemma~\ref{lemma:1}}
\label{subsec:t2_proof}

\begin{proof}
We consider the following expected policy gradient~\citep{foerster2018counterfactual} of \textsc{Pram}:
\[
    g = \mathbb{E}_\pi \left[ \sum_i A_i(s,a)\,\nabla_\theta \log \pi_\theta(a_i \mid s_i) \right], 
    \quad \text{where} \quad 
    A_i(s,a) = R(s,a) - \sum_{a_i'} \pi_\theta(a_i' \mid s_i) \, R\bigl(s,(a_{-i},a_i')\bigr).
\]
Therefore,
\[
    \begin{aligned}
        g = &- \underbrace{\sum_{s,a} p(s)\pi_\theta(a\mid s) \sum_i \Biggl( \sum_{a_i'} \pi_\theta(a_i' \mid s_i)\, R(s,(a_{-i},a_i')) \Biggr)\, \nabla_\theta \log \pi_\theta(a_i \mid s_i)}_{g_1} \\
        &+ \underbrace{\sum_{s,a} p(s)\pi_\theta(a\mid s) \sum_i R(s,a)\,\nabla_\theta \log \pi_\theta(a_i \mid s_i)}_{g_2}.
    \end{aligned}
\]

\paragraph{Vanishing of $g_1$.}  
For each agent $i$, the corresponding component $g_1^{(i)}$ can be written as
\[
    \begin{aligned}
        g_1^{(i)} &= \sum_{s} p(s) \sum_{a_{-i}} \pi_\theta(a_{-i} \mid s_{-i}) 
        \sum_{a_i} \pi_\theta(a_i \mid s_i) \Biggl( \sum_{a_i'} \pi_\theta(a_i' \mid s_i) R\bigl(s,(a_{-i},a_i')\bigr) \Biggr)\, \nabla_\theta \log \pi_\theta(a_i \mid s_i) \\
        &= \sum_{s,a_{-i}} p(s)\pi_\theta(a_{-i} \mid s_{-i}) B_i(s,a_{-i}) 
        \left( \sum_{a_i} \pi_\theta(a_i \mid s_i)\,\nabla_\theta \log \pi_\theta(a_i \mid s_i) \right),
    \end{aligned}
\]
where the inner term 
\(
    B_i(s,a_{-i}) := \sum_{a_i'} \pi_\theta(a_i' \mid s_i)\, R\bigl(s,(a_{-i},a_i')\bigr)
\)
is independent of $a_i$. Since
\[
    \sum_{a_i} \pi_\theta(a_i \mid s_i)\,\nabla_\theta \log \pi_\theta(a_i \mid s_i)
    = \sum_{a_i} \nabla_\theta \pi_\theta(a_i \mid s_i)
    = \nabla_\theta \Biggl( \sum_{a_i} \pi_\theta(a_i \mid s_i) \Biggr)
    = \nabla_\theta (1) = 0,
\]
it follows that $g_1^{(i)}=0$ for all $i$, and hence $g_1=\sum_i g_1^{(i)}=0$.

\paragraph{Analysis of $g_2$.}  
As discussed in \S~\ref{sec:theorypresentation}, under our MCF setting (decoupling of rewards and states), the long-term return satisfies $Q^\pi(s,a) = R(s,a)$. Thus, $g_2^{(i)}$ can be rewritten as
\(
    g_2^{(i)} = \mathbb{E}_\pi \bigl[ Q^\pi(s,a)\,\nabla_\theta \log \pi_\theta(a_i \mid s_i) \bigr],
\)
which coincides with the standard policy gradient formulation~\citep{williams1992simple}. Moreover, by Lemma~\ref{lemma:mdp_avg}, this admits the equivalent expression
\(
    g^{(i)}_2 = \sum_s d^\pi(s) \sum_a \frac{\partial \pi(s,a)}{\partial \theta}\, Q^\pi(s,a).
\)

Suppose $f_w$ is a function approximator of $Q^\pi$, parameterized by $w$ and locally optimal. Without loss of generality, assume
\(
    \frac{\partial f_w(s,a)}{\partial w} = \frac{1}{\pi(s,a)} \frac{\partial \pi(s,a)}{\partial \theta}.
\)
Under this construction, Lemma~\ref{lemma:mdp_app} implies
\(
    g^{(i)}_2 = \frac{\partial \bar{R}}{\partial \theta} = \sum_s d^\pi(s) \sum_a \frac{\partial \pi(s,a)}{\partial \theta}\, f_w(s,a).
\)

We may define the update direction
\(
    v_k = \sum_s d^{\pi_{(k)}}(s) \sum_a \frac{\partial \pi_{(k)}(s,a)}{\partial \theta}\, f_{w_{(k)}}(s,a),
\)
since the $\theta^{(k)}$-update follows the policy gradient direction. The boundedness of $\nabla^2_\theta R(s,a)$ ensures, via the mean value theorem, that $\nabla_\theta \bar{R}(s,a)$ is Lipschitz. Furthermore, $v_k$ satisfies the required conditions: $\|\nabla_{\theta^{(k)}} \bar{R}(s,a)\|^2 \leq - \nabla_{\theta^{(k)}} \bar{R}(s,a) \, v_k$ and $\|v_k\| \leq \|\nabla_{\theta^{(k)}} \bar{R}(s,a)\|$, 
since $v_k$ is an unbiased estimator of the policy gradient. Together with the step-size requirements and bounded function, these are precisely the conditions needed to invoke Lemma~\ref{lemma:dim_grad}, which guarantees convergence to a local optimum, i.e.,
\(
    \lim_{k \to \infty} g_2 
    = \lim_{k \to \infty} \sum_i \nabla_{\theta^{(k)}} \bar{R}(s,a) 
    = 0.
\)
    Finally, we have $\lim_{k \to \infty} g 
    = 0 + \lim_{k \to \infty} g_2 
    = 0$, which completes the proof.
\end{proof}

\subsection{Proof of Theorem~\ref{theorem:2}}
\label{subsec:t3_proof}

\begin{proof}    
Under Assumptions~\ref{asm:b} and~\ref{asm:c}, we first analyze the dynamics induced by gradient descent. Consider the reference linear model $y(x) = W x$ and a fine-tuning dataset consisting of input embeddings $x_i$ and corresponding decision embeddings $y_i$. 
The training objective is the squared-error loss
\(
    \mathcal{L}(W) = \frac{1}{2N} \sum_{i=1}^N \| W x_i - y_i \|^2 .
\)
One step of gradient descent with learning rate $\eta$ updates the weights as
\(
    \Delta W = -\eta \nabla_W \mathcal{L}(W) 
    = -\frac{\eta}{N} \sum_{i=1}^N (W x_i - y_i) x_i^\top .
\)
Substituting into the loss gives
\[
    \mathcal{L}(W + \Delta W) 
    = \frac{1}{2N} \sum_{i=1}^N \big\| (W + \Delta W) x_i - y_i \big\|^2
    = \frac{1}{2N} \sum_{i=1}^N \big\| W x_i - (y_i - \Delta y_i) \big\|^2 .
\]
Now consider the linear self-attention update on the token $e_j$,
\[
    e_j \;\leftarrow\; e_j + \textsc{Att}(\{e_1, \ldots, e_N\}) 
    = e_j + \sum_h P_h V_h K_h^\top q_{h,j}.
\]
The key idea is to construct $W_K, W_Q, W_V$ and the projection $F$ such that a single Transformer update on token $e_j$ replicates the gradient-descent dynamics:
\(
    e_j \;\leftarrow\; (x_j, y_j) + (0, -\Delta W x_j) 
    = (x_j, y_j - \Delta y_j).
\)

Specifically, consider the block-structured weight matrices for one attention head:
\[
    W_K = W_Q = 
    \begin{pmatrix}
        I_x & 0 \\
        0 & 0
    \end{pmatrix}, 
    \qquad
    W_V = 
    \begin{pmatrix}
        0 & 0 \\
        W_0 & -I_y
    \end{pmatrix},
\]
where $I_x$ and $I_y$ are identity matrices of size $N_x$ and $N_y$, and $W_0 \in \mathbb{R}^{N_y \times N_x}$ is the reference model. 
We also set $P = \tfrac{\eta}{N} I$, where $I$ is the identity matrix of dimension $N_x + N_y$. 
With this construction~\citep{ahn2023transformers},
\[
    \textsc{Att}(\{e_1, \ldots, e_N\}) 
    = \sum_h P_h \sum_i v_{h,i} \otimes k_{h,i} q_{h,j} 
    = \sum_h P_h W_{h,V} \sum_i e_{h,i} \otimes e_{h,i} W_{h,K}^\top W_{h,Q} e_j ,
\]
and we obtain the following update dynamics:
\[
\begin{aligned}
    \begin{pmatrix} x_j \\ y_j \end{pmatrix}
    &\;\leftarrow\;
    \begin{pmatrix} x_j \\ y_j \end{pmatrix} 
    + \frac{\eta}{N} I 
      \sum_{i=1}^N 
      \left(
      \begin{pmatrix} 0 & 0 \\ W_0 & -I_y \end{pmatrix}
      \begin{pmatrix} x_i \\ y_i \end{pmatrix}
      \right)
      \otimes
      \left(
      \begin{pmatrix} I_x & 0 \\ 0 & 0 \end{pmatrix}
      \begin{pmatrix} x_i \\ y_i \end{pmatrix}
      \right)
      \begin{pmatrix} I_x & 0 \\ 0 & 0 \end{pmatrix}
      \begin{pmatrix} x_j \\ y_j \end{pmatrix} \\[1ex]
    &= \begin{pmatrix} x_j \\ y_j \end{pmatrix} 
       + \frac{\eta}{N} I 
         \sum_{i=1}^N 
         \left(
         \begin{pmatrix} 0 \\ W_0 x_i - y_i \end{pmatrix}
         \otimes
         \begin{pmatrix} x_i \\ 0 \end{pmatrix}
         \begin{pmatrix} x_j \\ 0 \end{pmatrix}
         \right) \\
    &= \begin{pmatrix} x_j \\ y_j \end{pmatrix} 
       + \begin{pmatrix} 0 \\ -\Delta W x_j \end{pmatrix}.
\end{aligned}
\]
    Thus, each token $e_j = (x_j, y_j)$, including the query token $e_{N+1} = e_{\mathrm{test}} = (x_{\mathrm{test}}, -W_0 x_{\mathrm{test}})$, follows exactly the gradient-descent update. By stacking multiple attention blocks and heads, we can therefore simulate multiple steps of gradient descent, completing the proof.
\end{proof}

\newpage 

\section{Additional Empirical Results}
\label{app:addexp}

In this section, we present supplementary experiments to further explore the effectiveness of \textsc{Pram}. We first analysis the impact of multimodal language model's layers and backbones. Then, we investigate the performance of \textsc{Pram} with different synthetic data distribution.

\subsection{Impact of MLM's Backbone Model}

\begin{wrapfigure}[20]{r}{0.6\textwidth}
  \vspace{-0.75cm}
  \begin{center}
  \includegraphics[width=1.\linewidth]{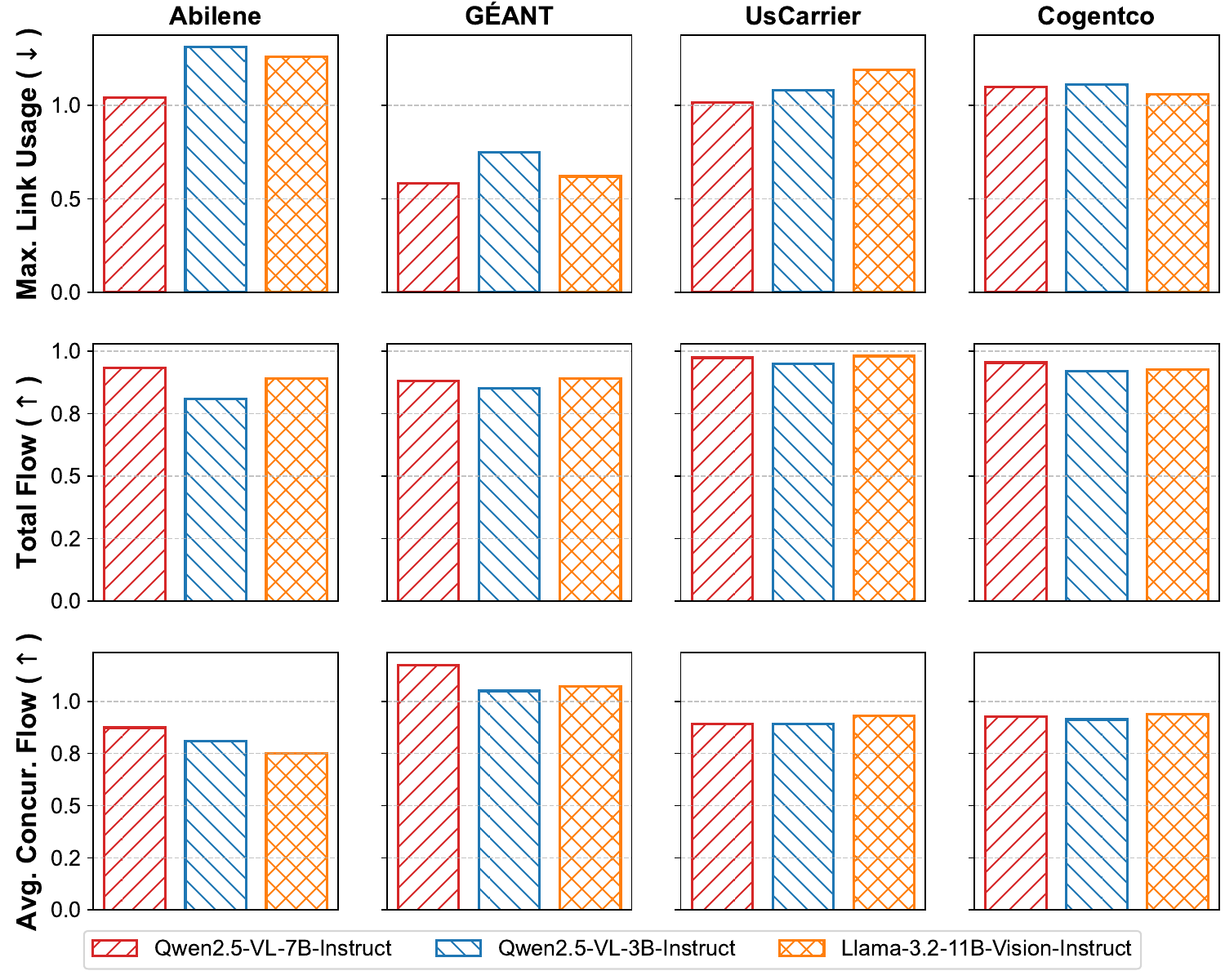}
\caption{(normalized) Performance comparison with different MLMs' backbone models.}
\label{fig:mlms}
\end{center}
\end{wrapfigure}

During our main experiments, we kept the LM backbone fixed to simplify presentation and ensure fair comparison. In this subsection, however, besides~\inlinelogo{logos/qwen.png} Qwen2.5-VL-7B-Instruct, we further report \textsc{Pram}’s performance with alternative pretrained backbones of different scales, including the larger~\inlinelogo{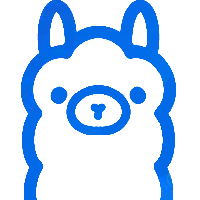} Llama-3.2-11B-Vision-Instruct~\citep{meta2024llama3}, and the smaller~\inlinelogo{logos/qwen.png} Qwen2.5-VL-3B-Instruct~\citep{bai2025qwen2}. As described in our experimental setup, all MLMs are truncated to 8 layers. Notably, there exists a substantial architectural difference between Llama and Qwen: while Qwen encodes visual features as part of the input “prompt,” Llama integrates them across multiple layers via cross-attention. To maintain comparability, we truncate only those layers of Llama-3.2-11B-Vision-Instruct’s language model that do not involve cross-attention.

The results, presented in Figure~\ref{fig:mlms}, show that most MLMs perform well on the target MCF problems. Qwen2.5-VL-7B-Instruct consistently performs no worse than Qwen2.5-VL-3B-Instruct, which aligns with our expectations, as more parameters often lead to stronger understanding and reasoning capabilities. But interestingly, despite having fewer parameters, Qwen2.5-VL-7B-Instruct demonstrates stronger planning ability than the larger Llama-3.2-11B-Vision-Instruct. We attribute this to Qwen’s training strategy, which emphasizes numerical reasoning and step-by-step problem solving, whereas Llama, although powerful in language generation, is less specialized in mathematical domains. This suggests that model scale alone does not guarantee superior performance on the MCF problem. Therefore, we choose Qwen2.5-VL-7B-Instruct as the default backbone for \textsc{Pram}.

\subsection{Impact of Number of MLM's Layers}

\begin{figure}
    \setlength{\belowcaptionskip}{-0.4cm}
    \centering
    \subfigure[Abilene]{
        \centering
        \includegraphics[width=0.48\linewidth]{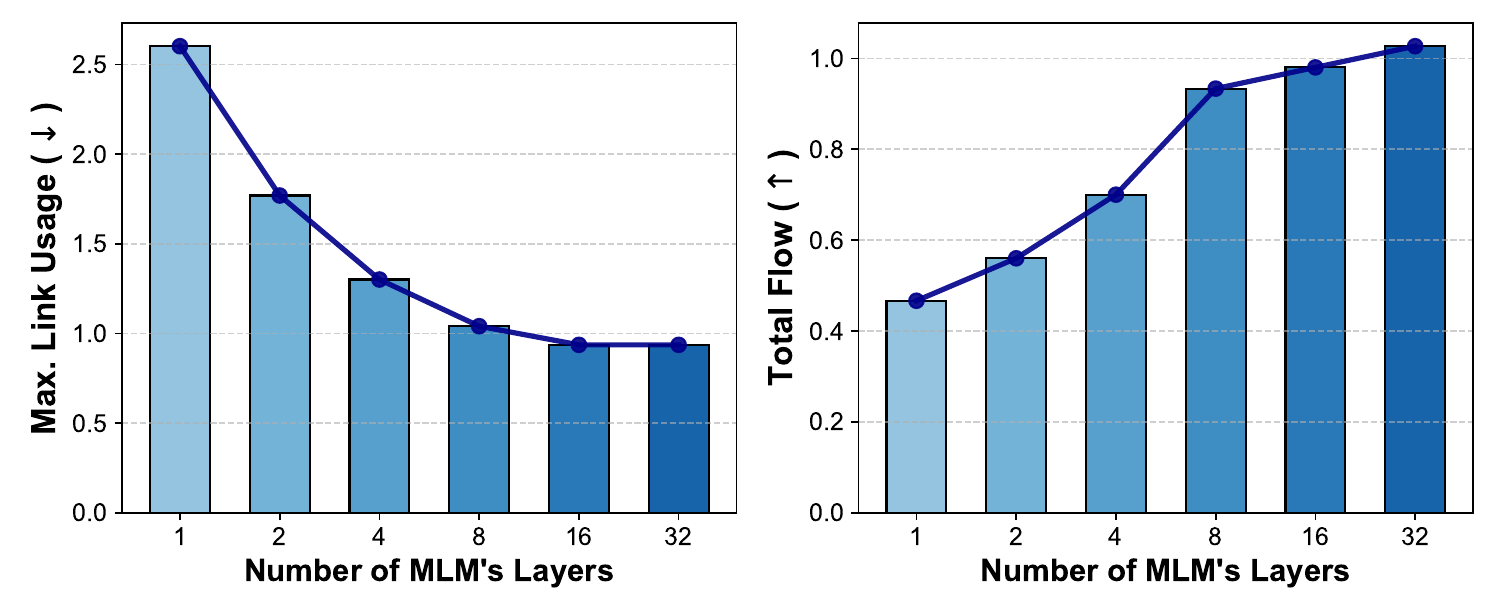}
    }
    \hfill
    \subfigure[G\'{E}ANT]{
        \centering
        \includegraphics[width=0.48\linewidth]{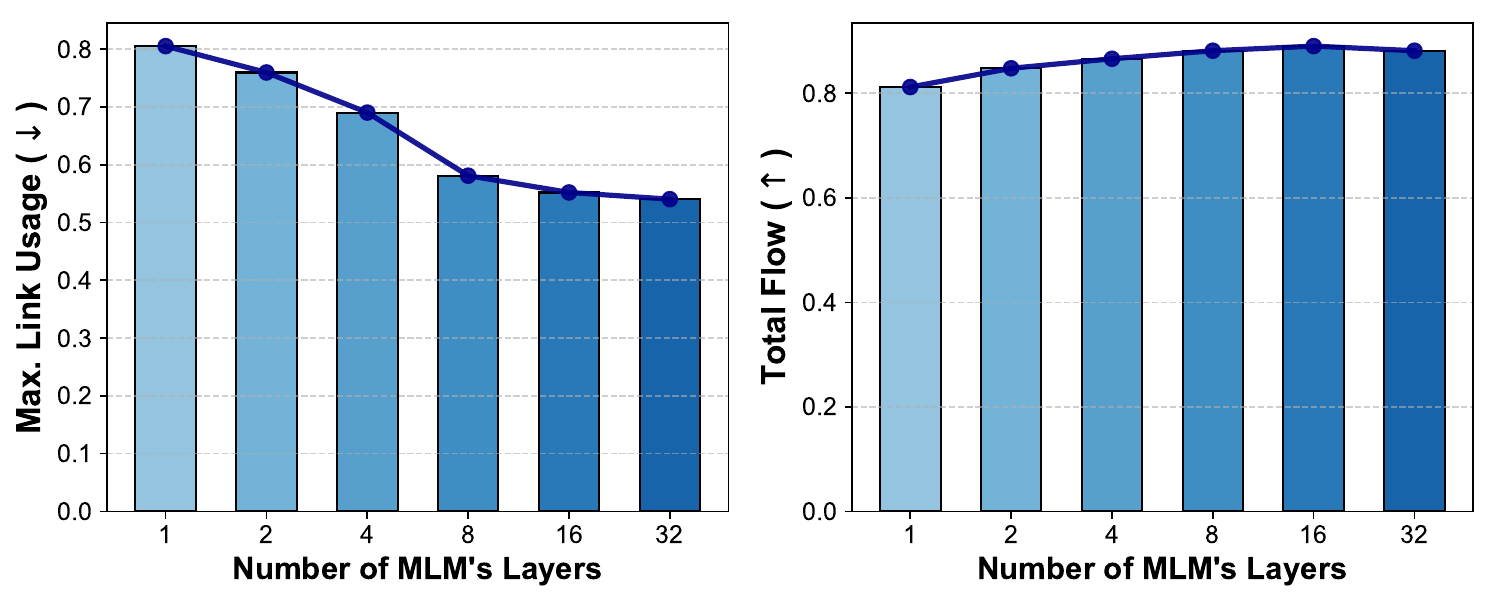}
    }
    \caption{(normalized) Performance comparison with different number of layers in the MLM.}
    \label{fig:layers}
\end{figure}

In Figure~\ref{fig:layers}, we show how \textsc{Pram}’s link utilization and throughput (total flow) change with the number of Transformer layers across different datasets. The results clearly indicate that performance improves as the number of layers increases, with the effect being particularly pronounced on the Abilene dataset. This finding is consistent with the sharp deterioration observed in our main experiments when the backbone is entirely removed (i.e., \# of Transformer layers = 0). We attribute this to the heterogeneous allocation patterns in Abilene, which demand stronger representation capacity (i.e., deeper models) to capture effectively. Nonetheless, the performance gain diminishes beyond 8 layers. To strike a balance between fine-tuning efficiency and performance, we adopt 8 layers as the default setting in the main experiments.

\subsection{Baseline Results w.r.t. Prediction Algorithms}
\label{subsec:preds}

\begin{figure}
    \setlength{\abovecaptionskip}{-0.05cm}
    \centering
    \subfigure[Abilene]{
        \centering
        \includegraphics[width=0.48\linewidth]{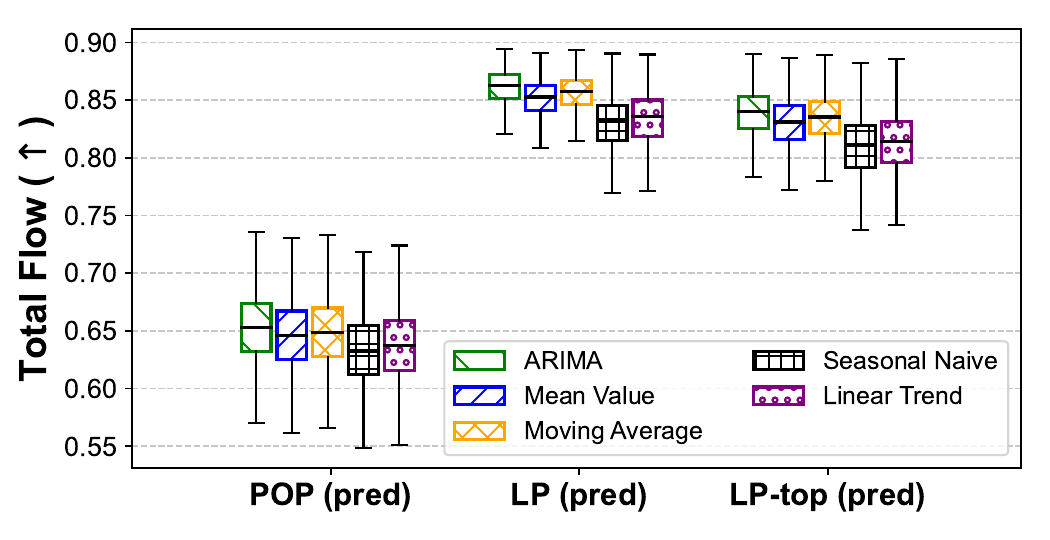}
    }
    \hfill
    \subfigure[G\'{E}ANT]{
        \centering
        \includegraphics[width=0.48\linewidth]{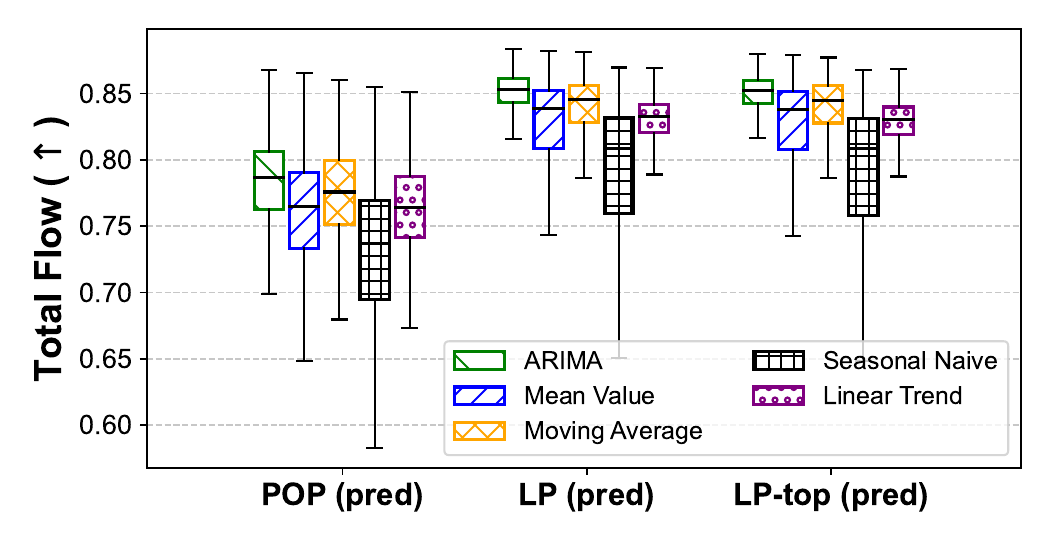}
    }
    \caption{(normalized) Performance comparison with different prediction methods.}
    \label{fig:preds}
\end{figure}

\begin{wrapfigure}[16]{r}{0.55\textwidth}
  \vspace{-0.6cm}
  \begin{center}
  \includegraphics[width=1.\linewidth]{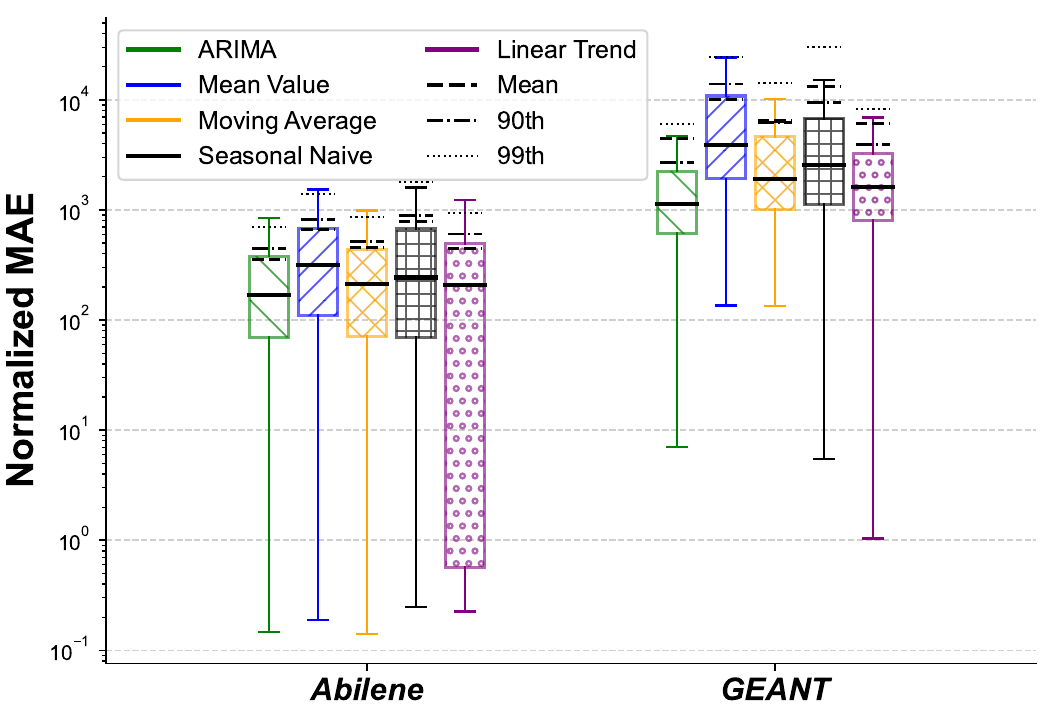}
\caption{Accuracy of different prediction methods.}
\label{fig:nmaes}
\end{center}
\end{wrapfigure}

Our evaluation in \S~\ref{sec:evaluation} focused on demand-prediction-based schemes using a simple moving average. We now compare the moving average against other prediction algorithms on real-world datasets (Abilene and G'{E}ANT). We restrict our comparison to non-parametric methods, since parametric methods such as regression incur higher computational cost and suffer from significant errors due to denormalization of DNN outputs. Specifically, we consider the following prediction methods: \textcircled{\scriptsize 1} Mean Value: predicts the average of the entire history; \textcircled{\scriptsize 2} Seasonal Naive: repeats the last observed season into the future; \textcircled{\scriptsize 3} Linear Trend: performs linear extrapolation based on the last two observations; and \textcircled{\scriptsize 4} AutoRegressive Integrated Moving Average (ARIMA,~\cite{shumway2017arima}): a widely used statistical model for time series forecasting. Each method predicts the next demand for every source-destination pair using only the 12 most recent observed demands of that pair.

Figure~\ref{fig:nmaes} presents the accuracy of the different predictors, measured by the normalized mean absolute error (NMAE), i.e., $\frac{1}{K} \sum_i^K \frac{|x_i-\hat x_i|}{x_i}$. As shown, ARIMA and the moving average achieve the lowest errors on average across both datasets, although ARIMA requires more time to collect sufficient information. In contrast, the linear trend exhibits high instability, and the remaining predictors are clearly less accurate. Figure~\ref{fig:preds} further illustrates the impact of different predictors on multi-commodity flow (MCF) allocation, quantified by total flow relative to an optimal LP solver. The trends mirror the accuracy results: ARIMA performs best, followed by the moving average, while linear trend underperforms compared to mean value despite its seemingly aggressive predictions. Overall, considering all evaluation metrics and computational cost, we adopt the moving average as the default predictor for our baseline methods.

\begin{figure}
    \centering
    \subfigure[{Gravity Model}\label{subfig:grav}]{
        \centering
        \includegraphics[width=0.95\linewidth]{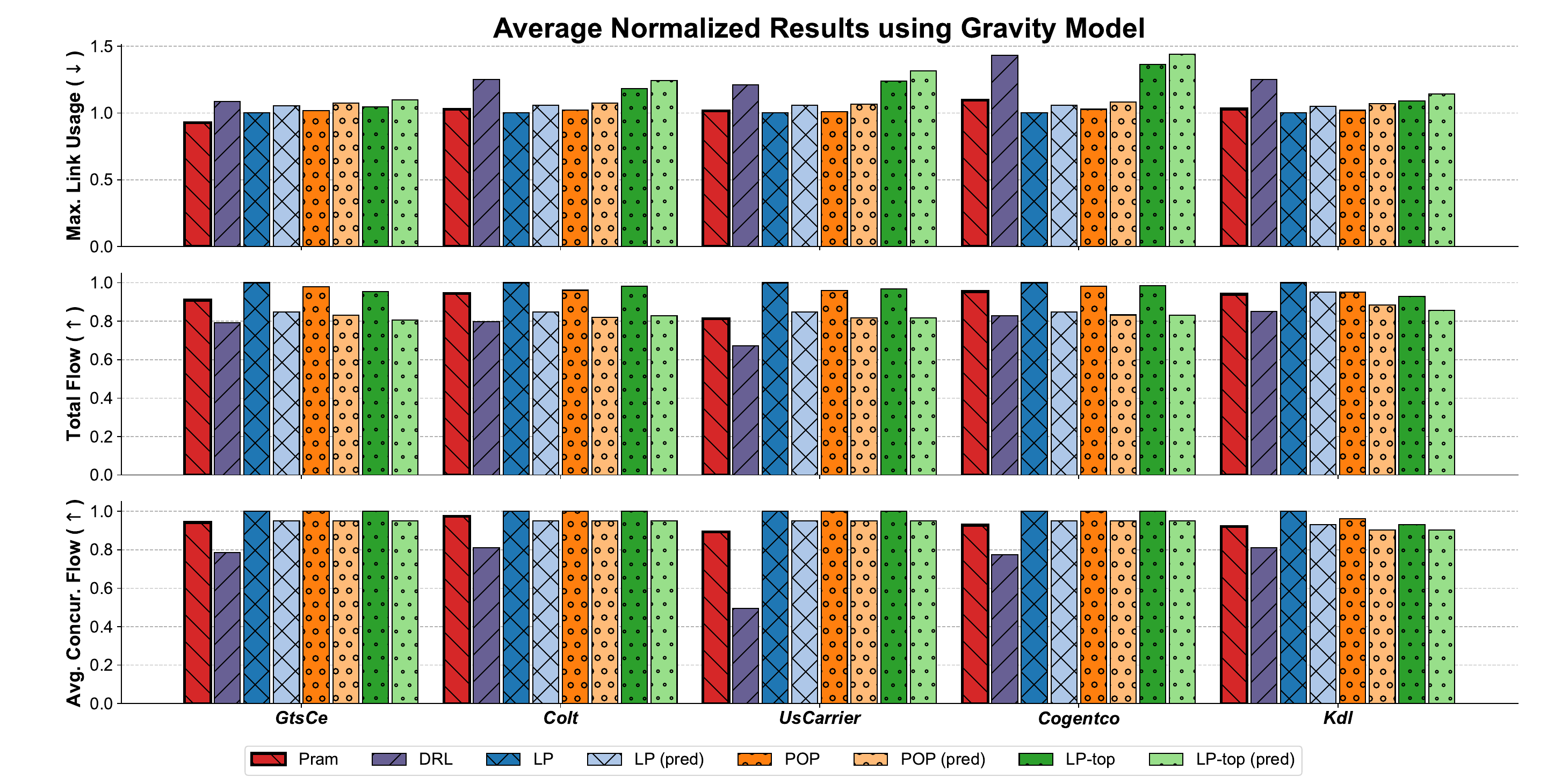}
    }
    \\
    \vspace{-0.15cm}
    \subfigure[{Poisson Model}\label{subfig:pois}]{
        \centering
        \includegraphics[width=0.95\linewidth]{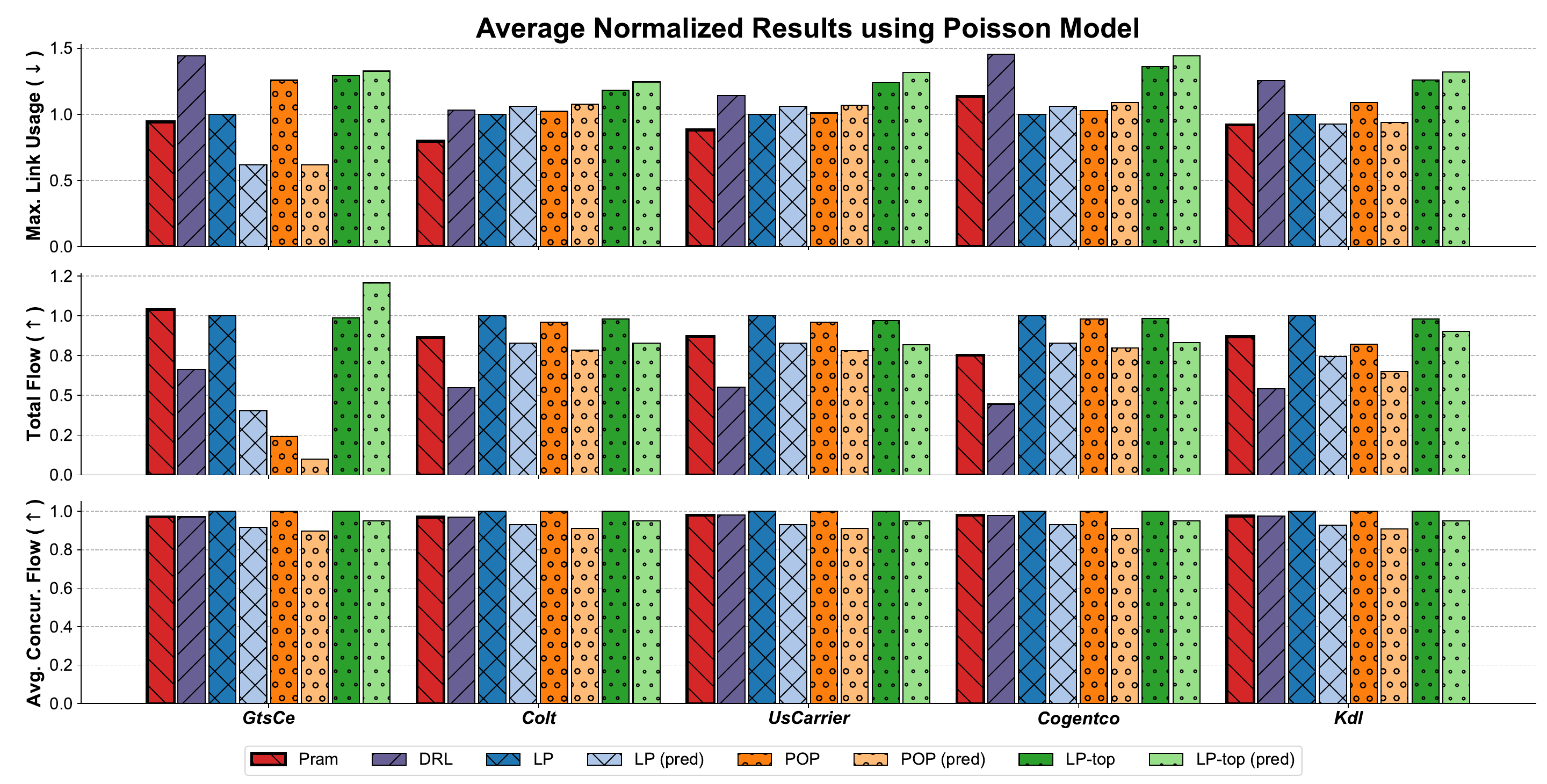}
    }
    \\
    \vspace{-0.15cm}
    \subfigure[{Bimodal Model}\label{subfig:bimo}]{
        \centering
        \includegraphics[width=0.95\linewidth]{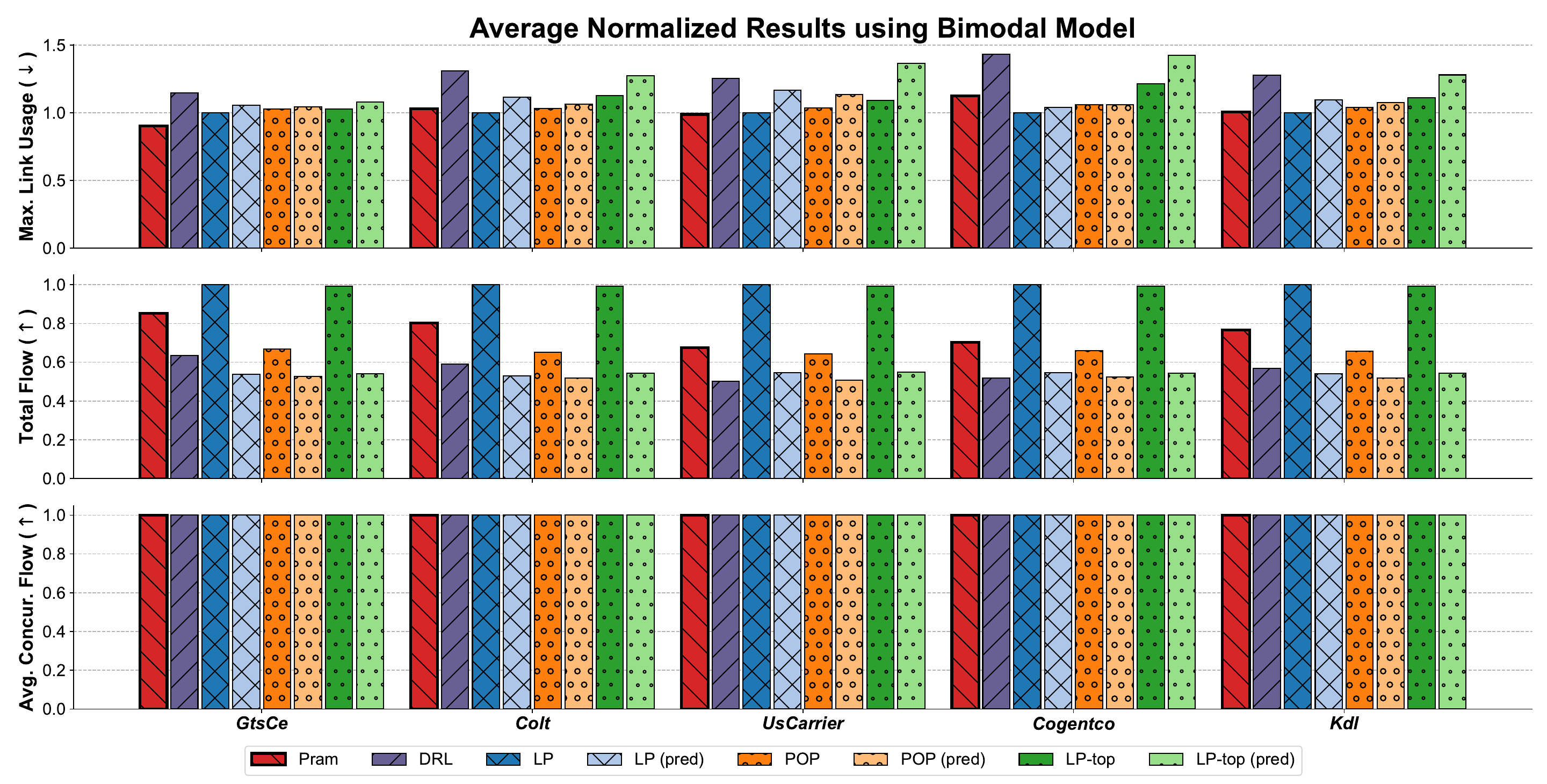}
    }
    \caption{{Performance comparison on large-scale topologies with different data distribution.}}
    \label{fig:sys}
\end{figure}

\subsection{Results w.r.t. Synthetic Data Distributions}

In this subsection, we provide a supplementary study to our evaluation in \S~\ref{subsec:lsm}, where we synthesize demand matrices for large-scale topologies using a broader set of distributions, including the Gravity, Poisson, and Bimodal models (see Appendix~\ref{subsec:dist} for details). Our results are in Figure~\ref{fig:sys}. Note that the results of the gravity model in Figure~\ref{subfig:grav} have already been reported in the main text, we refrain from elaborating on it here as a benchmark. Since the std is very small for different samples (see Figure~\ref{subfig:abd} (right)), we omit the drawing of error bars here.

First, consider the Poisson model in Figure~\ref{subfig:pois}, where we observe pronounced performance fluctuations across methods as the topology changes. Particularly on the GtsCe topology, POP exhibits poor maximum flow preservation. Somewhat counterintuitively, almost all LP-based methods perform better in link utilization prediction than the exact method. We attribute this to the randomness of the Poisson distribution, which selects aggregation points independently at each step, leading to increasingly chaotic commodity flows. Among the learning-based methods, DRL consistently underperforms, while \textsc{Pram} demonstrates robustness with only negligible fluctuations.

\begin{figure}
    \centering
    \includegraphics[width=1.\linewidth]{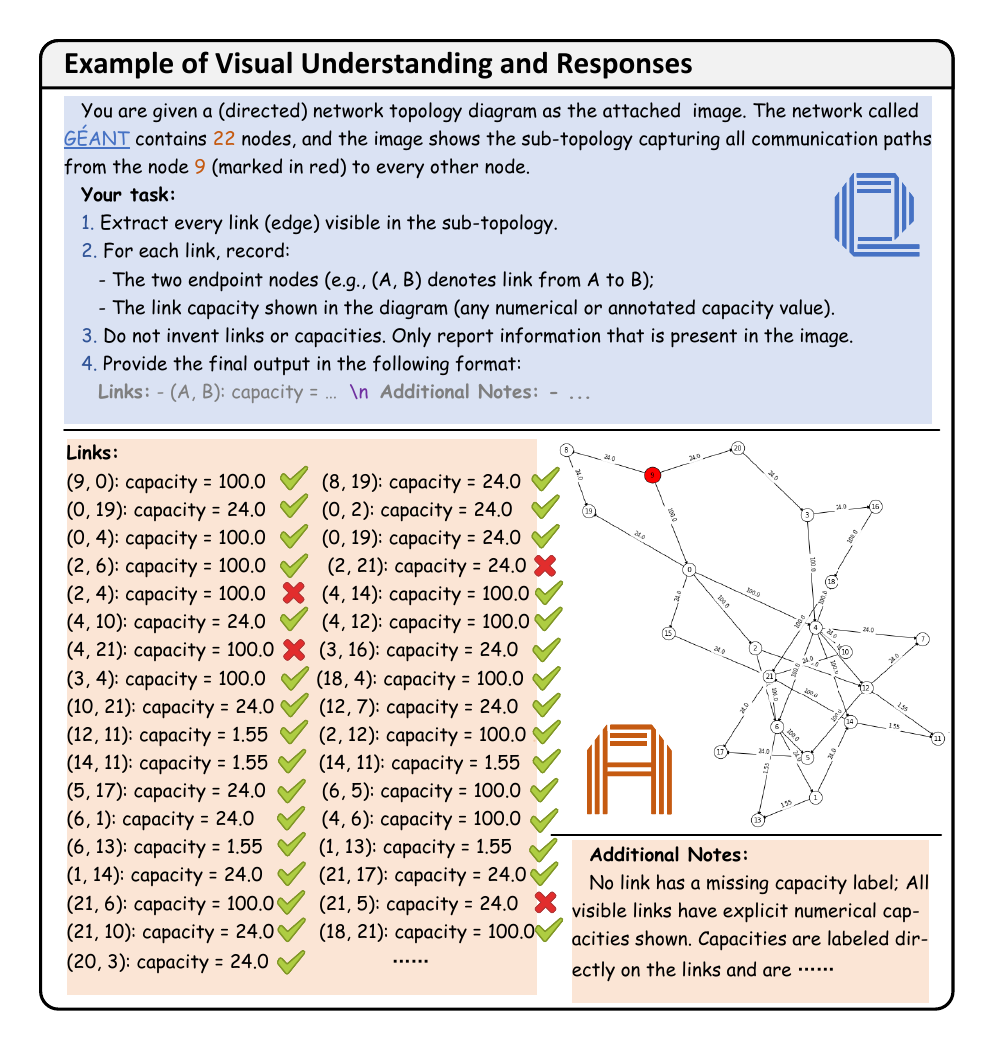}
    \vspace{-0.75cm}
    \captionof{table}{{Pretrained MLM's visual comprehension example on sub-topology with overlapping elements. The task involves extracting every visible link (edge) within the specified sub-topology and recording the two endpoint nodes $(A, B)$ and the associated numerical link capacity. The (\textcolor{green}{$\checkmark$}) and (\textcolor{red}{$\times$}) indicate the model's success or failure in identifying the corresponding link and its capacity value, demonstrating the visual comprehension level of technical diagrams.}}
    \label{tab:exp1}
    \vspace{-0.1cm}
\end{figure}

Next, as shown in Figure~\ref{subfig:bimo}, our first observation under the Bimodal model is that all methods achieve near-optimal performance in terms of concurrent flow. This is expected, since the synthetic traffic is highly sparse: fewer than 10\% of the flows are relatively large, while the rest are close to zero, making average parallel flow optimization straightforward. However, prediction errors become more pronounced due to the increased magnitude of the dominant flows. While \textsc{Pram} is largely resilient to such errors, it performs slightly sub-optimally on the maximum flow objective, ranking just behind LP and LP-top.

To sum up, across diverse demand distributions, \textsc{Pram} consistently demonstrates robustness and adaptability. Unlike DRL, which struggles with instability, or LP-based methods, whose performance is sensitive to distributional assumptions, \textsc{Pram} maintains stable results across objectives and topologies. This suggests that \textsc{Pram} not only generalizes beyond the traffic models used for training but also avoids overfitting to specific flow structures, thereby providing a more reliable solution in practical large-scale settings.

\subsection{{Examples of MLM's Visual Understanding}}
\label{app:evu}

{To ensure the used off-the-shelf MLM’s visual understanding capabilities, we conduct some specially designed question-answering small experiments with \textit{Qwen} in the subsection. Our pipeline is very simple and consists mainly of two stages: a) To simulate the iterative process during fine-tuning, meaning the MLM should understand the overall topology information, we first input all subgraphs to the model once; b) We then perform the formal question-answering (QA) experiment, instructing it to provide as much of the observed image information as possible. }

{As illustrated in Table~\ref{tab:exp1}, we start by a challenging test case for evaluating the ability of MLMs to accurately parse and extract structured information in the presence of visual occlusion. The specified G\'{E}ANT sub-topology (communication paths from node 9, marked in red) contains several indistinguishable elements resulting from the intersection and overlap of edges and nodes (e.g., $2 \rightarrow 12$ and $2 \rightarrow 6$). For view and report links, MLMs can achieve good performance with an accuracy of nearly 90\%. A important finding here is that the majority of erroneous answers are not the occluded ones. This can be attributed to a) the global information previously presented to the multimodal model, which, to some extent, provides the MLM with overlapping structural information from multiple perspectives. Next, Table~\ref{tab:exp2} presents results on two topologies of different sizes: Abilene and UsCarrier. Note that since the MLM refused to answer based on observations from the ultra-large topology, we segmented the corresponding diagram into multiple smaller images for recognition. The accuracy on Abilene is almost zero error, and the results on UsCarrier did not degrade significantly due to node density. Finally, we conduct QA on G\'{E}ANT with different image quality. In Table~\ref{tab:exp3}, image resolutions of $512 \times 512$ and $128 \times 128$ are used and compared. The performance is reduced by $\sim$10\%; however, we consider it reasonable to appropriately compress the resolution to accelerate model inference.}

{It is noteworthy that the MLM achieves the above performance without fine-tuning, showcasing its powerful and general vision understanding capabilities nowadays. In real-world scenarios, such as cloud environments or data centers, we anticipate having sufficient time and resources to adapt the MLM (via our lightweight method), allowing it to further correct and refine its information extraction from these network sub-images.}

\begin{figure}
    \centering
    \includegraphics[width=1.\linewidth]{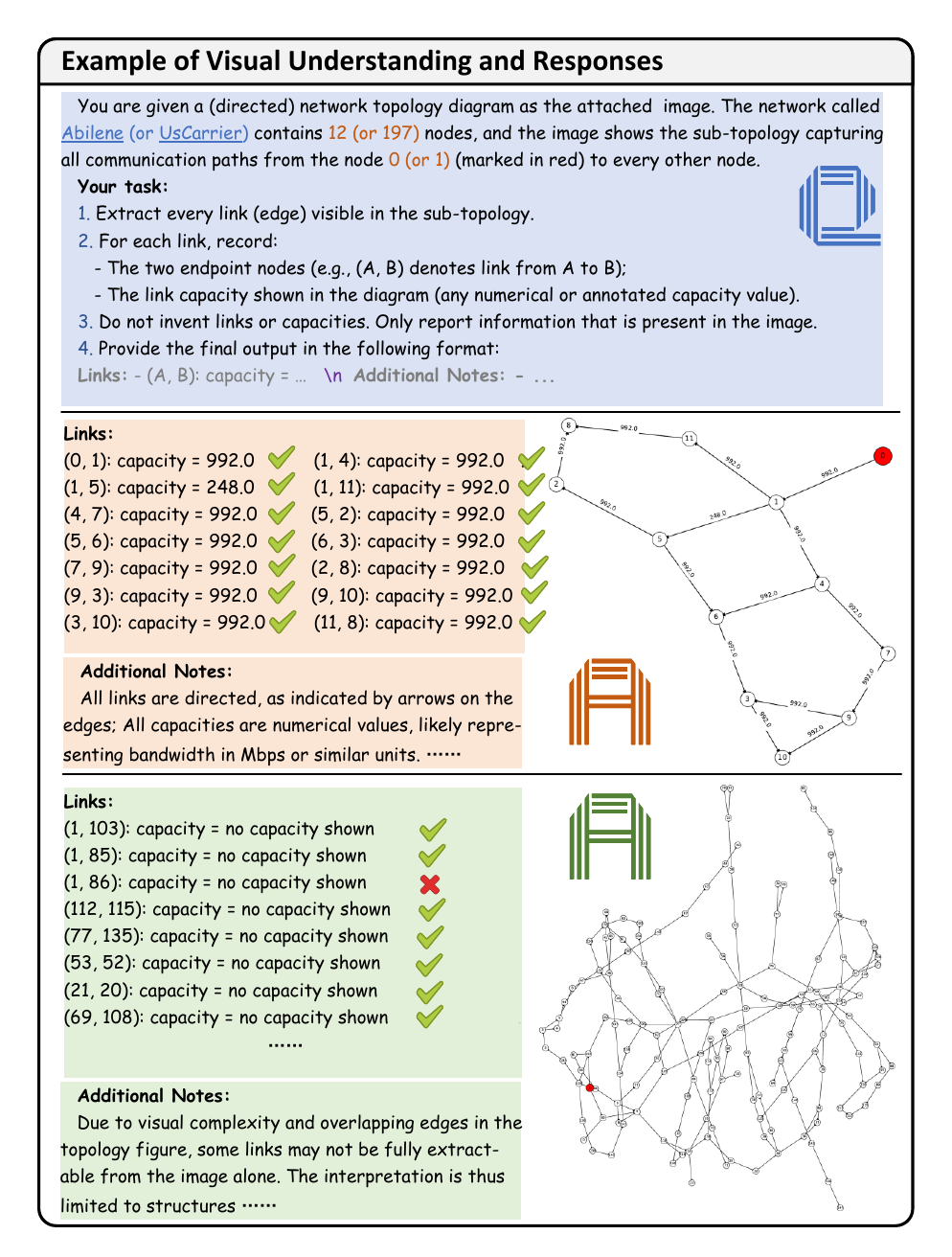}
    \vspace{-0.5cm}
    \captionof{table}{{Pretrained MLM's visual comprehension example on sub-topology with different scale (density). The task involves extracting every visible link (edge) within the specified sub-topology and recording the two endpoint nodes $(A, B)$ and the associated numerical link capacity. The (\textcolor{green}{$\checkmark$}) and (\textcolor{red}{$\times$}) indicate the model's success or failure in identifying the corresponding link and its capacity value, demonstrating the visual comprehension level of technical diagrams.}}
    \label{tab:exp2}
    \vspace{-0.1cm}
\end{figure}

\begin{figure}
    \centering
    \includegraphics[width=1.\linewidth]{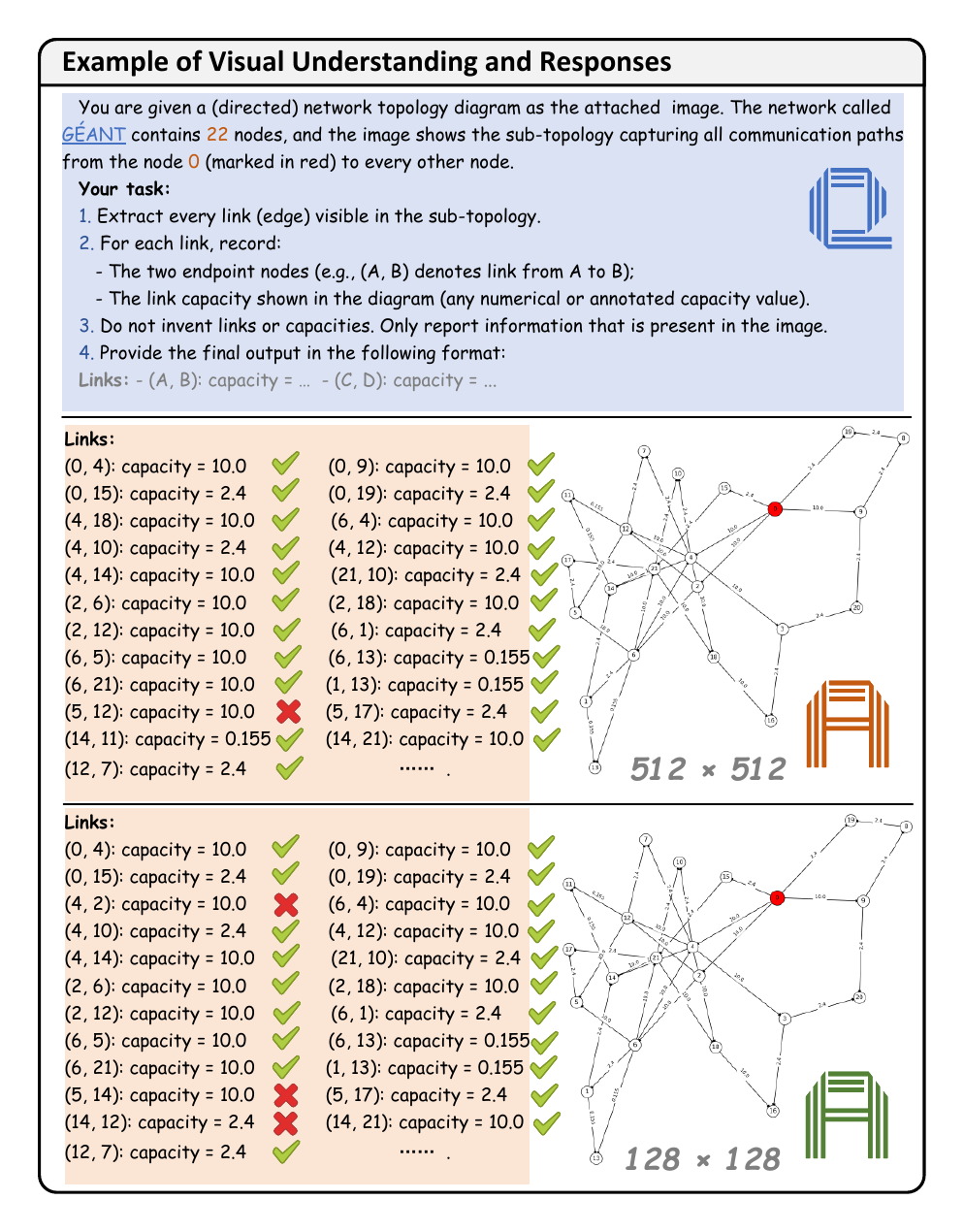}
    \vspace{-0.5cm}
    \captionof{table}{{Pretrained MLM's visual comprehension example on sub-topology with different resolution. The task involves extracting every visible link (edge) within the specified sub-topology and recording the two endpoint nodes $(A, B)$ and the associated numerical link capacity. The (\textcolor{green}{$\checkmark$}) and (\textcolor{red}{$\times$}) indicate the model's success or failure in identifying the corresponding link and its capacity value, demonstrating the visual comprehension level of technical diagrams.}}
    \label{tab:exp3}
    \vspace{-0.1cm}
\end{figure}

\subsection{{Results w.r.t. Subgraph Processing Schemes}}
\label{app:sps}

{In this part, we conduct an investigation into various methodologies for representing subgraph as content that can be understood by MLMs. Beyond the Vision-based scheme used by \textsc{Pram}, we also consider two variants, namely the Text-based scheme and the GNN-based scheme. Following the best practice of \cite{fatemi2024talk}, Text-based scheme encodes graphs as text in an \textit{Incident} style. For example, we can add ``\texttt{Here is a graph among nodes 0$\sim$8. In this graph: Node 0 is connected to nodes 1, 2. Node 1 is connected to nodes 0, 2. Node 2 is connected to nodes 0, 1, 3, 4, 5, 7. $\dots$ Node 8 is connected to nodes 3, 7.}'' into the original prompt for  encoding any subgraphs. Regrading GNN-based scheme, although this contradicts our intention to avoid manual design, we separately design a encoder, which is 3-layer Graph Transformer architecture using PyTorch Geometric library, for processing graph-structured inputs, where the node features consist of each node’s degree and the sum of the capacities of its adjacent links. Analogous to the aforementioned Qwen architecture, we concatenate the GNN output features with the textual embeddings to form the input for the backbone model.}

\begin{figure}
    \centering
    \subfigure[{Abilene}]{
        \centering
        \includegraphics[width=0.48\linewidth]{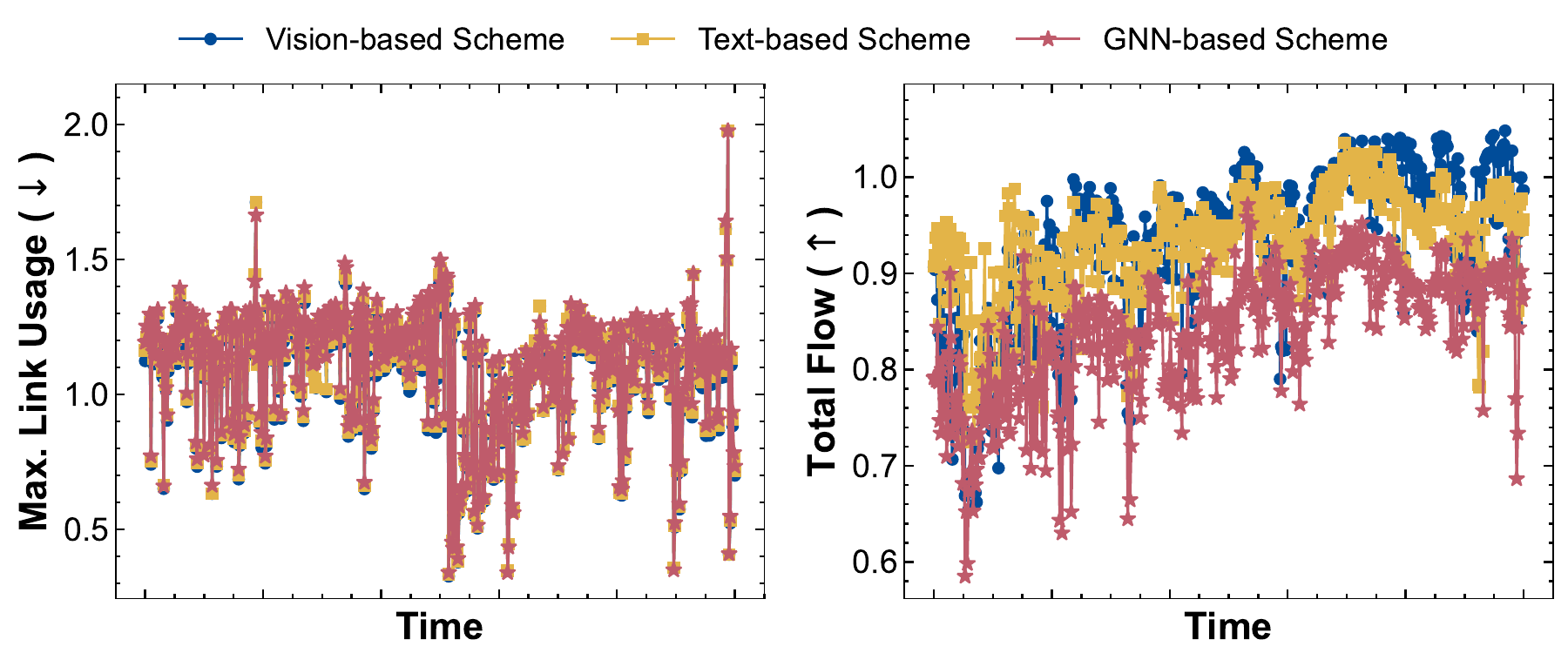}
    }
    \hfill
    \subfigure[{G\'{E}ANT}]{
        \centering
        \includegraphics[width=0.48\linewidth]{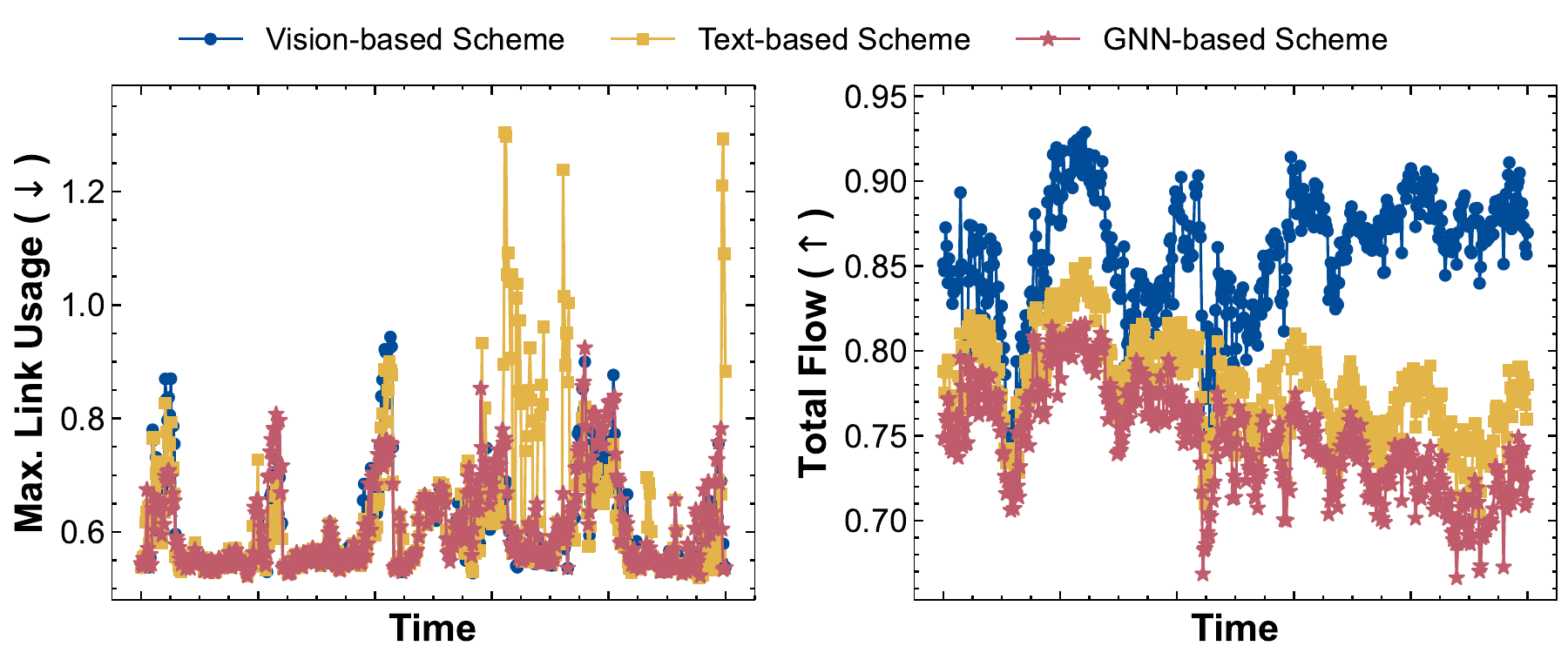}
    }
    \caption{{(normalized) Performance comparison with different subgraph processing scheme.}}
    \label{fig:scheme}
\end{figure}

\begin{figure}
    \centering
    \includegraphics[width=1.\linewidth]{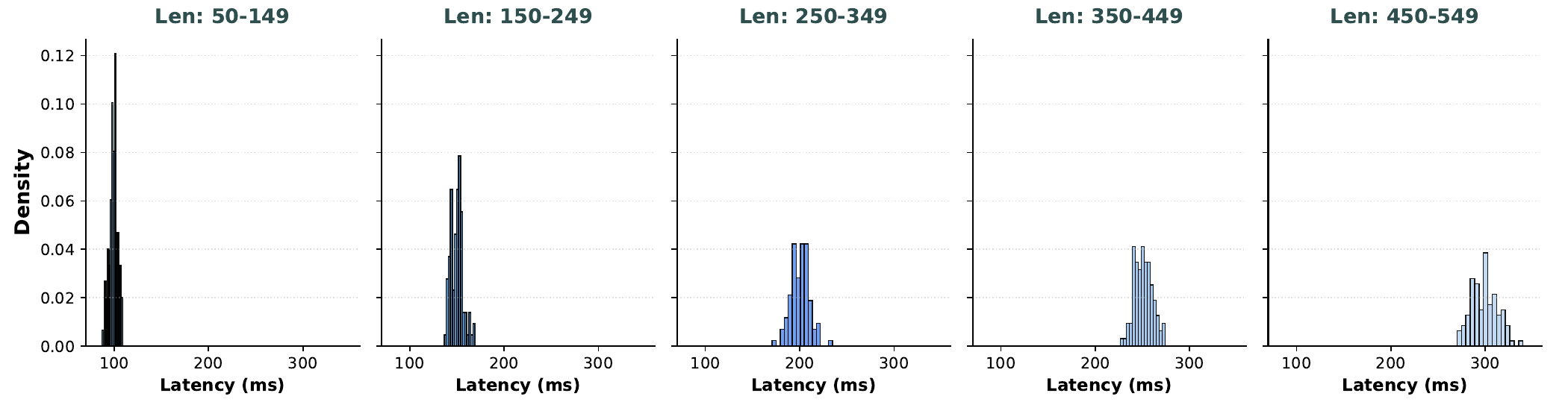}
    \caption{{Inference latency distribution vs. input token length of language models.}}
    \label{fig:latency}
\end{figure}

{As depicted in Figure~\ref{fig:scheme}, all the adapted MLMs outperform the learning-based baselines presented in the main text. This suggests that the reasoning capabilities of MLMs indeed contribute to solving the multi-commodity flow problem. However, it can be seen that there is a clear ranking among the three schemes for maximizing total flow on G\'{E}ANT, where the Vision-based scheme consistently yield improvement over other two schemes. Although the performance of textual encoding exhibits high variance, it ranks second overall, surpassing the average results of GNN encoding, due to better understanding of the modality. It is also noteworthy that as the topology scales, the increasing complexity of the description leads to a rapid growth in prompt length. To investigate the inference overhead of Text-based scheme associated with this length, we specifically conducted simulation experiments. The sampled latency results are presented in Figure~\ref{fig:latency}. We observe a pronounced linear increase in inference latency with respect to the input token length. This implies that, in practice, on larger and more complex topologies, Text-based schemes are likely to suffer a precipitous drop in efficiency owing to their significantly inferior compression ratio compared to visual approaches.}

\FloatBarrier
\clearpage

\section{Limitations \& Future Works}
\label{sec:limit}

Several limitations of \textsc{Pram} warrant consideration. First, its effectiveness relies heavily on the underlying pretrained MLMs, which were originally developed for natural language and image processing rather than MCF problem solving. When these pretrained models face resource constraints, as discussed in \S~\ref{sec:conclusion}, their ability to capture global patterns may be impaired. Second, \textsc{Pram} currently adopts a non-autoregressive generation paradigm for solving general MCF problems. While recent studies have shown promising results with autoregressive formulations for temporal sequences~\citep{liu2024autotimes}, incorporating such techniques into \textsc{Pram} represents a valuable avenue for future exploration. Third, although frozen MLMs are employed to improve computational efficiency, the training process remains time-consuming, which may limit applicability in time-sensitive settings. In addition, processing identical sub-images can still incur significant memory overhead.

Despite these challenges, \textsc{Pram} introduces an innovative foundation model architecture specifically tailored for MCF problem solving, delivering gains in both efficiency and solution quality. This work lays the groundwork for future advances in scalability and resource optimization. Moving forward, we plan to explore more sophisticated fine-tuning strategies to further enhance performance while reducing computational demands, drawing on recent progress in parameter-efficient adaptation as well as memory storage and retrieval mechanisms.

\section{Usage of LLMs} We acknowledge the use of LLMs as a general-purpose assist tool during the preparation of this paper. The LLM was used to help with text refinement (e.g., improving clarity, grammar, and style in \S~\ref{sec:methodology} and \S~\ref{sec:theorypresentation}) and with programming assistance (e.g., generating plotting scripts and formatting code for \S~\ref{sec:evaluation}). All ideas, research methodology, analyses, and conclusions are entirely those of the authors.

\end{document}